\documentclass[letterpaper]{article}
\usepackage{aaai2027arxiv}
\usepackage[hyphens]{url}
\usepackage{graphicx}
\usepackage{natbib}
\usepackage{caption}
\nocopyright

\usepackage{amsmath}
\usepackage{amssymb}
\usepackage{amsthm}
\usepackage{mathtools}
\usepackage{bm}
\usepackage{algorithm}
\usepackage{algorithmic}
\usepackage{booktabs}
\usepackage{array}
\usepackage{float}
\usepackage[many]{tcolorbox}
\usepackage{colortbl}
\usepackage{enumitem}
\allowdisplaybreaks

\usepackage{titlesec}
\usepackage{titletoc}
\usepackage{fancyhdr}

\usepackage[colorlinks,linkcolor=black,citecolor=black,urlcolor=black,bookmarksnumbered,unicode]{hyperref}
\hypersetup{pdftitle={One Knob to Rule Them All: A Unified Optimal Transport View of Cold-Start Active Learning}}

\definecolor{suppnavy}{HTML}{244B66}
\definecolor{suppteal}{HTML}{26766F}
\definecolor{supplight}{HTML}{F2F6F8}
\definecolor{suppmid}{HTML}{DCE8ED}
\definecolor{supprule}{HTML}{9CB5C1}
\definecolor{suppgray}{HTML}{5E6870}
\definecolor{thmblue}{HTML}{244B66}
\definecolor{tabhead}{HTML}{E8F0F3}
\definecolor{tabours}{HTML}{D5E7EB}
\definecolor{accentblue}{HTML}{244B66}
\definecolor{gainpos}{HTML}{3B8C78}
\definecolor{gainposdk}{HTML}{1D5B4B}
\definecolor{gainneg}{HTML}{C76057}
\definecolor{gainnegdk}{HTML}{7A2924}
\definecolor{bestcell}{HTML}{F6EBC9}

\titleformat{\section}{\Large\bf\centering}{\thesection}{0.7em}{}
\titlespacing*{\section}{0pt}{2.0ex plus 0.5ex minus .2ex}{3pt plus 2pt minus 1pt}
\titleformat{\subsection}{\large\bf\raggedright}{\thesubsection}{0.7em}{}
\titlespacing*{\subsection}{0pt}{2.0ex plus 0.5ex minus .2ex}{3pt plus 2pt minus 1pt}
\titleformat{\subsubsection}[runin]{\normalsize\bf}{}{0pt}{}
\titlespacing*{\subsubsection}{0pt}{6pt plus 2pt minus 1pt}{1em}
\titleformat{\subparagraph}[runin]{\normalsize\bf}{}{0pt}{}
\titlespacing*{\subparagraph}{0pt}{6pt plus 2pt minus 1pt}{1em}
\titleformat{\paragraph}[runin]{\normalsize\bf}{}{0pt}{}
\titlespacing*{\paragraph}{0pt}{6pt plus 2pt minus 1pt}{1em}

\theoremstyle{plain}
\newtheorem{theorem}{Theorem}
\newtheorem{lemma}[theorem]{Lemma}
\newtheorem{proposition}[theorem]{Proposition}
\newtheorem{corollary}[theorem]{Corollary}
\theoremstyle{definition}
\newtheorem{definition}[theorem]{Definition}
\newtheorem{assumption}[theorem]{Assumption}
\theoremstyle{remark}
\newtheorem{remark}[theorem]{Remark}

\theoremstyle{plain}
\newtheorem{theoremApp}{Theorem}[section]
\newtheorem{lemmaApp}[theoremApp]{Lemma}
\newtheorem{propositionApp}[theoremApp]{Proposition}
\newtheorem{corollaryApp}[theoremApp]{Corollary}
\theoremstyle{definition}
\newtheorem{definitionApp}[theoremApp]{Definition}
\newtheorem{assumptionApp}[theoremApp]{Assumption}
\theoremstyle{remark}
\newtheorem{remarkApp}[theoremApp]{Remark}

\tcbset{
  graybox/.style={
    breakable,
    colback=black!10,
    colframe=white,
    width=\dimexpr\linewidth+10pt\relax,
    enlarge left by=-5pt,
    enlarge right by=-5pt,
    boxrule=0pt,
    left=0pt,right=0pt,top=0pt,bottom=0pt,
    sharp corners
  },
  thmbox/.style={
    breakable,
    enhanced jigsaw,
    colback=supplight,
    colframe=suppnavy!72,
    boxrule=0pt,
    leftrule=2.2pt,
    arc=1.5pt,
    outer arc=1.5pt,
    boxsep=0pt,
    left=7pt,
    right=7pt,
    top=6pt,
    bottom=6pt,
    before skip=0.9\topsep,
    after skip=0.9\topsep,
  }
}
\newcommand{\csalwraptcb}[2]{
  \expandafter\let\csname csal@wrap@#1\expandafter\endcsname\csname #1\endcsname
  \expandafter\let\csname csal@wrap@end#1\expandafter\endcsname\csname end#1\endcsname
  \renewenvironment{#1}[1][]{
    \begin{tcolorbox}[#2]
    \if\relax\detokenize{##1}\relax
      \csname csal@wrap@#1\endcsname
    \else
      \csname csal@wrap@#1\endcsname[##1]
    \fi
  }{
    \csname csal@wrap@end#1\endcsname
    \end{tcolorbox}
  }
}
\csalwraptcb{assumption}{graybox, boxsep=5pt, before skip=\topsep, after skip=\topsep}
\csalwraptcb{theorem}{graybox, boxsep=3pt, before skip=4pt, after skip=4pt}
\csalwraptcb{lemma}{graybox, boxsep=5pt, before skip=\topsep, after skip=\topsep}

\newenvironment{restated}[3]{
  \begin{tcolorbox}[thmbox]
  \noindent\textbf{#1~#2 (#3, restated).}\;\itshape\ignorespaces
}{
  \end{tcolorbox}
}

\newcommand{\R}{\mathbb{R}}
\newcommand{\E}{\mathbb{E}}
\newcommand{\KL}{\mathrm{KL}}

\newcommand{\OT}{\mathrm{OT}}
\newcommand{\eps}{\varepsilon}
\newcommand{\Hreg}{H_{\mathrm{reg}}}
\DeclareMathOperator*{\argmin}{arg\,min}
\DeclareMathOperator*{\argmax}{arg\,max}
\newcommand{\cmark}{\checkmark}
\newcommand{\xmark}{\ensuremath{\times}}

\newcommand{\mref}[1]{\ref{#1}}
\newcommand{\meqref}[1]{\eqref{#1}}
\newcommand{\mthm}[1]{\texorpdfstring{\hyperref[#1]{\textcolor{thmblue}{Theorem~\ref*{#1}}}}{Theorem~\ref{#1}}}
\newcommand{\mthmp}[1]{Theorem~\ref{#1}}
\newcommand{\clem}[1]{\hyperref[#1]{Lemma~\ref*{#1}}}
\newcommand{\cassum}[1]{\hyperref[#1]{Assumption~\ref*{#1}}}
\newcommand{\cdefn}[1]{\hyperref[#1]{Definition~\ref*{#1}}}

\newcommand*{\annotref}[1]{&\text{\footnotesize\textcolor{black!50}{\big(#1\big)}}&}
\let\annotnl\annot

\newcommand{\rowhead}{\rowcolor{tabhead}}
\newcommand{\rowours}{\rowcolor{tabours}}

\newcommand{\gain}[2]{\cellcolor{gainpos!#1!white}\textbf{\textcolor{gainposdk}{#2}}}
\newcommand{\drop}[2]{\cellcolor{gainneg!#1!white}\textbf{\textcolor{gainnegdk}{#2}}}

\title{One Knob to Rule Them All: A Unified Optimal Transport View of Cold-Start Active Learning}
\author{
    Ning Zhu\textsuperscript{\rm 1},
    Xiaochuan Ma\textsuperscript{\rm 2},
    Juntao Xu\textsuperscript{\rm 1},
    Jingze Liang\textsuperscript{\rm 1}, \\
    Mengfei Zhao,
    An Chen \textsuperscript{\rm 1},
    Liang-Jian Deng\textsuperscript{\rm 3}\thanks{Corresponding author.}    
}
\affiliations{
    \textsuperscript{\rm 1}Glasgow College, University of Electronic Science and Technology of China\\
    \textsuperscript{\rm 2}School of Mechanical and Electrical Engineering, University of Electronic Science and Technology of China\\
    \textsuperscript{\rm 3}School of Mathematical Sciences, University of Electronic Science and Technology of China
}

\begin{document}

\maketitle

\begin{abstract}
\textbf{Cold-Start Active Learning} (CSAL) aims to select a valuable
subset from an unlabeled pool without any prior knowledge or human assistance. Existing
methods take diverse routes based on typicality, coverage, or
diversity. Each rests on its own inductive bias and therefore performs
well on some tasks yet poorly on others. We argue that the real
challenge is not to design yet another selection heuristic, but to
make CSAL adapt automatically to the data and task at
hand. To this end, we revisit CSAL through the lens of optimal
transport. First, we propose a generalized transport selection
framework that reveals the shared allocation structure of existing
methods and exactly subsumes representative formulations. Second, we introduce a theoretical analysis that
characterizes the trade-off controlled by entropic regularization and
establishes a task-agnostic minimax bound for cold-start selection.
These results provide a principled foundation for adapting the
regularization strength to the unlabeled data. Third, we derive a
data-adaptive regularization rule and present a novel Sinkhorn-based
CSAL algorithm, termed \textbf{$\eps$-Adaptive Selection ($\eps$-AS)}.
Extensive experiments on six public datasets and multiple annotation
budgets show that $\eps$-AS consistently achieves state-of-the-art
performance. On ImageNet-1k, it improves the average accuracy over
ActiveFT by 1.29\% while reducing selection time by 56.2\%. Code will be released at \underline{\url{https://github.com/Z-yiwei/OT-CSAL}}.
\end{abstract}

\section{I. Introduction}
\label{sec:intro}

\begin{figure}[t]
\centering
\includegraphics[width=\columnwidth]{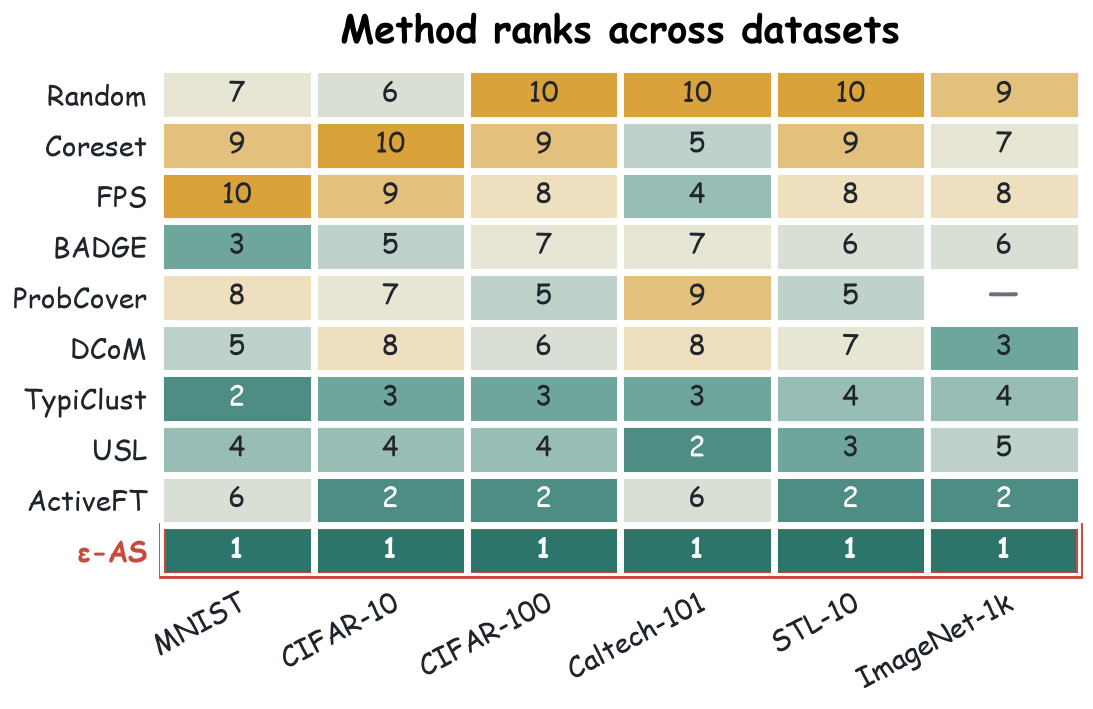}
\caption{\textbf{Fixed geometric selection rules do not transfer
consistently across datasets.} Each cell gives one method's accuracy
rank on a dataset. Baseline ranks vary greatly while
$\eps$-AS ranks first on all six datasets.}
\label{fig:fragility}
\end{figure}

Modern machine learning owes much of its success to large labeled
datasets~\cite{Peng2025unsupervised}. However, obtaining annotations is often expensive and slow~\cite{Ren2025deep}.
Warm-start \textbf{active learning} (AL) reduces this burden by querying
only the most informative samples~\citep{settles2009survey}. These
methods assume a partially trained task model and acquire labels over
many rounds of retraining and selection
~\citep{settles2009survey,ren2021survey}. This setting is fragile in two ways.
First, the multi-round loop repeatedly consumes expert effort and compute. 
Second, randomness in the initial labeled subset can lead to
suboptimal performance and instability
~\citep{hacohen2022typiclust,yehuda2022probcover}.

To this end, \textbf{Cold-Start AL} (CSAL) addresses a more demanding
setting in which the learner selects all informative samples in
one round without prior knowledge. This removes repeated
human interaction from the acquisition process, but it also removes the task feedback on which warm-start methods rely. Therefore, the learner must instead make its decision from the geometry of pretrained features alone~\cite{Zhu2024adversarial,zhu2025frequency}.

Existing CSAL methods follow three main routes. \textbf{Typicality-based}
methods select points that represent dense local regions
\citep{hacohen2022typiclust,wang2022usl}. \textbf{Coverage-based methods} spread
the annotation budget across feature space
\citep{yehuda2022probcover,sener2018coreset,mishal2024dcom}.
\textbf{Diversity-regularized continuous} methods optimize a set of separated
prototypes and then convert them into queries
\citep{xie2023activeft,mahmood2022lowbudget}. Each route commits to a different geometric preference. Typicality may
overlook sparse regions, aggressive coverage may favor atypical
points, and continuous diversity can impose a preference that does
not match the downstream task.

The CSAL setting provides no labels with which to choose among
these preferences. As a result, a method can perform strongly on one
dataset and poorly on another. As shown in Figure~\ref{fig:fragility}, the rankings of existing methods vary markedly across
datasets. We argue that the key problem is not a lack of selection heuristics. It is the use of a fixed geometric bias when
the appropriate balance between local representativeness and global
coverage changes with the pool and the budget. A robust CSAL method should adapt that balance from the unlabeled data itself.

In this paper, we take a first step by asking what the three routes have in common.
Despite their different objectives, they all allocate responsibility
for the unlabeled pool to a limited set of representatives and then
turn those representatives into annotation queries. We formalize this
shared structure through a generalized transport selection framework.
The framework exactly recovers the evaluated formulations of
\textbf{Typicality-based} TypiClust, \textbf{Coverage-based} ProbCover and \textbf{Diversity-based} ActiveFT while retaining their different
selection rules. It therefore provides a unified view for the three routes without treating them as identical algorithms. 

The unified view also makes it possible to ask a more principled
question: how should the selection behavior change with the data
rather than remain fixed by the choice of method? We answer this
question theoretically from two complementary perspectives. First, we
show that entropic regularization traces a monotone trade-off between
geometric fit and diffuse allocation. Second, we establish a
task-agnostic minimax bound that identifies the geometric resolution
governing the difficulty of CSAL. These
results provide theoretical support for choosing the regularization
strength adaptively instead of committing to a fixed geometric bias.
Building on this theory, we further propose \textbf{$\eps$-Adaptive Selection}
($\eps$-AS), a novel CSAL algorithm that adapts its
entropic regularization to the unlabeled pool and annotation budget.
The resulting method turns the unified transport view into a
practical selector without requiring the user to choose among
typicality, coverage, and continuous diversity objectives for each
new dataset. Extensive experiments on six public datasets and multiple annotation
budgets show that $\eps$-AS consistently achieves state-of-the-art
performance and obtains the highest point estimate in 25 of 26
settings overall. It improves over the most accurate baseline,
ActiveFT, by 1.29\% while reducing selection time by 56.2\%. The
contributions of this paper can be summarized as follows:
\begin{enumerate}
\item We introduce a generalized transport framework that exposes the
shared allocation structure of three representative CSAL routes and
subsumes their formulations.
\item We provide theoretical foundations for adaptive transport
selection by establishing a monotone entropic trade-off and a
task-agnostic minimax bound for CSAL.
\item We propose a novel Sinkhorn-based CSAL algorithm $\eps$-AS built on the unified
framework and theoretical results.
\item Extensive experiments across six datasets and multiple budgets
show that $\eps$-AS achieves state-of-the-art accuracy with practical
selection efficiency.
\end{enumerate}

\section{II. Related Work}
\label{sec:related}

\subsubsection{Warm-Start Active Learning}
Classical AL starts from a labeled seed and alternates task-model
training with label acquisition~\citep{settles2009survey,ren2021survey}.
Deep methods estimate predictive uncertainty
\citep{gal2017deepbayesian,kirsch2019batchbald}, combine uncertainty
with diversity in gradient or Fisher representations
\citep{ash2020badge,ash2021bait}, or optimize scalable batch
objectives~\citep{citovsky2021batch,coleman2020selection}. Their
acquisition scores depend on task feedback and can be sensitive to the
initial seed and training protocol
\citep{mittal2019parting,munjal2022robust}.

\subsubsection{Cold-Start Active Learning}
CSAL selects the initial annotation batch without target labels or a
trained task model. Existing work spans three geometric routes and a
fourth group of task-specific hybrids.
\textbf{Typicality-based} favors dense or prototypical
regions. CALR combines target-domain contrastive features,
hierarchical clustering, and information density
\citep{jin2022calr}, while TypiClust, USL, and foundation-model
clustering select representative instances from learned partitions
\citep{hacohen2022typiclust,wang2022usl,tan2024foundclust}.
\textbf{Coverage-based} spreads queries across feature space
through CoreSet, ProbCover, DCoM, generalized coverage, or
$\gamma$-Tube geometry
\citep{sener2018coreset,yehuda2022probcover,mishal2024dcom,
bae2024maxherding,cao2022gammatube}.
\textbf{Diversity-regularized continuous} methods optimizes continuous
representatives or distributional subset objectives. ActiveFT uses
attraction and repulsion between learned prototypes
\citep{xie2023activeft}, while Wasserstein selection matches a
discrete subset to the pool distribution
\citep{mahmood2022lowbudget}.
\textbf{Task-specific hybrids} combine geometry with auxiliary
label-free signals. DEUCE uses dual diversity and uncertainty for
text classification~\citep{jin2024deuce}, and related acquisition
rules extend to vision-language models~\citep{safaei2025alvlm}.
CSAL-3D and CSCS combine representativeness with self-supervised
uncertainty or difficulty for medical segmentation
\citep{zhao2025csal3d,hattat2026datasetaware}. MedCAL-Bench further
studies foundation models and selection strategies across medical
tasks~\citep{zhu2025medcalbench}. These routes differ in how they
balance local representativeness, global coverage, and auxiliary
difficulty signals.

\subsubsection{Optimal Transport for Selection}
Optimal transport compares measures through a minimum-cost coupling
\citep{villani2009optimal,peyre2019computational,su2025higher}. Entropic
regularization enables efficient Sinkhorn scaling
\citep{cuturi2013sinkhorn,altschuler2017nearlinear}. For data
selection, Wasserstein matching has been posed as discrete subset
optimization~\citep{mahmood2022lowbudget}, and AQOT uses
entropy-regularized transport to combine informativeness and
representativeness in iterative AL~\citep{zhu2023aqot}. Partial OT
can expose a support subset by relaxing a source marginal
\citep{riaz2023partialot}, while UniPROT selects uniformly weighted
prototypes through a partial-OT submodular reformulation
\citep{chanda2026uniprot}. These works use transport for active
querying, support reduction, or prototype construction. We use it to
expose allocation structure across representative CSAL formulations
and connect balanced entropic regularization to unlabeled geometric
resolution. Classical quantization and intrinsic-dimension estimation
characterize how this resolution changes with budget
\citep{lloyd1982leastsquares,levina2004mle}.

\section{III. Preliminaries}
\label{sec:prelim}

\textbf{Roadmap.} In this section, we first define the CSAL
problem. Then, we introduce the feature geometry and transport constraints used throughout the paper.

\subsubsection{Problem setup.}
Let $U=\{x_i\}_{i=1}^n$ be an unlabeled pool, $b\ll n$ an annotation
budget, and $\phi$ a frozen feature extractor. A selector
$\mathcal A$ observes $U$ and $\phi$ but no task labels, and returns
$S=\mathcal A(U)\subset[n]$ with $|S|=b$ in one round. The selected
samples are then annotated and used to train a predictor $h_S$. For a
fixed learner and loss, the ideal selector minimizes
\begin{equation}\label{eq:csal_objective}
 \mathcal A^\star
 \in
 \argmin_{\mathcal A}
 \E\!\left[\mathcal R(h_{\mathcal A(U)})\right].
\end{equation}
Equation~\eqref{eq:csal_objective} evaluates a selector through its
downstream predictor. However, the selector has no access to labels.
We therefore begin with the label-free allocation problem.

\subsubsection{Geometric resolution.}
We work with normalized features
$z_i:=\phi(x_i)/\|\phi(x_i)\|_2\in\mathbb S^{D-1}$. A sample and a
representative are compared using the cosine cost
\[
\begin{aligned}
 C(z,\mu)&:=1-\langle z,\mu\rangle,\\
 d(z,\mu)&:=\|z-\mu\|_2=\sqrt{2C(z,\mu)}.
\end{aligned}
\]
For a feature distribution $P$, define
\[
 m_P(b):=
 \inf_{\mu_1,\ldots,\mu_b\in\mathbb S^{D-1}}
 \E_{z\sim P}\min_{k\in[b]}C(z,\mu_k).
\]
The quantity $m_P(b)$ is the smallest average representation cost
achievable with $b$ anchors. It measures the geometric resolution at
budget $b$. A larger budget places more anchors in the feature space.
It therefore tends to reduce this cost. Our statistical results use
bounded normalized geometry and the quantization law
$m_P(b)=\Theta(b^{-2/d_{\mathrm{int}}})$,
finite-pool objective accuracy, and geometric fidelity of the decoded
selection.

\subsubsection{Transport constraints.}
A transport plan $\pi\in\R_+^{n\times b}$ records how the mass of each
pool sample is assigned to $b$ representatives. We use uniform
pool weights $a=(1/n)\mathbf1_n$ and representative weights
$u=(1/b)\mathbf1_b$. The relevant feasible sets are
\[
\begin{aligned}
 \Pi_a^{\mathrm{sr}}
 &:=
 \{\pi\in\R_+^{n\times b}:\pi\mathbf1_b=a\},\\
 \Pi(a,u)
 &:=
 \{\pi\in\Pi_a^{\mathrm{sr}}:\pi^\top\mathbf1_n=u\}.
\end{aligned}
\]
The semi-relaxed set fixes the mass leaving each pool sample. The
balanced set requires every representative to receive mass
$1/b$. This distinction lets the unified framework retain the
allocation rule of each method. For a fixed cost matrix $C$, the
balanced entropic optimal transport problem is
\begin{equation}\label{eq:eot}
 \pi_\eps(C)
 :=
 \argmin_{\pi\in\Pi(a,u)}
 \left\{
 \langle\pi,C\rangle
 +\eps\KL(\pi\|a\otimes u)
 \right\}.
\end{equation}
For $\eps>0$ the minimizer is unique and can be computed by Sinkhorn
scaling. The entropy convention $\Hreg(\pi)$ changes
\eqref{eq:eot} only by a constant under fixed marginals.

\section{IV. A Unified Transport Selection View}
\label{sec:theory}

\textbf{Roadmap.} In this section, we first propose a generalized transport framework
that subsumes previous CSAL methods. We then establish the
task-agnostic minimax resolution associated with the unlabeled
geometry. \footnote{Full proofs and additional mathematical details of all Theorems are provided in Appendix~\ref{app:proofs}.}

\paragraph{Generalized transport framework.}
Every selection method must answer three questions. It must define the
allowed representatives, assign the pool to those representatives,
and convert the resulting allocation into distinct queries. We record
these choices in a realization
\[
 \mathfrak R
 =
 (\Theta_{\mathfrak R},
   \Pi_{\mathfrak R},
   C_{\mathfrak R},
   \Omega_{\mathfrak R},
   \mathcal D_{\mathfrak R}).
\]
Its components specify feasible representatives and allocation
constraints. They also specify a cost, a support-only energy, and a
decoder. A realization may use the method's stated pool-dependent
preprocessing. It may also use fixed deterministic conventions.
Its objective is
\begin{equation}\label{eq:general_program}
\begin{aligned}
 \mathcal J_\eps^{\mathfrak R}(\theta,\pi)
 &:=
 \langle\pi,C_\theta\rangle
 +\eps\KL(\pi\|a\otimes u)
 +\Omega_{\mathfrak R}(\theta),\\
 &\hspace{5.5em}\pi\in\Pi_{\mathfrak R}(\theta).
\end{aligned}
\end{equation}
At $\eps=0$ the KL term is absent. We say that
\eqref{eq:general_program} \emph{variationally subsumes} a method
when minimizing out $\pi$ produces that method's objective up to a
positive affine transformation. The decoder must also recover the
method's selection rule. For a greedy method, the condition is applied
to each step. Table~\ref{tab:realizations} summarizes the concrete
parameterizations of three representative methods and $\eps$-AS. The
first three rows yield the following exact subsumption result, while
the final row records the $\eps$-AS realization developed later. Further parameterization details and proof are provided in Appendix~\ref{app:proof_unification}.
\begin{table*}[t]
\centering
\footnotesize
\setlength{\tabcolsep}{1.5pt}
\setlength{\arrayrulewidth}{0.4pt}
\renewcommand{\arraystretch}{1.05}
\arrayrulecolor{black!45}
\caption{Exact parameterizations of representative CSAL methods
within the unified transport-selection framework.}
\label{tab:realizations}
\begin{tabular}{@{}
>{\raggedright\arraybackslash}m{0.075\textwidth}
>{\centering\arraybackslash}m{0.195\textwidth}
>{\centering\arraybackslash}m{0.360\textwidth}
>{\centering\arraybackslash}m{0.310\textwidth}
@{}}
\hline
Method
& $(\Theta_{\mathfrak R},\Pi_{\mathfrak R})$
& $(C_{\mathfrak R},\Omega_{\mathfrak R},\eps)$
& Decoder $\mathcal D_{\mathfrak R}$ \\
\hline

TypiClust
&
\(\displaystyle
\begin{gathered}
\Theta_{\mathrm T}=\prod_{k=1}^{b}V_k,\;s_k\in V_k\\[-1pt]
\Pi_{\mathrm T}=\Pi(a,u)
\end{gathered}\)
&
\(\displaystyle
\begin{gathered}
C^{\mathrm T}_{ik}(s)=\ell_{s_k}\\[-1pt]
\Omega_{\mathrm T}=0,\;\eps=0
\end{gathered}\)
&
\(\displaystyle
\mathcal D_{\mathrm T}(s)=\{s_1,\ldots,s_b\}\)
\\
\hline

ProbCover
&
\(\displaystyle
\begin{gathered}
\Theta_{\mathrm P}=\{S\subseteq[n]:|S|=b\}\\[-1pt]
q_k=z_{s_k},\;\Pi_{\mathrm P}=\Pi_a^{\mathrm{sr}}
\end{gathered}\)
&
\(\displaystyle
\begin{gathered}
C^\delta_{ik}(S)=\mathbf 1\{d(z_i,q_k)\ge\delta\}\\[-1pt]
\Omega_{\mathrm P}=0,\;\eps=0
\end{gathered}\)
&
\(\displaystyle
\begin{gathered}
\Delta_jF_\delta(S)
=F_\delta(S\cup\{j\})-F_\delta(S)\\[-1pt]
S_0=\varnothing,\;
s_t=\argmax_{j\notin S_{t-1}}\Delta_jF_\delta(S_{t-1})\\[-1pt]
S_t=S_{t-1}\cup\{s_t\},\;
\mathcal D_{\mathrm P}(S_b)=S_b
\end{gathered}\)
\\
\hline

ActiveFT
&
\(\displaystyle
\begin{gathered}
\Theta_{\mathrm A}=(\mathbb S^{D-1})^b\\[-1pt]
\Pi_{\mathrm A}=\Pi_a^{\mathrm{sr}}
\end{gathered}\)
&
\(\displaystyle
\begin{gathered}
C^{\mathrm A}_{ik}(\Theta)
=1-\langle z_i,\theta_k\rangle\\[-1pt]
L_k=\sum_{l\ne k}
\exp\!\left(\langle\theta_k,\theta_l\rangle/\tau\right),\\
\Omega_{\mathrm A}(\Theta)
=(\lambda\tau/b)\sum_{k=1}^{b}\log L_k,\;\eps=0
\end{gathered}\)
&
\(\displaystyle
\begin{gathered}
S_{k-1}=\{s_1,\ldots,s_{k-1}\}\\[-1pt]
s_k=
\argmax_{i\notin S_{k-1}}
\langle z_i,\theta_k\rangle,\\
\mathcal D_{\mathrm A}(\Theta)
=\{s_1,\ldots,s_b\}
\end{gathered}\)
\\
\hline

\rowcolor{gray!15}
$\eps$-AS
&
\(\displaystyle
\begin{gathered}
\Theta_{\eps\text{-AS}}
=\{(\widehat\mu_1,\ldots,\widehat\mu_b)\}\\[-1pt]
\Pi_{\eps\text{-AS}}=\Pi(a,u)
\end{gathered}\)
&
\(\displaystyle
\begin{gathered}
C_{ik}=1-\langle z_i,\widehat\mu_k\rangle,\;
\Omega_{\eps\text{-AS}}=0\\[-1pt]
\widehat m(b)=(1/n)\sum_i\min_k C_{ik},\;
\eps^\star=c\widehat m(b)
\end{gathered}\)
&
\(\displaystyle
\begin{gathered}
\pi^\star=\pi_{\eps^\star}(C)\\[-1pt]
\widehat S=\operatorname{RoundRobin}_{k\in[b]}
\left[\argmax_{i\notin S}\pi^\star_{ik}\right]
\end{gathered}\)
\\
\hline
\end{tabular}
\end{table*}

\begin{theorem}[Exact variational subsumption]
\label{thm:unification}
For any finite pool and $2\le b\le n$, realizations in
Table~\ref{tab:realizations} recover the objectives and selection
rules of TypiClust, ProbCover, and
ActiveFT.
\end{theorem}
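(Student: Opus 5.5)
The proof handles the three rows of Table~\ref{tab:realizations} separately. For each method we minimize $\mathcal J_0^{\mathfrak R}(\theta,\pi)$ over $\pi\in\Pi_{\mathfrak R}(\theta)$ in closed form, which gives a function of $\theta$ alone. We then show that this function equals the method's stated objective up to a positive affine map, and that the decoder reproduces the method's selection rule. Two facts do the work at $\eps=0$. First, over the semi-relaxed set $\Pi_a^{\mathrm{sr}}$ the problem decouples across rows: each row sends its mass $1/n$ to a column of minimal cost, so
\[
\min_{\pi\in\Pi_a^{\mathrm{sr}}}\langle\pi,C\rangle=\frac1n\sum_i\min_k C_{ik}.
\]
Second, over the balanced set $\Pi(a,u)$, if $C_{ik}=c_k$ does not depend on $i$, then every feasible plan has the same cost
\[
\langle\pi,C\rangle=\sum_k c_k\sum_i\pi_{ik}=\frac1b\sum_k c_k.
\]

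\textbf{TypiClust.} Here $V_k$ are the clusters from the method's pool-dependent preprocessing (its $k$-means partition), $s_k\in V_k$, and $\ell_{s}$ is the negative typicality (or a positive affine image of it). By the second fact, minimizing out $\pi$ gives $\frac1b\sum_k\ell_{s_k}$. Since the constraint set $\prod_k V_k$ is a product, minimizing this sum splits into $b$ independent problems: in each cluster $V_k$, choose $s_k=\argmin_{s\in V_k}\ell_s$, which is the most typical point. This is exactly TypiClust's rule, and the decoder $\mathcal D_{\mathrm T}$ simply returns these points. The map is a positive scaling by $1/b$. We also need to check the conventions: TypiClust restricts attention to the largest uncovered clusters and uses a fixed tie-breaking rule, and both can be folded in as "fixed deterministic conventions" inside the definition of the $V_k$.

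\textbf{ProbCover.} Applying the first fact to the indicator cost gives
\[
\min_\pi\langle\pi,C^\delta(S)\rangle=\frac1n\,\#\{i:\min_{k}d(z_i,q_k)\ge\delta\}=1-F_\delta(S),
\]
where $F_\delta(S)$ is the fraction of points covered by $\delta$-balls around $S$. This is a negative affine map of coverage. We therefore state the objective as minimizing uncovered mass, which is ProbCover's max-coverage objective under the positive affine map $x\mapsto 1-x$. For the greedy condition, $\Delta_jF_\delta(S)$ is exactly ProbCover's marginal gain: the number of uncovered points in the ball around $j$, divided by $n$. Hence the argmax with a consistent tie-break is the same at every step.

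\textbf{ActiveFT.} Minimizing out $\pi$ gives
\[
\frac1n\sum_i\min_k\bigl(1-\langle z_i,\theta_k\rangle\bigr)=1-\frac1n\sum_i\max_k\langle z_i,\theta_k\rangle,
\]
so the transport term is ActiveFT's attraction term up to an affine shift. ActiveFT's loss is
\[
-\frac1n\sum_i\max_k\langle z_i,\theta_k\rangle/\tau+\frac{\lambda}{b}\sum_k\log L_k.
\]
Multiplying by $\tau>0$ and adding $1$ gives exactly $\mathcal J_0$ with $\Omega_{\mathrm A}$. The decoder is ActiveFT's nearest-unselected-sample assignment, performed in prototype order. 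A caveat: ActiveFT actually uses a softmax-weighted attraction, or hard assignment in its stated form. I would cite the hard-max form in the paper's Eq. If the soft version is the intended target, the same argument should go through with $\eps=\tau$ and an entropic term, but the table fixes $\eps=0$, so the hard form is the one to match.

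\textbf{Main obstacle.} Most of the work is in faithfully pinning down each method's published objective: ProbCover's sign convention, TypiClust's cluster preprocessing and the definition of $\ell$, and ActiveFT's exact constants. The condition $b\ge2$ is needed so that the $L_k$ in ActiveFT are nonempty sums, and $b\le n$ so that the decoders can always find distinct points.
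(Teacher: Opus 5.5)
Your proposal is correct and follows the paper's proof essentially step by step. It uses the same row-minimization identity over $\Pi_a^{\mathrm{sr}}$ and the same column-constant computation over $\Pi(a,u)$, which yield $\frac1b\sum_k\ell_{s_k}$, $1-F_\delta(S)$, and $1+\tau\mathcal L_{\mathrm{AFT}}(\Theta)$, with the decoders matched as you describe. The only differences are minor. The paper defines $\ell_i$ as the mean distance to the $K_t=20$ nearest in-cell neighbors, with typicality $\rho_i=1/\ell_i$, so your TypiClust step goes through by monotonicity of $x\mapsto 1/x$ rather than by an affine relation. The paper also writes each ProbCover greedy step as its own residual transport problem, whose reduced objective is the residual uncovered mass minus $\Delta_jF_\delta(S_{t-1})$; this is the per-step form of your marginal-gain argument.
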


\paragraph{Geometric resolution and minimax risk.}
To state a model-independent downstream guarantee, fix $L>0$ and let
$\operatorname{Lip}_L(P)$ be the real-valued functions on
$\operatorname{supp}(P)$ that are $L$-Lipschitz under $d$. Consider
these noiseless tasks under integrated squared error. A
deterministic selector sees the feature distribution but not $f$,
chooses $S\subseteq\operatorname{supp}(P)$ with
$1\le|S|\le b$, and observes $f|_S$. A deterministic learner then
maps $(S,f|_S)$ to a measurable predictor
$\widehat f_{\mathcal A,\mathcal L,f}$. If $\mathfrak A_b$
denotes all such selector and learner pairs, define
\[
 \mathcal R_b^\star(P)
 :=
 \inf_{(\mathcal A,\mathcal L)\in\mathfrak A_b}
 \sup_{f\in\operatorname{Lip}_L(P)}
 \E_P[(\widehat f_{\mathcal A,\mathcal L,f}(z)-f(z))^2].
\]
Thus the task supremum is taken after the task-agnostic procedure is
fixed.

\begin{theorem}[Task-agnostic cold-start minimax resolution]
\label{thm:lowerbound}
For any $P$ with bounded feature geometry,
\begin{equation}\label{eq:lowerbound}
 \frac{L^2}{2}m_P(b)
 \le
 \mathcal R_b^\star(P)
 \le
 2\kappa_qL^2m_P(b),
\end{equation}
where the upper bound requires a selector with
$Q_P(S):=\E_P\min_{s\in S}C(z,s)\le\kappa_qm_P(b)$. If
$m_P(b)=\Theta(b^{-2/d_{\mathrm{int}}})$, then
\begin{equation}\label{eq:eps_saturation}
 \mathcal R_b^\star(P)
 =\Theta\!\left(b^{-2/d_{\mathrm{int}}}\right),
 \qquad
 b_\eta=\Theta\!\left(\eta^{-d_{\mathrm{int}}/2}\right).
\end{equation}
Here $b_\eta$ is the smallest budget with minimax risk at most
$\eta$.
\end{theorem}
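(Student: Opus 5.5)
The plan is a standard two-sided minimax argument. For the lower bound, I would use a two-function indistinguishability argument. For the upper bound, I would use a nearest-anchor (piecewise-constant) learner. Throughout, I rely on the identity $d(z,\mu)^2=2C(z,\mu)$ from the Preliminaries, which converts squared Lipschitz errors into the cosine cost appearing in $m_P(b)$.

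\emph{Lower bound.} Fix any pair $(\mathcal A,\mathcal L)\in\mathfrak A_b$. The selector sees only $P$, so $S=\mathcal A(P)$ is a fixed set with $1\le|S|\le b$ that does not depend on $f$.
\begin{itemize}
\item Set $g(z):=L\,\min_{s\in S}d(z,s)$. As a distance function, $g$ is $L$-Lipschitz under $d$, so both $f_+=g$ and $f_-=-g$ lie in $\operatorname{Lip}_L(P)$.
\item Both functions vanish on $S$. The learner therefore observes identical data $(S,0)$ and must return the same predictor $\widehat f$ in both cases.
\item Bounding the supremum by the average and using $(a-g)^2+(a+g)^2\ge 2g^2$ pointwise gives
\[
\max_\pm \E_P[(\widehat f-f_\pm)^2]\ge \E_P[g^2]=L^2\,\E_P\min_{s\in S}d(z,s)^2=2L^2\,Q_P(S).
\]
\item Since $S\subseteq\operatorname{supp}(P)\subseteq\mathbb S^{D-1}$ has at most $b$ points, we can pad it by repetition to $b$ anchors on the sphere. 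Hence $Q_P(S)\ge m_P(b)$.
\end{itemize}
Taking the infimum over $(\mathcal A,\mathcal L)$ yields $\mathcal R_b^\star(P)\ge 2L^2m_P(b)$, which is stronger than the stated constant $L^2/2$. The weaker constant leaves slack for how ties, measurability and the "bounded geometry" convention are handled.

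\emph{Upper bound.} Take the selector $S$ assumed in the statement, with $Q_P(S)\le\kappa_q m_P(b)$. Define the learner $\widehat f(z):=f(s(z))$, where $s(z)\in\argmin_{s\in S}C(z,s)$ is chosen with a fixed measurable tie-break, such as a lexicographic order on $S$.
\begin{itemize}
\item For every $f\in\operatorname{Lip}_L(P)$, we have $(\widehat f(z)-f(z))^2\le L^2d(z,s(z))^2=2L^2\min_{s\in S}C(z,s)$.
\item Integrating gives a risk of at most $2L^2Q_P(S)\le 2\kappa_qL^2m_P(b)$, uniformly in $f$.
\end{itemize}

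\emph{Rates.} Substituting $m_P(b)=\Theta(b^{-2/d_{\mathrm{int}}})$ into \eqref{eq:lowerbound} gives the first part of \eqref{eq:eps_saturation}, since the constants $L^2/2$ and $2\kappa_qL^2$ do not depend on $b$. For $b_\eta$, note that $m_P(b)$ is nonincreasing in $b$, because anchors can be repeated. Hence $\{b:\mathcal R^\star_b\le\eta\}$ is sandwiched between the sets $\{b: c_2b^{-2/d_{\mathrm{int}}}\le\eta\}$ and $\{b: c_1b^{-2/d_{\mathrm{int}}}\le\eta\}$. Inverting both bounds gives $b_\eta=\Theta(\eta^{-d_{\mathrm{int}}/2})$.

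\emph{Main obstacle.} The step I expect to need care is justifying that a selector achieving $Q_P(S)\le\kappa_q m_P(b)$ exists with $S\subseteq\operatorname{supp}(P)$. The infimum in $m_P(b)$ ranges over arbitrary sphere points, not support points. I would take near-optimal anchors $\mu_k$ and replace each by a support point $s_k$ that is nearly closest to $\mu_k$. By the triangle inequality in $d$, and since each $\mu_k$ can be taken to be the anchor closest to the points it represents, this gives
\[
d(z,s_k)\le d(z,\mu_k)+d(\mu_k,s_k)\le 2\,d(z,\mu_k)+o(1),
\]
which yields $\kappa_q\le 4$ up to an arbitrarily small slack. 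This is the role I would assign to the "bounded feature geometry" hypothesis, together with measurability of the tie-break.
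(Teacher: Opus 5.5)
Your proof is correct and follows the same route as the paper's: two tasks built from $L\,d(\cdot,S)$ that agree on the queried set, nearest-neighbor readout for the upper bound, then substitution of the quantization law and inversion for $b_\eta$. Two differences are worth noting.

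First, the paper uses the pair $f_0=0$, $f_1=L\,d(\cdot,S)$, whose separation is only $L\,d(\cdot,S)$, and obtains $\tfrac{L^2}{4}\E_P d(z,S)^2=\tfrac{L^2}{2}Q_P(S)$. Your symmetric pair $\pm L\,d(\cdot,S)$ doubles the separation and yields $2L^2Q_P(S)$. That is exactly the nearest-neighbor constant, so your two bounds together give the identity $\mathcal R_b^\star(P)=2L^2\inf\{Q_P(S):S\subseteq\operatorname{supp}(P),\ 1\le|S|\le b\}$, which the paper's factor-$4$ gap does not give.

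Second, the paper never shows that a support-valued selector with $Q_P(S)\le\kappa_q m_P(b)$ exists. It states the upper bound and the $\Theta$-rate conditionally, ``whenever such a quantization-faithful selector is available.'' Your snapping construction removes this hypothesis, and it needs no slack. Since $\operatorname{supp}(P)$ is compact and the infimum defining $m_P(b)$ is attained, you can take optimal anchors $\mu_k$ and exact nearest support points $s_k$. Then $d(z,s_k)\le d(z,\mu_k)+d(\mu_k,s_k)\le 2d(z,\mu_k)$ for every support point $z$ in the cell of $\mu_k$. Hence $Q_P(S)\le 4m_P(b)$ with a $b$-independent $\kappa_q=4$, which is exactly what the rate claim needs.

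One small repair: the sets in your $b_\eta$ sandwich must carry the constants, namely $\{b:2\kappa_qL^2c_2b^{-2/d_{\mathrm{int}}}\le\eta\}\subseteq\{b:\mathcal R_b^\star(P)\le\eta\}\subseteq\{b:2L^2c_1b^{-2/d_{\mathrm{int}}}\le\eta\}$. The ordering of their minima then follows directly, without monotonicity of $m_P(b)$.
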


The two bounds have the same geometric origin. For the lower
bound, the indistinguishable tasks
$f_0(z)=0$ and $f_1(z)=Ld(z,S)$ make the distance to the queried
set unavoidable. For the upper bound, nearest-neighbor extension
converts that same distance into an attainable prediction-error
bound. Consequently, both bounds are governed by the optimal
closest-anchor cost $m_P(b)$, which defines the population
resolution available at budget $b$. The next section
estimates this quantity from the unlabeled pool by $\widehat m(b)$
and uses it to set the entropic scale
$\eps^\star=c\widehat m(b)$.

\section{V. $\eps$-Adaptive Selection}
\label{sec:method}

In this section, we first specify the balanced entropic selection
rule. We then derive its data-adaptive regularization strength,
calibrate the dimensionless constant, and present the complete
$\eps$-AS algorithm.

\paragraph{Balanced Entropic Selection.}
$\eps$-AS uses $K=b$ $k$-means anchors and a balanced
transport. The resulting plan is computed by
\eqref{eq:eot} and converted into a subset by
cyclically allowing every anchor to claim its largest remaining plan
entry:
\begin{equation}\label{eq:roundrobin}
\begin{aligned}
 i_k(\eps,S)
 &:=
 \argmax_{i\notin S}\pi_\eps(C)_{ik},\\
 \widehat S(\eps)
 &:=
 \operatorname{RoundRobin}_{k\in[b]}
 \bigl[i_k(\eps,S)\bigr],
  |\widehat S(\eps)|=b.
\end{aligned}
\end{equation}
A fixed global index order resolves exact ties. Within this branch,
$\eps$ has a precise meaning. If $0<\eps_1<\eps_2$, then
\[
\begin{aligned}
 \KL(\pi_{\eps_2}\|a\otimes u)
 &\le\KL(\pi_{\eps_1}\|a\otimes u), \langle\pi_{\eps_1},C\rangle
 \le\langle\pi_{\eps_2},C\rangle.
\end{aligned}
\]
Moreover, cluster points of $\pi_\eps$ as $\eps\downarrow0$ solve unregularized
balanced OT, while $\pi_\eps\to a\otimes u$ as $\eps\to\infty$.
Thus $\eps$ controls assignment sharpness within the balanced
entropic realization used by $\eps$-AS.

\paragraph{Data-adaptive Regularization.}
Theorem~\ref{thm:lowerbound} identifies $m_P(b)$ as the population
resolution governing task-agnostic minimax risk. Because $P$ is
unknown, we estimate this scale from the unlabeled pool at the
current budget and use the estimate to normalize the transport
regularization. A fixed $\eps$ can otherwise represent different
smoothing regimes as the pool geometry or budget changes. After fitting
$\widehat\mu_1,\ldots,\widehat\mu_b$, define
$d_i:=\min_kC(z_i,\widehat\mu_k)$ and
$\widehat m(b):=n^{-1}\sum_i d_i$. Consider a temperature rule based
only on $(d_1,\ldots,d_n)$. We require it to be invariant to pool
order and to scale linearly with the costs. It must also average
consistently when two closest-cost lists are concatenated under a
common anchor and cost system. This last axiom applies only to cost
lists that share that system. Each raw pool is clustered before the
rule is evaluated. These requirements uniquely give a constant times
the arithmetic mean:
\begin{equation}\label{eq:eps_star}
\eps^\star
=
c\,\widehat m(b)
=
\frac{c}{n}\sum_{i=1}^n
\min_{k\in[b]} C(z_i,\widehat\mu_k).
\end{equation}

Whenever $\widehat m(b)>0$, this positive-temperature rule is exactly
scale-equivariant. For every $s>0$ and $\eps>0$,
\[
 \pi_{s\eps}(sC)=\pi_\eps(C),
 \qquad
 \widehat m_{sC}(b)=s\widehat m_C(b).
\]
Thus changing cost units changes $\eps^\star$ by the same factor and
leaves the plan and decoded subset unchanged.

\paragraph{Calibration of the dimensionless constant.}
Scale equivariance fixes the units of $\eps$ but not $c$. We calibrate
$c$ with an exactly solvable local robust game. For a nondegenerate
budget, set $m:=m_P(b)>0$. A $2\times2$
oversmoothing block with route gap $m$ has exact entropic bias
\[
 B_m(\eps)={m}/{(1+e^{m/\eps})},
\]
which strictly increases from $0$ to $m/2$. A near-tie block whose
two costs differ by $\kappa m$ has exact plan displacement and
penalty
\[
 S_m(\eps)
 =
 \lambda m\tanh\!\left({\kappa m}/{2\eps}\right),
\]
which strictly decreases from $\lambda m$ to $0$. For
general nonnegative transport costs these blocks allow any
$\kappa>0$. A unit-sphere cosine realization additionally requires
$\kappa m\le2$. For
$J_m(\eps):=\max\{A B_m(\eps),S_m(\eps)\}$ and $t=\eps/m$,
$J_m(tm)=m\psi(t)$. For every fixed
$A,\lambda,\kappa>0$, the increasing and decreasing terms cross once,
at the unique $c_{\mathrm{can}}>0$ satisfying
\[
 {A}/({1+e^{1/c_{\mathrm{can}}}})
 =
 \lambda\tanh\!\left({\kappa}/{2c_{\mathrm{can}}}\right).
\]
Hence $c_{\mathrm{can}}m$ is the unique global minimizer of this
canonical robust game. The result proves that the optimum is a
dimensionless multiple of the cost scale. Its numerical value depends
on the game weights. We therefore use one global calibration to
determine the remaining empirical multiplier.

\begin{theorem}[Unique scale-equivariant calibrated temperature]
\label{thm:eps_star}
Up to $c$, Equation~\eqref{eq:eps_star} is the unique nondegenerate
rule satisfying permutation, positive-scale, and mixture consistency.
The canonical robust game has the unique minimizer
$c_{\mathrm{can}}m_P(b)$. Under finite-pool accuracy,
\[
 \frac{\eps^\star}{m_P(b)}\longrightarrow c
 \quad\text{in probability}.
\]
Under intrinsic-dimension scaling,
$\eps^\star=\Theta_p(b^{-2/d_{\mathrm{int}}})$. With decoder
fidelity, $\eps$-AS attains the minimax rate in
Theorem~\ref{thm:lowerbound}.
\end{theorem}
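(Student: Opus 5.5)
The theorem bundles four claims, and I would prove them in order: uniqueness of the rule, the canonical game, consistency of $\eps^\star$, and the final rate.

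\emph{Uniqueness.} I would model a rule as a map $T$ from finite lists $(d_1,\ldots,d_n)\in\R_+^n$ (all $n$) to $\R_+$ satisfying three axioms. Permutation invariance makes $T$ a function of the empirical multiset. Positive scaling means $T(sd)=sT(d)$. Mixture consistency means that for concatenated lists $d\Vert d'$ of lengths $n,n'$,
\[
 T(d\Vert d')=\frac{n}{n+n'}T(d)+\frac{n'}{n+n'}T(d').
\]
Applying mixture consistency repeatedly down to singletons gives $T(d)=n^{-1}\sum_i g(d_i)$ with $g(x):=T((x))$. Scaling gives $g(sx)=sg(x)$, so $g(x)=cx$ with $c=g(1)$. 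Nondegeneracy means $c>0$. This yields Equation~\eqref{eq:eps_star}, and checking that the mean satisfies the three axioms is immediate.

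\emph{Canonical game.} I would take the two block formulas $B_m$ and $S_m$ as given from the preceding paragraph. Substituting $t=\eps/m$ gives $J_m(tm)=m\psi(t)$ with
\[
 \psi(t)=\max\{A/(1+e^{1/t}),\;\lambda\tanh(\kappa/2t)\}.
\]
The first term increases continuously from $0$ to $A/2$. The second decreases continuously from $\lambda$ to $0$. Their difference is therefore strictly monotone and has limits of opposite sign at $0$ and $\infty$ (using $\lambda>0$ at $t\to0$). By the intermediate value theorem it has exactly one zero $c_{\mathrm{can}}$.

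It remains to show this crossing is the unique minimizer of the maximum. For $t<c_{\mathrm{can}}$ we have $\psi(t)=\lambda\tanh(\kappa/2t)>\psi(c_{\mathrm{can}})$, and for $t>c_{\mathrm{can}}$ the increasing term dominates and exceeds its value at $c_{\mathrm{can}}$. Hence the minimizer over $\eps$ is $c_{\mathrm{can}}m_P(b)$.

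\emph{Consistency.} I would formalize ``finite-pool accuracy'' as the assumption $\widehat m(b)/m_P(b)\to1$ in probability: the empirical $k$-means objective approximates the population quantizer cost within a $o_p(m_P(b))$ error. I would justify this by uniform convergence of $\min_k C(z,\mu_k)$ over anchor sets on the sphere, since the class is bounded and Lipschitz in the anchors. This is combined with an approximate-optimization guarantee for the fitted anchors. The continuous mapping theorem then gives $\eps^\star/m_P(b)\to c$. Multiplying by $m_P(b)=\Theta(b^{-2/d_{\mathrm{int}}})$ gives $\eps^\star=\Theta_p(b^{-2/d_{\mathrm{int}}})$.

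\emph{Rate.} For the final claim I would use decoder fidelity, formulated as $Q_P(\widehat S(\eps^\star))\le\kappa_q m_P(b)$ with probability tending to one. This is exactly the hypothesis of the upper bound in Theorem~\ref{thm:lowerbound}. Nearest-neighbor extension of $f|_{\widehat S}$ then gives risk at most $2\kappa_qL^2m_P(b)$, matching the lower bound up to constants.

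\emph{Main obstacle.} I expect the hardest step to be making ``finite-pool accuracy'' and ``decoder fidelity'' genuinely follow from the construction rather than be assumed. In particular, one must relate the round-robin decoding of an entropic plan at temperature $\eps^\star\asymp m$ to the quantization cost. I would first try to bound the expected cost of each decoded point against its anchor using the entropic bias $O(\eps\log b)$ plus the anchor's cluster radius. If that fails, I would state fidelity explicitly as a hypothesis, as the theorem's wording permits.
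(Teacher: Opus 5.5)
Your argument is correct, and for the first three claims it follows the paper. Uniqueness goes by peeling off singletons with mixture consistency and then using homogeneity of $T_1$, which also forces $T_1(0)=0$. The canonical game is handled by the same single-crossing argument for $\psi$; the paper additionally derives $B_m$ and $S_m$ by solving the $2\times2$ balanced EOT in closed form, but you may take those formulas as the definition of the game. Consistency is the same ratio bound, except that the paper splits your hypothesis $\widehat m(b)/m_P(b)\to1$ into a uniform deviation $r_{n,b}$ and a clustering gap $\xi_{n,b}$, both $o(m_P(b))$. Your boundedness-and-Lipschitz justification makes this automatic only for fixed $b$; when $m_P(b)\to0$, accuracy at that relative scale is not automatic, which is why the paper assumes it.

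The real difference is the last claim. You read decoder fidelity as $Q_P(\widehat S(\eps^\star))\le\kappa_qm_P(b)$ with high probability, so the rate follows immediately from Theorem~\ref{thm:lowerbound}. The paper assumes something more primitive: a probability-weighted anchor-to-query displacement bound $\sum_kp_k(\widehat{\boldsymbol\mu})C(\widehat\mu_k,s_k(t))\le\kappa_dm_P(b)$, uniformly over a window $t\in[t_-,t_+]$ containing $c$. It then derives fidelity from the transfer inequality $C(z,s_k)\le2C(z,\mu_k)+2C(\mu_k,s_k)$, which comes from $d^2=2C$ and $(x+y)^2\le2x^2+2y^2$. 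Integrating over Voronoi cells and combining with $R_P(\widehat{\boldsymbol\mu})\le m_P(b)+2r_{n,b}+\xi_{n,b}$ gives fidelity for any $\kappa_q>2(1+\kappa_d)$. This step needs the population risk of the fitted anchors, not just $\widehat m(b)=R_n(\widehat{\boldsymbol\mu})$. So your single ratio hypothesis would not suffice on that route; the uniform deviation is used a second time.

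Your route is shorter but assumes exactly the quantity the rate depends on. The paper's route isolates the only decoder-specific loss, namely how far round-robin moves each anchor to a distinct pool point. Your fallback via an $O(\eps\log b)$ entropic bias would not by itself supply that. It bounds the plan's total cost, not which index maximizes column $k$ of $\pi_\eps$ (which depends on the row potentials), nor what remains after earlier anchors have claimed points.
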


Together, these results connect population resolution to finite-pool
scale estimation. They also connect that estimate to the decoded
subset. This establishes a direct path from unlabeled geometry to the
final selector.

\paragraph{Algorithm and Complexity.}
The complete procedure follows the same logic as the derivation.
First, it fits one anchor for each available annotation. Second, it
measures how well these anchors represent the pool and computes
$\eps^\star$ from Equation~\eqref{eq:eps_star}. Third, it uses
Sinkhorn scaling to obtain a balanced allocation. Finally, the
round-robin decoder selects distinct pool samples while giving every
anchor an opportunity to contribute. Algorithm~\ref{alg:eps-as}
summarizes these steps.  With
$T_{\mathrm{km}}$ Lloyd iterations, the leading time is
$\mathcal O(nbDT_{\mathrm{km}})$. The numerical floor must scale with the
cost and satisfy $\eps_{\min}=o(m_P(b))$ asymptotically.

\begin{algorithm}[!t]
\caption{$\eps$-AS}
\label{alg:eps-as}
\textbf{Input:} nonzero features $\{x_i\}_{i=1}^n$, budget $b$, calibrated
$c$, tolerance $\eta$, cap $T_{\mathrm{sk}}$, numerical floor
$\eps_{\min}$.\\
\textbf{Output:} $S\subseteq[n]$, $|S|=b$.
\begin{algorithmic}[1]
\STATE Set $z_i\leftarrow x_i/\|x_i\|_2$.
\STATE Run $K=b$ $k$-means with deterministic empty-cell repair.
Replace a zero-norm centroid by its cell medoid, then normalize
$\widehat\mu_k\leftarrow\widehat\mu_k/\|\widehat\mu_k\|_2$.
\STATE Set $C_{ik}\leftarrow1-\langle z_i,\widehat\mu_k\rangle$ and
$\widehat m\leftarrow n^{-1}\sum_i\min_k C_{ik}$.
\STATE Set $\eps^\star\leftarrow\max\{c\widehat m,\eps_{\min}\}$.
\STATE Set $a_i\leftarrow1/n$, $u_k\leftarrow1/b$, and initialize
dual potentials $f_i,g_k\leftarrow0$.
\FOR{$t=1,\ldots,T_{\mathrm{sk}}$}
\STATE Update
$f_i\leftarrow\eps^\star\log a_i
-\eps^\star\log\sum_k
\exp((g_k-C_{ik})/\eps^\star)$.
\STATE Update
$g_k\leftarrow\eps^\star\log u_k
-\eps^\star\log\sum_i
\exp((f_i-C_{ik})/\eps^\star)$.
\STATE Stop if the maximum absolute change of either dual potential
is below $\eta$.
\ENDFOR
\STATE Recover
$\pi_{ik}\leftarrow
\exp((f_i+g_k-C_{ik})/\eps^\star)$.
\STATE In cyclic anchor order, repeatedly add the unselected index
with largest $\pi_{ik}$, using one fixed index order for ties, until
$|S|=b$.
\STATE \textbf{return} $S$.
\end{algorithmic}
\end{algorithm}

\section{VI. Experiments}
\label{sec:experiments}
\textbf{Roadmap.} In this section, we first introduce the experimental setup. Then, we report the main results, component
ablations, sensitivity analysis, and selection efficiency results.

\subsubsection{Experimental Setup}
We evaluate on MNIST \citep{lecun1998mnist}, CIFAR-10, CIFAR-100
\citep{krizhevsky2009cifar}, STL-10 \citep{coates2011stl10},
Caltech-101 \citep{feifei2004caltech101}, and ImageNet-1k
\citep{deng2009imagenet}. Table~\ref{tab:datasets} summarizes these
datasets. We use the official data splits for all datasets except
Caltech-101. For Caltech-101, we create stratified $7{:}3$ splits.
Each image is encoded once with a frozen DINOv2-G/14 model
\citep{oquab2024dinov2}, which yields the feature dimension
$D=1536$. We train a linear classifier with one output per class
using only the selected labels. Each experimental cell runs on a
single NVIDIA H200 GPU. We compare $\eps$-AS with nine baselines:
Random selection, Coreset
\citep{sener2018coreset}, FPS \citep{jin2022oneshot}, cold-start
BADGE \citep{ash2020badge}, ProbCover \citep{yehuda2022probcover},
DCoM \citep{mishal2024dcom}, TypiClust
\citep{hacohen2022typiclust}, USL \citep{wang2022usl}, and ActiveFT
\citep{xie2023activeft}. The dense ProbCover implementation is not
feasible on ImageNet-1k and is omitted on that dataset.
The rule in Equation~\eqref{eq:eps_star} leaves one dimensionless
constant $c$ to be calibrated. We use MNIST as the sole calibration
domain and evaluate $c\in\{0.1,0.2,\ldots,0.9\}$. For each candidate, the calibration score is the mean top-1 accuracy over the four MNIST budgets and three seeds. This single search selects $c=0.4$. We then fix that value for
all budgets on CIFAR-10, CIFAR-100, STL-10, Caltech-101, and
ImageNet-1k. No labels from these five target datasets are used to
select or update $c$, and no dataset-specific retuning is performed.

\begin{table}[htbp]
\small
\centering
\setlength{\tabcolsep}{4pt}
\caption{Datasets and annotation budgets.}
\label{tab:datasets}
\resizebox{\columnwidth}{!}{
\begin{tabular}{@{}lrrrl@{}}
\toprule
Dataset & $n_{\mathrm{pool}}$ & $n_{\mathrm{test}}$ & Classes & Budget grid \\
\midrule
MNIST & 60{,}000 & 10{,}000 & 10 & 50, 100, 200, 500 \\
CIFAR-10 & 50{,}000 & 10{,}000 & 10 & 50, 100, 200, 500 \\
CIFAR-100 & 50{,}000 & 10{,}000 & 100 & 100, 250, 500, 1000 \\
STL-10 & 5{,}000 & 8{,}000 & 10 & 50, 100, 200, 500 \\
Caltech-101 & 6{,}067 & 2{,}612 & 101 & 100, 200, 500, 1000 \\
ImageNet-1k & 1{,}281{,}167 & 50{,}000 & 1{,}000 & 250, 500, 1000, 2000, 5000, 10000 \\
\bottomrule
\end{tabular}
}
\end{table}

\subsubsection{Main Results}
\label{subsec:exp_main}

\begin{figure*}[t]
\centering
\includegraphics[width=0.97\textwidth]{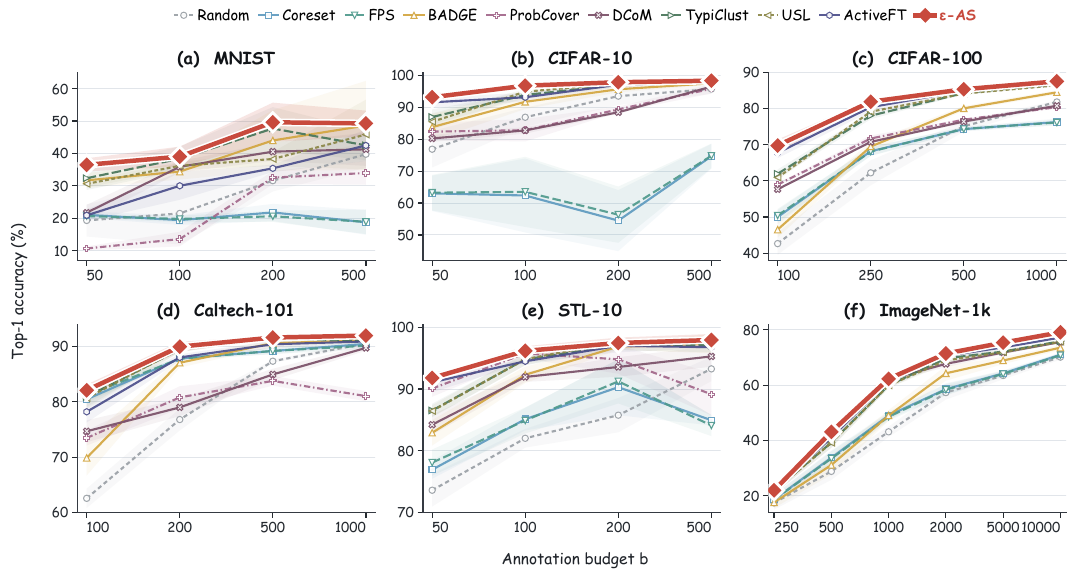}
\caption{\textbf{Top-1 accuracy on six benchmarks.} $\eps$-AS leads
in 25 of 26 dataset and budget settings.}
\label{fig:main}
\end{figure*}

Figure~\ref{fig:main} reports the complete comparison. The strongest
baseline changes with the dataset and annotation budget since different datasets and annotation budgets favor different geometric rules.
In contrast, $\eps$-AS has the highest point estimate in 25 of 26
settings. It also wins 21 of 22 settings after the MNIST calibration
domain is excluded. Its mean gain over the strongest baseline in
each setting is $0.98\%$. These consistent gains show that adapting
a shared allocation rule is more reliable than using a fixed
geometric bias for each new pool. Across the $22$ transfer cells,
$\eps$-AS also has the lowest average standard deviation at
$0.67\%$. The corresponding values are $0.81\%$ for USL and
$0.91\%$ for ActiveFT. On
ImageNet-1k, its standard deviation is the smallest at every budget.
The gains also transfer beyond the small calibration benchmark. On
CIFAR-10, CIFAR-100, and ImageNet-1k, the average improvements over
the strongest baseline are $1.66\%$, $1.18\%$, and $1.29\%$.
ImageNet-1k provides the most demanding test. It contains 1000
classes and more than 1.2 million pool images. The first two budgets
contain fewer queries than classes, which makes complete class
coverage impossible. $\eps$-AS ranks first at all six budgets. Its
advantage reaches $1.71\%$ and $1.89\%$ at budgets 5000 and 10000.
The improvement therefore persists as the budget grows. Overall, $\eps$-AS remains strong in the constrained cold-start
regime and retains its advantage as the scale and budget increase.

\subsubsection{Ablations and Sensitivity}
\label{subsec:exp_ablation}
Full $\eps$-AS combines a soft balanced Sinkhorn coupling with the
geometry-adaptive rule $\eps^\star=c\widehat m(b)$. We compare it
with two hard-selection controls that omit both components. The
$k$-means medoid variant queries the medoid of each cluster while
TypiClust queries a locally typical point from each cluster. The
remaining variants retain the balanced Sinkhorn coupling and
round-robin decoder but replace the adaptive rule with a fixed
$\eps\in\{10^{-3},1.0,0.05\}$ representing sharp, diffuse and
intermediate regimes. Table~\ref{tab:ablations} reports all variants.
On CIFAR-100, Medoid falls from $68.41\%$ at $b=100$ to $14.03\%$ at
$b=1000$, while Full $\eps$-AS rises from $69.77\%$ to $87.53\%$.
TypiClust avoids this collapse, but Full $\eps$-AS is more accurate at
every budget on both datasets. The full soft and adaptive realization therefore improves over both
hard-selection controls. The sharp setting is weak at small
budgets, while the other fixed settings are competitive only in
individual cases. No fixed value matches Full $\eps$-AS across both
datasets and all budgets. The adaptive rule thus provides a more
consistent allocation scale as geometry and budget change. We assess sensitivity to the global multiplier $c$ and Sinkhorn
iteration cap $T_{\mathrm{sk}}$. The former sets
$\eps^\star=c\widehat m(b)$, while the latter controls numerical
accuracy. We vary $c$ on STL-10 and Caltech-101 at every budget,
retaining the MNIST-selected $c=0.4$ without target-specific
retuning. We vary $T_{\mathrm{sk}}$ on Caltech-101 with $c=0.4$.
Figure~\ref{fig:robustness} shows that $c=0.4$ gives the highest
budget-averaged accuracy on both datasets, with gradual changes near
this value. Changing the default $T_{\mathrm{sk}}=200$ alters average
accuracy by at most $0.30\%$ and each budget result by at most
$0.58\%$. Its row-marginal residual is below $10^{-5}$. Performance
is therefore stable to calibration and solver perturbations.

\begin{table*}[htbp]
\small
\centering
\caption{Component ablations on CIFAR-100 and Caltech-101.}
\label{tab:ablations}
\footnotesize
\setlength{\tabcolsep}{2.5pt}
\renewcommand{\arraystretch}{1.0}
\begin{tabular}{@{}lcc*{8}{c}@{}}
\toprule
& & & \multicolumn{4}{c}{\textbf{CIFAR-100}} &
\multicolumn{4}{c}{\textbf{Caltech-101}} \\
\cmidrule(lr){4-7}\cmidrule(lr){8-11}
Variant & Soft & Adaptive $\eps$
& $b{=}100$ & $b{=}250$ & $b{=}500$ & $b{=}1000$
& $b{=}100$ & $b{=}200$ & $b{=}500$ & $b{=}1000$ \\
\midrule
\rowcolor{black!10}
Full & \cmark & \cmark
& \textbf{69.77$\pm$1.66} & \textbf{81.88$\pm$0.60} & 85.33$\pm$0.71 & \textbf{87.53$\pm$0.61}
& \textbf{82.08$\pm$1.04} & \textbf{89.97$\pm$0.89} & \textbf{91.62$\pm$0.24} & \textbf{91.97$\pm$0.66} \\
Medoid & \xmark & \xmark
& 68.41$\pm$1.30 & 63.27$\pm$0.91 & 37.77$\pm$2.35 & 14.03$\pm$2.65
& 81.19$\pm$1.78 & 82.03$\pm$1.67 & 80.59$\pm$2.11 & 74.65$\pm$1.94 \\
TypiClust & \xmark & \xmark
& 61.95$\pm$1.21 & 78.02$\pm$0.96 & 84.28$\pm$0.36 & 86.88$\pm$0.59
& 80.89$\pm$1.85 & 89.47$\pm$0.46 & 91.52$\pm$0.05 & 90.98$\pm$0.07 \\
$\eps{=}10^{-3}$ & \cmark & \xmark
& 49.90$\pm$1.00 & 68.88$\pm$1.55 & 81.19$\pm$1.52 & 86.06$\pm$0.63
& 78.02$\pm$1.48 & 88.81$\pm$0.30 & 90.36$\pm$0.81 & 91.48$\pm$0.28 \\
$\eps{=}1.0$ & \cmark & \xmark
& 68.48$\pm$1.39 & 81.59$\pm$0.34 & 85.27$\pm$0.46 & 87.29$\pm$0.62
& 81.99$\pm$2.03 & 89.48$\pm$0.63 & 91.61$\pm$0.14 & 91.14$\pm$0.15 \\
$\eps{=}0.05$ & \cmark & \xmark
& 68.21$\pm$0.67 & 81.65$\pm$0.33 & \textbf{85.39$\pm$0.68} & 87.45$\pm$0.89
& 81.65$\pm$2.47 & 89.37$\pm$0.89 & 91.57$\pm$0.40 & 91.39$\pm$0.59 \\
\bottomrule
\end{tabular}
\end{table*}

\begin{figure*}[t]
\centering
\includegraphics[width=\textwidth]{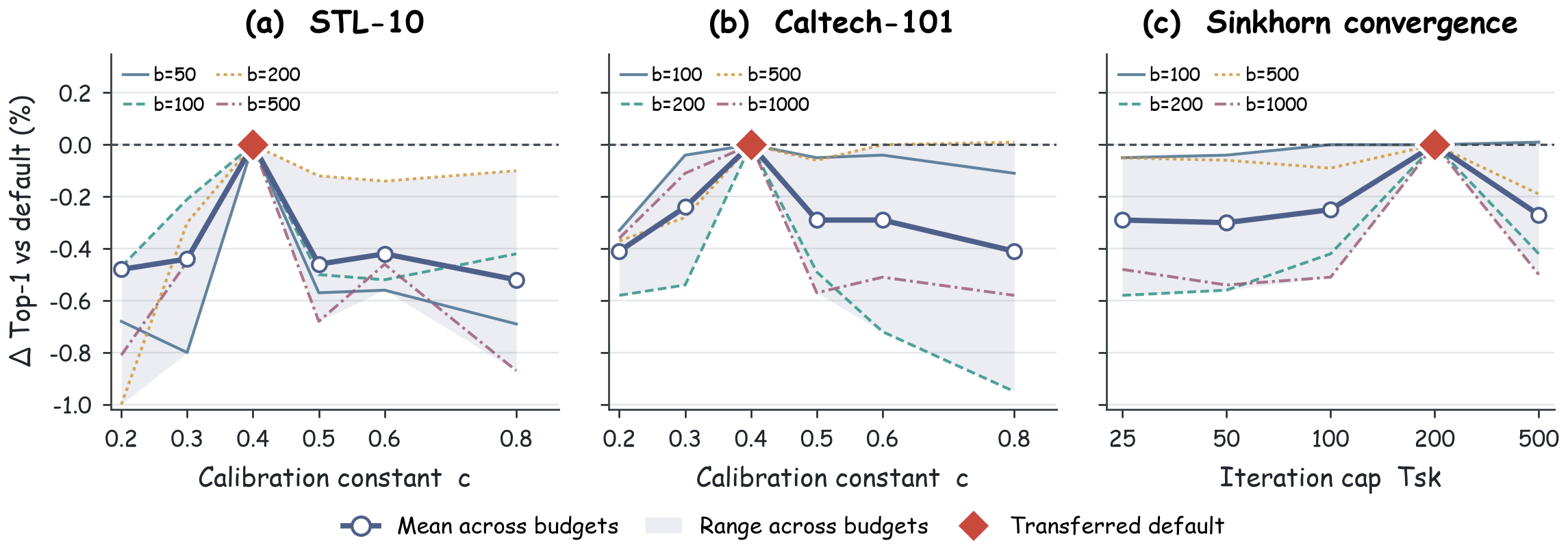}
\caption{\textbf{Sensitivity to the calibration constant and Sinkhorn
iteration cap.} The dark curve reports their mean, and the shaded band spans the
budget-level range. The figure shows changes in top-1 accuracy
relative to $c=0.4$ and $T_{\mathrm{sk}}=200$.}
\label{fig:robustness}
\end{figure*}

\subsubsection{Selection Efficiency}

Figure~\ref{fig:imagenet_pareto} pairs each method's mean ImageNet-1k
accuracy with its mean selection time. Both values are averaged over
six budgets. $\eps$-AS provides the strongest accuracy and efficiency
combination among the competitive methods. It improves the average
accuracy over ActiveFT by $1.29\%$. It also reduces selection time
from $55.41$ to $24.28$ seconds, which gives a $2.3\times$ speedup.
DCoM and USL require more time and achieve lower accuracy. TypiClust
is faster, but its average accuracy is $2.45\%$ lower. In summary, $\eps$-AS remains faster and more accurate
than the optimization-heavy baselines.

\begin{figure}[!b]
\centering
\includegraphics[width=\columnwidth]{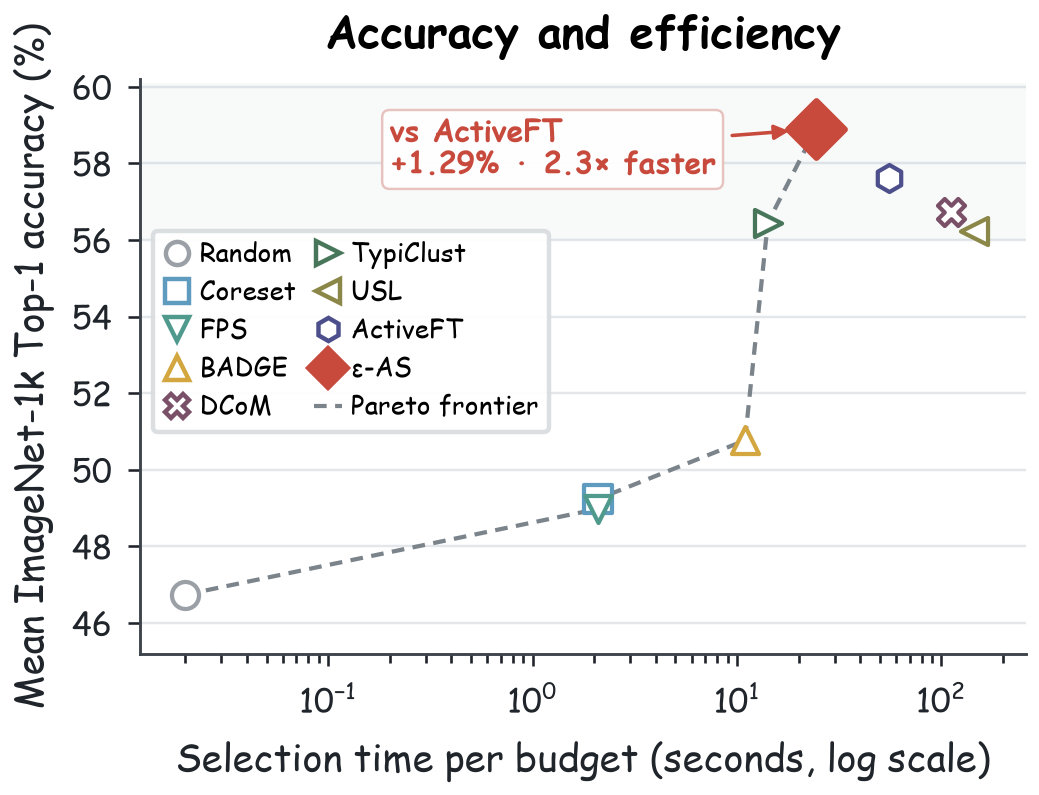}
\caption{\textbf{Accuracy--efficiency trade-off on ImageNet-1k} averaged over annotation budgets and seeds.}
\label{fig:imagenet_pareto}
\end{figure}

\subsubsection{Discussion}
\label{sec:discussion}
Theorem~\ref{thm:unification} changes the question posed by CSAL.
TypiClust, ProbCover, and ActiveFT are distinct realizations of one
allocation program. The central problem
is therefore not which named method to choose, but which part of the
realization should adapt when labels are unavailable. Their varying
behavior across datasets and budgets suggests that no fixed geometric
bias is uniformly appropriate. Our analysis identifies the allocation scale as an unlabeled and adaptable quantity. The closest-anchor cost $m_P(b)$ measures the
resolution supported by the pool at budget $b$, and
$\eps^\star=c\widehat m(b)$ matches entropic smoothing to that
resolution. This reframes CSAL from choosing among fixed heuristics to
adapting a common allocation geometry. More broadly, it suggests that
geometric scale should be adapted before additional task-specific
selection rules are introduced. Eliminating the remaining global
calibration and extending this principle beyond frozen representations
are important directions for future work.

\section{VII. Conclusion}
\label{sec:conclusion}

In this paper, we introduced a generalized transport framework that subsumes
representative CSAL methods. Within this framework, we proved two
complementary results. We showed that entropic regularization induces
a monotone trade-off between geometric fit and allocation
diffuseness. We also established a task-agnostic minimax bound that
identifies the closest-anchor cost as the resolution governing
cold-start selection. Guided by these results, we developed
$\eps$-AS, which adapts its regularization to the unlabeled pool and
budget. Experiments show that $\eps$-AS achieves the best overall accuracy while retaining practical
selection efficiency.

\onecolumn
\raggedbottom
\appendix
\setcounter{secnumdepth}{2}
\counterwithin{figure}{section}
\counterwithin{table}{section}
\counterwithin{equation}{section}
\let\assumption\assumptionApp   \let\endassumption\endassumptionApp
\let\theorem\theoremApp         \let\endtheorem\endtheoremApp
\let\lemma\lemmaApp             \let\endlemma\endlemmaApp
\let\proposition\propositionApp \let\endproposition\endpropositionApp
\let\corollary\corollaryApp     \let\endcorollary\endcorollaryApp
\let\definition\definitionApp   \let\enddefinition\enddefinitionApp
\let\remark\remarkApp           \let\endremark\endremarkApp
\csalwraptcb{assumption}{thmbox}
\csalwraptcb{theorem}{thmbox}
\csalwraptcb{lemma}{thmbox}
\csalwraptcb{proposition}{thmbox}
\csalwraptcb{corollary}{thmbox}
\csalwraptcb{definition}{thmbox}
\hypersetup{citecolor=suppteal,linkcolor=suppnavy,urlcolor=suppteal}
\titleformat{\section}
  {\Large\bfseries\color{suppnavy}}
  {\thesection}{0.75em}{}
  [\vspace{0.25em}{\color{supprule}\titlerule}]
\titlespacing*{\section}{0pt}{1.6em}{0.85em}
\titleformat{\subsection}
  {\normalsize\bfseries\color{suppnavy}}
  {\thesubsection}{0.7em}{}
\titlespacing*{\subsection}{0pt}{1.45em}{0.55em}
\titleformat{\subsubsection}
  {\normalsize\bfseries\color{suppgray}}
  {\thesubsubsection}{0.65em}{}
\titlespacing*{\subsubsection}{0pt}{1.2em}{0.45em}
\titleformat{\paragraph}[runin]
  {\bfseries\color{suppnavy}}
  {}{0pt}{}
\titlespacing*{\paragraph}{0pt}{0.9em}{0.5em}
\titlecontents{section}
  [0pt]
  {\addvspace{0.55em}\bfseries\color{suppnavy}}
  {\contentslabel{1.8em}}
  {}
  {\titlerule*[0.65pc]{.}\contentspage}
\titlecontents{subsection}
  [2.0em]
  {\small\color{black!82}}
  {\contentslabel{2.8em}}
  {}
  {\titlerule*[0.65pc]{.}\contentspage}
\captionsetup{
  font=small,
  labelfont={bf,color=suppnavy},
  textfont={color=black!88},
  justification=raggedright,
  singlelinecheck=false,
  skip=5pt
}
\captionsetup[table]{position=top}
\renewcommand{\topfraction}{0.95}
\renewcommand{\bottomfraction}{0.9}
\renewcommand{\textfraction}{0.05}
\renewcommand{\floatpagefraction}{0.7}
\setcounter{topnumber}{3}
\setcounter{bottomnumber}{2}
\setcounter{totalnumber}{5}
\renewcommand{\arraystretch}{1.08}
\setlist{nosep,leftmargin=1.7em}
\setlength{\emergencystretch}{2em}
\clubpenalty=10000
\widowpenalty=10000
\displaywidowpenalty=10000
\setlength{\headheight}{14pt}
\setlength{\headsep}{25pt}
\addtolength{\textheight}{-39pt}
\fancyhf{}
\fancyhead[L]{\footnotesize\color{suppgray}\textsc{A Unified Optimal Transport View of Cold-Start Active Learning}}
\fancyhead[R]{\footnotesize\color{suppgray}Supplementary Material}
\fancyfoot[C]{\footnotesize\color{suppgray}\thepage}
\renewcommand{\headrulewidth}{0.45pt}
\renewcommand{\footrulewidth}{0pt}
\makeatletter
\renewcommand{\headrule}{
  {\color{supprule}\hrule height \headrulewidth width\headwidth}
  \vskip-\headrulewidth}
\makeatother
\pagestyle{fancy}

\phantomsection
\addcontentsline{toc}{section}{Supplementary Material}

\begingroup
  \setlength{\parskip}{0.2em}
  \hypersetup{linkcolor=suppnavy}
  \renewcommand{\contentsname}{\large\bfseries\color{suppnavy}Contents}
  \tableofcontents
\endgroup

\vspace{0.45em}{\color{supprule}\hrule}\vspace{0.55em}
\clearpage

\section{Appendix Overview and Notation Table}
\label{app:overview}

This appendix collects the full proofs of the main-text theoretical
results, an extended discussion of related work, and the experimental
details omitted from the main text for space.

\paragraph{Notation table.}
\begin{center}\small
\setlength{\tabcolsep}{6pt}
\begin{tabular}{@{}ll@{}}
\toprule
\rowhead
Symbol & Meaning \\
\midrule
$\phi$, $z_i$ & frozen feature, unit-normalized feature \\
$\mu_U$ & empirical pool $\tfrac{1}{n}\sum_i\delta_{z_i}$ \\
$a$, $u_b$ & uniform pool masses $(1/n)_i$, uniform target masses $(1/b)_k$ \\
$\mathfrak R$ & a realization of the transport--selection program \\
$\Theta_{\mathfrak R}$ & feasible representatives in realization $\mathfrak R$ \\
$\Pi_{\mathfrak R}$ & feasible allocation plans in realization $\mathfrak R$ \\
$\mathcal D_{\mathfrak R}$ & decoder from a variational solution to a size-$b$ subset \\
$C_{ij}$ & cosine cost $1-\langle z_i,\mu_j\rangle\in[0,2]$ \\
$\Pi(\alpha,\beta)$ & couplings with marginals $\alpha,\beta$ \\
$\Hreg(\pi)$ & entropy $\sum_{ij}\pi_{ij}(\log\pi_{ij}-1)$ \\
$\OT_\eps$ & entropic optimal-transport cost \\
$d_{\min}(z)$ & $\min_k C(z,\mu_k)$ \\
$m_P(b)$ & optimal population $b$-representative cost \\
$Q_P(S)$ & population quantization cost of decoded set $S$ \\
$d_{\mathrm{int}}$ & intrinsic dimension via $m(b)\propto b^{-2/d_{\mathrm{int}}}$ \\
$\mathcal R_b^\star(P)$ & task-agnostic Lipschitz minimax risk \\
$\widehat m(b)$ & empirical closest-anchor cost \\
$\eps^\star$ & data-adaptive scale $c\cdot\widehat m(b)$ \\
$\widehat S(\eps)$ & selection from \meqref{eq:roundrobin} at $\eps$ \\
\bottomrule
\end{tabular}
\end{center}

\paragraph{Generic notation.}
Beyond the problem-specific symbols above, we use standard
mathematical notation throughout. We write $[n]=\{1,\dots,n\}$ for the
first $n$ integers and $\binom{[n]}{b}$ for the family of all size-$b$
subsets of $[n]$. For sets $A,B$, $A\setminus B$ is the set difference
and $A\triangle B=(A\setminus B)\cup(B\setminus A)$ is the symmetric
difference. Feature vectors live in the $D$-dimensional Euclidean space
$\R^D$, on which $\langle\cdot,\cdot\rangle$ is the inner product,
$\|\cdot\|$ the induced Euclidean norm, and
$\mathbb S^{D-1}=\{x\in\R^D:\|x\|=1\}$ the unit sphere. We write
$z\sim P$ to mean that $z$ is drawn from the distribution $P$, and
$\E$ for expectation. The symbol $\delta_z$ denotes the Dirac point
mass at $z$, that is, the probability measure placing all of its mass
at the single point $z$, so that $\tfrac1n\sum_i\delta_{z_i}$ is the
empirical distribution of the pool. For asymptotics we use the
Bachmann--Landau symbols: $f=O(g)$ means $f\le c\,g$ for some constant
$c$ and all large enough arguments, $f=o(g)$ means $f/g\to 0$,
$f=\Theta(g)$ means $c_1 g\le f\le c_2 g$ for constants
$0<c_1\le c_2$, and $f\propto g$ means $f=c\,g$ for a constant $c$.
Their in-probability analogs $O_p$ and $o_p$ are defined at their
first use. The limit $\eps\downarrow 0$ (equivalently $\eps\to 0^+$)
means $\eps$ decreases to $0$ through positive values, and
$\Gamma$-convergence denotes the standard variational notion of
functional convergence under which cluster points of minimizers of
the approximating functionals are minimizers of the limit.

\paragraph{Quantization notation.}
For unit vectors $z,\mu\in\mathbb S^{D-1}$, put
\[
 C(z,\mu):=1-\langle z,\mu\rangle,
 \qquad
 d(z,\mu):=\|z-\mu\|_2=\sqrt{2C(z,\mu)}.
\]
For an anchor tuple
$\boldsymbol\mu=(\mu_1,\ldots,\mu_b)\in(\mathbb S^{D-1})^b$, define
\[
 R_P(\boldsymbol\mu)
 :=
 \E_{z\sim P}\!\left[\min_{k\in[b]}C(z,\mu_k)\right],
 \qquad
 R_n(\boldsymbol\mu)
 :=
 \frac1n\sum_{i=1}^n\min_{k\in[b]}C(z_i,\mu_k),
\]
and
\begin{equation}\label{eq:population_resolution_app}
 m_P(b):=
 \inf_{\boldsymbol\mu\in(\mathbb S^{D-1})^b}
 R_P(\boldsymbol\mu).
\end{equation}
We write $m(b)$ when $P$ is clear.  Let
$\widehat{\boldsymbol\mu}
=(\widehat\mu_1,\ldots,\widehat\mu_b)$ be the normalized anchors
returned by the clustering implementation and set
$\widehat m(b):=R_n(\widehat{\boldsymbol\mu})$.

For a finite set $S\subset\mathbb S^{D-1}$, define
\[
 Q_P(S):=
 \E_{z\sim P}\!\left[\min_{s\in S}C(z,s)\right].
\]
For an anchor tuple $\boldsymbol\mu$, use the smallest anchor index to
break Voronoi ties and put
\[
 a_{\boldsymbol\mu}(z)
 :=
 \min\!\left\{
 k\in[b]:
 C(z,\mu_k)=\min_{\ell\in[b]}C(z,\mu_\ell)
 \right\},
 \qquad
 p_k(\boldsymbol\mu)
 :=
 P\!\left(a_{\boldsymbol\mu}(z)=k\right).
\]
For $t>0$, let $s_k(t)$ be the round-robin selection for anchor $k$ at
temperature $t\,\widehat m(b)$, with $s_k(t):=z_k$ if
$\widehat m(b)=0$.  Thus
\[
 \widehat S(t\,\widehat m(b))
 =
 \{s_1(t),\ldots,s_b(t)\}.
\]

\begin{definition}[Variational subsumption]
\label{def:variational_subsumption}
Let a selection method $M$ be specified by an objective, a feasible
set, and a deterministic rule that maps an optimizer to a subset.
A transport--selection program \emph{variationally subsumes} $M$
if it has a realization $\mathfrak R_M$ for which:
\begin{enumerate}[label=(\roman*),leftmargin=2.1em]
\item the feasible variables of $\mathfrak R_M$ are exactly the
      variables optimized by $M$, possibly together with allocation
      variables that can be minimized out in closed form;
\item after those allocation variables are minimized out, the
      program objective equals the objective of $M$ up to a
      positive multiplicative constant and an additive constant
      independent of the decision variables; and
\item the program decoder $\mathcal D_{\mathfrak R_M}$ is the
      selection rule of $M$.
\end{enumerate}
Positive rescaling and decision-independent translation preserve the
set of minimizers, so this definition implies equality of the
selected subsets whenever the common deterministic tie convention is
used.
\end{definition}

\clearpage
\subsection{Theoretical Results and Logical Dependencies}
\label{app:theory_map}

The theoretical development contains exact finite-pool identities,
nonasymptotic minimax bounds, and conditional asymptotic conclusions.
Table~\ref{tab:theory_dependencies} records the logical status of each
result and the conditions used at that stage.

\begin{table}[H]
\small
\centering
\caption{Logical status and dependencies of the theoretical results.}
\label{tab:theory_dependencies}
\setlength{\tabcolsep}{3.5pt}
\renewcommand{\arraystretch}{1.15}
\begin{tabular}{@{}
>{\raggedright\arraybackslash}p{0.155\linewidth}
>{\raggedright\arraybackslash}p{0.175\linewidth}
>{\raggedright\arraybackslash}p{0.275\linewidth}
>{\raggedright\arraybackslash}p{0.305\linewidth}
@{}}
\toprule
\rowhead
Result & Logical status & Conditions & Conclusion \\
\midrule
\mthm{thm:unification}
& Exact finite-pool identity
& A finite pool, $2\le b\le n$, and fixed repair and tie conventions
& The reduced objectives and deterministic decoders recover the
evaluated TypiClust, ProbCover, and ActiveFT formulations. \\
\clem{lem:anchorspacing}
& Exact finite-dimensional result
& A fixed finite cost matrix, balanced marginals, and $\eps>0$
& The EOT plan is unique, its KL decreases and its transport cost
increases with $\eps$, and both temperature endpoints are identified. \\
\addlinespace[2pt]
\mthm{thm:lowerbound}, lower bound
& Nonasymptotic minimax bound
& \cassum{assumption:feature_geometry}
& Every deterministic task-agnostic selector and learner incurs risk
at least $(L^2/2)m_P(b)$ on the Lipschitz task class. \\
\mthm{thm:lowerbound}, upper bound
& Nonasymptotic attainable bound
& A selected set with $Q_P(S)\le\kappa_qm_P(b)$ and
nearest-neighbor readout
& The worst-case risk is at most $2\kappa_qL^2m_P(b)$. \\
\addlinespace[2pt]
\mthm{thm:eps_star}, scale statistic
& Exact finite-pool characterization
& The permutation, positive-scale, mixture-consistency, and
normalization axioms in \cdefn{def:scale_rule_app}
& The admissible closest-anchor statistic is uniquely
$c\,\widehat m(b)$. \\
Canonical calibration game
& Exact normalized optimization
& Fixed dimensionless weights $A,\lambda,\kappa>0$ and $m_P(b)>0$
& The game has a unique interior minimizer
$c_{\mathrm{can}}(A,\lambda,\kappa)m_P(b)$. \\
\addlinespace[2pt]
\mthm{thm:eps_star}, consistency
& Conditional asymptotic conclusion
& Assumptions~\ref{assumption:feature_geometry}
and~\ref{assumption:sampling}--\ref{assumption:approximate_clustering}
& $\eps^\star/m_P(b)\to c$ in probability, with
$\eps^\star=\Theta_p(b^{-2/d_{\mathrm{int}}})$ under
\cassum{assumption:intrinsic_dim}. \\
$\eps$-AS decoded rate
& Conditional asymptotic conclusion
& The preceding statistical conditions,
\cassum{assumption:localized_decoding}, and
$c\in[t_-,t_+]$
& The decoded set is quantization-faithful with high probability and,
with nearest-neighbor readout, attains the minimax rate. \\
\bottomrule
\end{tabular}
\end{table}

The first two rows are algebraic properties of finite optimization
problems and require no distributional approximation. The two
minimax rows separate the unavoidable geometric resolution from the
property sufficient to attain it. The remaining rows separate the
choice of scale statistic, the dimensionless calibration, its
population consistency, and the final decoder-dependent rate.
Accordingly, the logical path is
\[
\text{structural realization}
\ \longrightarrow\
\text{geometric resolution}
\ \longrightarrow\
\text{empirical scale}
\ \longrightarrow\
\text{decoded selection}.
\]

\paragraph{Connection to experimental evaluation.}
The minimax analysis and the experiments assess complementary levels
of the selection problem.  Theorem~\mref{thm:lowerbound} uses
nearest-neighbor readout as a constructive learner to establish an
attainable geometric rate over the Lipschitz task class.  The
experiments instead apply the same linear-probe protocol to every
selected subset and measure its utility for multiclass
classification.  Neither the probe labels nor its predictions enter
the selector.  Thus $\eps$-AS adapts to the geometry of the unlabeled
pool and the annotation budget while remaining task agnostic.  In
parallel, Theorem~\mref{thm:eps_star} identifies the
scale-equivariant form $c\,\widehat m(b)$, while the single
dimensionless constant used in the experiments is fixed by the MNIST
calibration protocol.

\clearpage
\section{Assumptions and Justifications}
\label{app:assumptions}

The assumptions below concern the feature distribution, the accuracy
of the empirical clustering objective, and the distance between each
anchor and its decoded point.  Their geometric and statistical
justifications follow the six statements.

\begin{assumption}[Normalized feature geometry and nondegeneracy]
\label{assumption:feature_geometry}
For every distribution $P$ under consideration, the normalized
feature extractor satisfies $\|\phi(x)\|_2>0$ almost surely, and
$z=\phi(x)/\|\phi(x)\|_2$ is supported on
$\mathbb S^{D-1}$.  For every admissible budget $b$, the distribution
$P$ is not supported on a set of at most $b$ points.
\end{assumption}

\begin{assumption}[Intrinsic-dimension scaling]
\label{assumption:intrinsic_dim}
There are constants $0<c_1\le c_2<\infty$ and
$d_{\mathrm{int}}>0$, independent of $b$, such that
\begin{equation}\label{eq:m_scaling}
 c_1b^{-2/d_{\mathrm{int}}}
 \le m_P(b)
 \le c_2b^{-2/d_{\mathrm{int}}}
\end{equation}
throughout the budget range in which the theory is invoked.
\end{assumption}

\begin{assumption}[Sampling regime]
\label{assumption:sampling}
For each $n$, the unlabeled pool
$U_n=(z_1,\ldots,z_n)$ is drawn from $P^n$.  Any clustering
initialization is independent of $U_n$.  Probability limits are taken
as $n\to\infty$ along a budget sequence $b=b_n$ with
$2\le b_n\le n$.  Budget-rate statements additionally use
$b_n\to\infty$.
\end{assumption}

\begin{assumption}[Uniform empirical approximation]
\label{assumption:uniform_accuracy}
There is a deterministic sequence $r_{n,b}\ge0$ such that
\begin{equation}\label{eq:uniform_empirical_accuracy}
 \Pr\!\left[
   \sup_{\boldsymbol\mu\in(\mathbb S^{D-1})^b}
   |R_n(\boldsymbol\mu)-R_P(\boldsymbol\mu)|
   \le r_{n,b}
 \right]\longrightarrow1,
 \qquad
 \frac{r_{n,b}}{m_P(b)}\longrightarrow0.
\end{equation}
\end{assumption}

\begin{assumption}[Approximate clustering]
\label{assumption:approximate_clustering}
There is a deterministic sequence $\xi_{n,b}\ge0$ such that
\begin{equation}\label{eq:clustering_accuracy}
 \Pr\!\left[
 R_n(\widehat{\boldsymbol\mu})
 \le
 \inf_{\boldsymbol\mu\in(\mathbb S^{D-1})^b}
 R_n(\boldsymbol\mu)+\xi_{n,b}
 \right]\longrightarrow1,
 \qquad
 \frac{\xi_{n,b}}{m_P(b)}\longrightarrow0.
\end{equation}
\end{assumption}

\begin{assumption}[Local anchor-to-query displacement]
\label{assumption:localized_decoding}
There are constants $0<t_-<t_+<\infty$ and
$0\le\kappa_d<\infty$, independent of $b$, such that
\begin{equation}\label{eq:localized_decoding}
 \Pr\!\left[
 \sup_{t\in[t_-,t_+]}
 \frac{
 \sum_{k=1}^b
 p_k(\widehat{\boldsymbol\mu})
 C(\widehat\mu_k,s_k(t))
 }{m_P(b)}
 \le\kappa_d
 \right]\longrightarrow1.
\end{equation}
\end{assumption}

\paragraph{Justification of the assumptions.}
Feature normalization gives the spherical support in
\cassum{assumption:feature_geometry}.  The nondegeneracy condition requires
the feature distribution to contain more than $b$ support points.  The
objective in \eqref{eq:population_resolution_app} is continuous on the
compact set $(\mathbb S^{D-1})^b$, so its infimum is attained.  On the
unit sphere, this attained value is zero exactly when the distribution
is supported on at most $b$ points.  Thus the stated nondegeneracy
condition is equivalent to $m_P(b)>0$.

\cassum{assumption:intrinsic_dim} is the standard squared-distance
quantization behavior for a regular low-dimensional distribution. A
sufficient geometric setting is a compact
$d_{\mathrm{int}}$-dimensional support whose probability mass in
small metric balls is bounded above and below by constant multiples
of the radius to the power $d_{\mathrm{int}}$. The relation
$C(z,\mu)=d(z,\mu)^2/2$ then gives the exponent
$2/d_{\mathrm{int}}$. Appendix~\ref{app:intrinsic_dim} reports the
corresponding label-free finite-budget fits.

\cassum{assumption:sampling} is the standard random-design model for
an unlabeled pool.  Independence of the clustering initialization
separates algorithmic randomness from sampling randomness.

For fixed $b$ and $D$, boundedness of $C$ and compactness of
$(\mathbb S^{D-1})^b$ give uniform convergence of $R_n$ to $R_P$.
\cassum{assumption:uniform_accuracy} extends this standard property to
the stated growing-budget regime by requiring its error to be smaller
than the population resolution.

\cassum{assumption:approximate_clustering} separates clustering
optimization from statistical approximation.  An exact empirical
optimizer has $\xi_{n,b}=0$, while the stated condition also covers a
finite-run solver whose objective gap is negligible relative to
$m_P(b)$.

\cassum{assumption:localized_decoding} is a local geometric condition
on the discretization of the anchors.  It involves only the
probability mass of each anchor cell and the distance from that anchor
to its decoded point, without reference to labels, a downstream
predictor, or a task loss.  A simple sufficient condition is
\[
 \Pr\!\left[
 \sup_{t\in[t_-,t_+]}\max_{k\in[b]}
 \frac{C(\widehat\mu_k,s_k(t))}{m_P(b)}
 \le \kappa_d
 \right]\longrightarrow1,
\]
because the cell probabilities sum to one.  Such locality is natural
for a sufficiently dense pool when every anchor has a distinct nearby
candidate that remains available at its round-robin turn.  The
probability-weighted assumption is weaker than this pointwise
condition and permits larger displacements in cells with smaller
population mass.  It therefore isolates the geometric loss incurred
when continuous anchors are converted into distinct pool queries.

\section{Extended Discussion of Related Work}
\label{app:related_works}
The CSAL problem sits at the intersection of several lines of research. We extend detailed discussion of related work in this section. They are organized into warm-start active learning, cold-start
active learning, optimal transport for selection and the geometry of
intrinsic dimension.

\subsection{Warm-Start Active Learning}
Classical active learning assumes an initial labeled seed and a
trained task model, and proceeds in rounds that query the points
expected to be most informative for that
model~\citep{settles2009survey,ren2021survey}. The dominant paradigm
is uncertainty sampling, which queries the points on which the current
model is least confident, using least-confidence or margin
scores~\citep{lewis1994sequential,scheffer2001margin} or predictive
entropy. Bayesian and disagreement-based variants replace the point
estimate by a posterior: BALD and its deep and batch extensions score
points by the expected information gain about the model
parameters~\citep{gal2017deepbayesian,kirsch2019batchbald}, and
deep ensembles approximate the same disagreement
signal~\citep{beluch2018ensembles}. A second family targets
representativeness and diversity rather than uncertainty: the core-set
approach selects a batch that covers the feature
space~\citep{sener2018coreset}, BADGE mixes predictive uncertainty
with diversity in gradient space~\citep{ash2020badge}, and further
methods learn a loss or an adversarial embedding to drive
selection~\citep{yoo2019learningloss,sinha2019vaal,agarwal2020contextdiv}. These methods require either a labeled seed or a trained task model to
evaluate acquisition scores. Their measured gains can also be
sensitive to the initial seed and training protocol, particularly at
small budgets~\citep{mittal2019parting,munjal2022robust}. Cold-start
selection instead constructs the initial batch before task feedback is
available.

\subsection{Cold-Start Active Learning}
Cold-start active learning selects the initial annotation batch from
an unlabeled pool without target labels or a trained task model.
Following the taxonomy in the main text, existing methods can be
organized into three general geometric routes and a fourth group of
task-specific hybrids. \textbf{Typicality-based methods} favor dense or prototypical regions
of the feature space. CALR learns contrastive features on the target
pool, performs agglomerative hierarchical clustering, and selects
high-density representatives~\citep{jin2022calr}. TypiClust applies
$k$-means and selects a locally typical point from each cluster
\citep{hacohen2022typiclust}. USL combines $k$-means anchors,
$k$NN-density typicality, and a regularized selection rule
\citep{wang2022usl}. Self-supervised representation learning and
foundation-model clustering further strengthen the feature geometry
used to identify representative instances
\citep{bengar2021reducing,tan2024foundclust}. \textbf{Coverage-based methods} spread the annotation budget across
feature space so that few regions remain far from the selected set.
CoreSet and farthest-point traversal greedily reduce the distance from
pool points to their nearest selected point~\citep{sener2018coreset}.
ProbCover covers a feature graph with fixed-radius neighborhoods
\citep{yehuda2022probcover}, while DCoM adapts the coverage radius
\citep{mishal2024dcom}. Generalized coverage and herding provide
related objectives for low-budget selection
\citep{bae2024maxherding,bae2025uherding}. $\gamma$-Tube uses the
disagreement structure of tube and ball regions derived from minimum
enclosing balls to guide cold-start sampling
\citep{cao2022gammatube}. \textbf{Diversity-regularized continuous methods} use global objectives
to distribute representatives across the pool. ActiveFT learns
continuous prototypes with an attraction term that represents the
pool and a repulsion term that separates the prototypes
\citep{xie2023activeft}. Wasserstein subset selection optimizes a
discrete support whose empirical distribution matches the pool
\citep{mahmood2022lowbudget}. These objectives promote global
diversity without relying on a local typicality score or a prescribed
coverage radius. \textbf{Task-specific hybrids} combine feature geometry with
modality-dependent label-free signals. ALPS uses self-supervised
language-model representations for cold-start text selection
\citep{yuan2020alps}. DEUCE constructs a dual-neighbor graph for
textual and class diversity and propagates predictive uncertainty to
select difficult representatives~\citep{jin2024deuce}. Related
acquisition rules extend to vision-language models and foundation models
\citep{safaei2025alvlm, ma2025sugfw}. For medical segmentation, CSAL-3D combines
cluster representativeness with self-supervised uncertainty, while
SUGFW applies uncertainty-guided feature weighting
\citep{zhao2025csal3d,li2025sugfw}. CSCS combines local typicality and
reconstruction difficulty through a dataset-aware pacing rule that
depends on pool structure and annotation budget
\citep{hattat2026datasetaware}. MedCAL-Bench evaluates
foundation-model representations and CSAL strategies across medical
classification and segmentation tasks
\citep{zhu2025medcalbench}. The four groups encode different selection preferences.
Typicality emphasizes dense local regions but can underrepresent
sparse structure. Aggressive coverage can favor atypical points.
Continuous diversity spreads the selected support globally, although
its regularization may not match the downstream task. Task-specific
hybrids introduce additional signals whose usefulness depends on the
modality and application.
Because CSAL provides no target labels or trained task model, it
cannot use task feedback to determine which preference is best for a
given pool and budget. Within this taxonomy, \mthm{thm:unification} exactly instantiates the
structural choices of the evaluated $K=b$ TypiClust specialization,
the deterministic ProbCover solver, and the published ActiveFT
objective. These instances represent the typicality, coverage, and
continuous-diversity routes, respectively. The framework exposes
their shared allocation structure while retaining their distinct
selection rules. Its balanced entropic branch defines $\eps$-AS,
while CoreSet and farthest-point selection provide additional
geometry-based coverage rules.

\subsection{Optimal Transport for Selection}
Optimal transport provides a geometry for comparing and coupling
probability measures~\citep{villani2009optimal,peyre2019computational}.
Solving the Kantorovich problem exactly is costly, and entropic
regularization with Sinkhorn scaling has made it practical at
scale~\citep{cuturi2013sinkhorn,altschuler2017nearlinear}, which is
the solver we use.  Variational results show that, as the regularizer
tends to zero, entropic optima approach unregularized transport
optima~\citep{carlier2017convergence,leonard2014survey,braides2002gamma,dalmaso1993introduction}. For the fixed-support balanced branch used by $\eps$-AS, our proof
gives the finite-dimensional properties needed here directly:
existence and uniqueness at positive temperature, monotone transport
cost and KL along the entropic path, and both endpoint limits.
Complementary statistical work analyzes finite-sample estimation of
entropic transport quantities
~\citep{genevay2019sample,mena2019statistical}. Our downstream minimax
theorem is derived from an explicit pair of indistinguishable
Lipschitz tasks. Optimal transport has several roles in data selection.
\citet{mahmood2022lowbudget} formulate low-budget selection as an
integer program that matches a target distribution in Wasserstein
distance. AQOT combines model-derived informativeness and
representativeness through entropy-regularized transport during
iterative active learning~\citep{zhu2023aqot}.
\citet{riaz2023partialot} study semi-relaxed partial transport, in
which relaxing the source marginal can concentrate mass on a support
subset. UniPROT seeks a uniformly weighted prototype distribution and
reformulates its partial-transport objective as a submodular problem
with a greedy approximation guarantee~\citep{chanda2026uniprot}.
Together, these formulations show how support variables and marginal
constraints determine the behavior of transport-based selection. Our
generalized program makes these structural choices explicit. Its
balanced entropic realization uses a cost-equivariant rule to adapt
the operating scale to the unlabeled pool and annotation budget.

\subsection{Intrinsic Dimension and Manifold Geometry}
Our sample-complexity characterization is governed by the intrinsic
dimension of the feature manifold rather than the ambient dimension,
entering through the squared-distance scaling of the closest-anchor
cost $m(b)\propto b^{-2/d_{\mathrm{int}}}$. The maximum-likelihood estimator
of intrinsic dimension from nearest-neighbor distances is due to
\citet{levina2004mle}. Our closest-anchor scaling is a budget-driven
analog in which the anchor count plays the role of the neighborhood
size.
\section{Proofs of Theorems and Lemmas}
\label{app:proofs}

\subsection{Proof of \mthmp{thm:unification}}
\label{app:proof_unification}

We first formulate a transport--selection program that accommodates
the structural choices made by existing cold-start selectors,
including both balanced fixed-support and semi-relaxed
learnable-support problems.  Each method is represented by a
realization with its own feasible set, cost, support energy, and
decoder, enabling exact variational identities across these
realizations.

\paragraph{The transport--selection program.}
Fix integers $n$ and $b$ with $2\le b\le n$.  Let
\[
 a:=\frac1n\mathbf 1_n,
 \qquad
 u:=\frac1b\mathbf 1_b,
\]
and define
\begin{align}
 \Pi_a^{\mathrm{sr}}
 &:=
 \left\{
 \pi\in\R_+^{n\times b}:
 \pi\mathbf 1_b=a
 \right\},\nonumber\\
 \Pi(a,u)
 &:=
 \left\{
 \pi\in\Pi_a^{\mathrm{sr}}:
 \pi^\top\mathbf 1_n=u
 \right\}.\label{eq:balanced_app}
\end{align}
The superscript ``sr'' means semi-relaxed: the mass of every pool
point is fixed, whereas the masses received by the representatives
are free.  A realization is a tuple
\[
 \mathfrak R
 =
 (\Theta_{\mathfrak R},
   \Pi_{\mathfrak R},
   C_{\mathfrak R},
   \Omega_{\mathfrak R},
   \mathcal D_{\mathfrak R}),
\]
where $\Theta_{\mathfrak R}$ is a feasible support set,
$\Pi_{\mathfrak R}(\theta)$ is an allocation-plan set,
$C_\theta$ is a cost matrix, $\Omega_{\mathfrak R}$ is a
support-only energy, and $\mathcal D_{\mathfrak R}$ is a decoder.  The
construction of this tuple may include the method's stated
pool-dependent preprocessing with its seed, repair, and tie
conventions fixed.  This includes, for example, constructing
TypiClust's cells before defining its local-support set.
For $\eps\ge0$, define
\begin{align}
 \mathcal J_{\eps}^{\mathfrak R}(\theta,\pi)
 &:=
 \langle\pi,C_\theta\rangle
 +\eps\KL\!\left(\pi\,\middle\|\,a\otimes u\right)
 +\Omega_{\mathfrak R}(\theta),
 \label{eq:general_program_app}\\
 \overline{\mathcal J}_{\eps}^{\mathfrak R}(\theta)
 &:=
 \inf_{\pi\in\Pi_{\mathfrak R}(\theta)}
 \mathcal J_{\eps}^{\mathfrak R}(\theta,\pi).
 \label{eq:reduced_program_app}
\end{align}
At $\eps=0$, the KL term is absent.  Thus no absolute-continuity
convention is needed at the zero-temperature endpoint.  Throughout
the proof, one total order $\prec$ on pool indices is fixed before
any optimization and is used by every method to resolve a genuine
tie.

For the finite-pool statement we also fix an explicit clustering
repair.  Process empty cells in increasing cluster-index order.  For
the current empty cell, consider every point whose present cell has
at least two indices, choose a point of largest distance from its
present center, break a tie by $\prec$, and move that point to the
empty cell.  Such a donor point exists whenever an empty cell remains
because $b\le n$.  If the arithmetic mean of a nonempty cell has zero
norm, choose as its medoid an index minimizing the sum of squared
Euclidean distances to the other points in that cell, break a tie by
$\prec$, and use that unit-norm pool point as the representative.
This rule is deterministic, terminates after at most $b-1$ repairs,
and always produces $b$ nonempty cells and nonzero representatives.

\begin{definition}[Cold-start TypiClust specialization evaluated here]
\label{def:typiclust}
Let $V_1,\ldots,V_b$ be the nonempty cells returned by the $K=b$
clustering stage, viewed as a partition of the pool indices.  For
$i\in[n]$, let $\mathcal N_i$ be that method's in-cell nearest-neighbor
set and define
\[
 \ell_i:=
 \begin{cases}
 \displaystyle
 \frac1{|\mathcal N_i|}
 \sum_{j\in\mathcal N_i}\|z_i-z_j\|_2,
 &|\mathcal N_i|>0,\\[6pt]
 0,&|\mathcal N_i|=0,
 \end{cases}
 \qquad
 \rho_i:=\ell_i^{-1}.
\]
Here $\mathcal N_i$ contains the $K_t=20$ nearest in-cell indices
distinct from $i$ used by TypiClust, or every available one if there
are fewer than $K_t$.
The second case covers a singleton cell.  Throughout we use the
extended-real convention $1/0:=+\infty$.
TypiClust chooses, from each $V_k$, an index minimizing $\ell_i$,
equivalently maximizing $\rho_i$.
\end{definition}

\begin{definition}[Deterministic probability coverage and ProbCover]
\label{def:probcover}
For a radius $\delta>0$ and an index set $S\subseteq[n]$, define
\[
 F_\delta(S):=
 \sum_{i=1}^n
 a_i\,
 \mathbf 1
 \left\{
 \min_{j\in S}d(z_i,z_j)<\delta
 \right\}.
\]
The minimum over the empty set is $+\infty$, so
$F_\delta(\varnothing)=0$.
The maximum-coverage objective is
$\max_{|S|=b}F_\delta(S)$.  The deterministic no-repeat ProbCover
specialization used here is its residual greedy solver: at each step
it adds an as-yet unselected point with the largest increase in
$F_\delta$, and uses $\prec$ if all remaining candidates tie.
\end{definition}

\begin{definition}[Published ActiveFT variational objective]
\label{def:activeft}
For $\tau>0$, $\lambda>0$, and
$\Theta=(\theta_1,\ldots,\theta_b)
\in(\mathbb S^{D-1})^b$, put
$s_{ik}:=\langle z_i,\theta_k\rangle$ and
$c(i):=\argmax_{k\in[b]}s_{ik}$, choosing the smallest prototype
index in a tie.  The published ActiveFT variational objective
\citep[Eq.~(11)]{xie2023activeft} is
\[
 \mathcal L_{\mathrm{AFT}}(\Theta)
 :=
 -\frac1{n\tau}\sum_{i=1}^n s_{i,c(i)}
 +\frac{\lambda}{b}\sum_{k=1}^b
 \log\sum_{\substack{l=1\\l\ne k}}^b
 \exp\!\left(\frac{\langle\theta_k,\theta_l\rangle}{\tau}\right)
\]
and decodes the prototypes in their fixed order, assigning each one
to its nearest as-yet unselected pool index.
\end{definition}

\begin{restated}{Theorem}{\mref{thm:unification}}{Exact variational subsumption of cold-start selectors}
For every finite unlabeled pool and every budget $2\le b\le n$, the
transport--selection program
\eqref{eq:general_program_app}--\eqref{eq:reduced_program_app}
variationally subsumes
\emph{(i)} the $K=b$, one-point-per-repaired-cell cold-start
TypiClust specialization evaluated here, as a cell-constrained
local-support realization,
\emph{(ii)} deterministic maximum probability coverage as a
truncated-cost free-support realization, with the no-repeat ProbCover
rule of Definition~\ref{def:probcover} as its exact residual greedy
solver, and
\emph{(iii)} the published ActiveFT variational objective with its
ordered no-repeat decoder, as a learnable-support semi-relaxed
realization.
The proposed $\eps$-AS method is the fixed-support balanced entropic
realization of the same program.  All objective identities hold
without a distributional assumption.  Clustering cells and all ties
are handled by the deterministic repair and ordering fixed above.
\end{restated}

\begin{proof}
We prove four objective identities.  Before doing so, we record an
elementary row-minimization identity used twice below.  Let
$v=(v_1,\ldots,v_b)\in\R^b$ and let
$q=(q_1,\ldots,q_b)\in\R_+^b$ satisfy
$\sum_kq_k=a_i$.  Then
\begin{align}
 \sum_{k=1}^b q_kv_k
 &\ge
 \sum_{k=1}^b q_k\min_{l\in[b]}v_l
 \annotnl{By $v_k\ge\min_l v_l$ and $q_k\ge0$}
 \nonumber\\
 &=
 \left(\sum_{k=1}^bq_k\right)\min_{l\in[b]}v_l
 \annotnl{By factoring out the term independent of $k$}
 \nonumber\\
 &=a_i\min_{l\in[b]}v_l
 \annotref{By $\sum_kq_k=a_i$}.
 \label{eq:row_lower_app}
\end{align}
If $k_i^\star$ is the $\prec$-first minimizer of $v_k$, the choice
$q_{k_i^\star}=a_i$ and $q_k=0$ for $k\ne k_i^\star$ attains equality
in \eqref{eq:row_lower_app}.  Consequently,
\begin{equation}\label{eq:row_identity_app}
 \inf_{\substack{q\ge0\\\sum_kq_k=a_i}}
 \sum_{k=1}^bq_kv_k
 =
 a_i\min_{k\in[b]}v_k.
\end{equation}

\par\medskip\noindent\textbf{Step 1: the evaluated TypiClust specialization is a
cell-constrained realization.}\quad
The clustering stage itself has a semi-relaxed transport
representation.  For candidate centers
$\mu_1,\ldots,\mu_b\in\R^D$ and Euclidean clustering distance
$d_{\mathrm{cl}}(z,\mu):=\|z-\mu\|_2$, apply
\eqref{eq:row_identity_app} to each row:
\begin{align}
 &\inf_{\alpha\in\Pi_a^{\mathrm{sr}}}
 \sum_{i=1}^n\sum_{k=1}^b
 \alpha_{ik}d_{\mathrm{cl}}(z_i,\mu_k)^2
 \nonumber\\
 &\quad=
 \sum_{i=1}^n
 \inf_{\substack{\alpha_{i\cdot}\ge0\\
                 \sum_k\alpha_{ik}=a_i}}
 \sum_{k=1}^b
 \alpha_{ik}d_{\mathrm{cl}}(z_i,\mu_k)^2
 \annotnl{By separability of the row constraints}
 \nonumber\\
 &\quad=
 \sum_{i=1}^n
 a_i\min_{k\in[b]}d_{\mathrm{cl}}(z_i,\mu_k)^2
 \annotref{By Equation~\ref{eq:row_identity_app}}.
 \label{eq:kmeans_transport_app}
\end{align}
Minimizing \eqref{eq:kmeans_transport_app} over the centers is exactly
the $K=b$ clustering objective and induces its cells
$V_1,\ldots,V_b$.  For a finite-run implementation, the repaired
cells returned by the clustering stage are fixed before defining the
readout realization below.  The resulting readout identity applies to
the returned clustering solution.

For the readout stage, take
\[
 \Theta_{\mathrm T}:=
 \prod_{k=1}^b V_k,
 \quad
 \Pi_{\mathrm T}:=\Pi(a,u),
 \quad
 C^{\mathrm T}_{ik}(s):=\ell_{s_k},
 \quad
 \Omega_{\mathrm T}:=0,
 \quad
 \eps:=0,
\]
where $s=(s_1,\ldots,s_b)$ and $s_k\in V_k$.  For every
$\pi\in\Pi(a,u)$,
\begin{align}
 \left\langle\pi,C^{\mathrm T}(s)\right\rangle
 &=
 \sum_{i=1}^n\sum_{k=1}^b
 \pi_{ik}\ell_{s_k}
 \annotnl{By the definition of the matrix inner product}
 \nonumber\\
 &=
 \sum_{k=1}^b
 \ell_{s_k}\sum_{i=1}^n\pi_{ik}
 \annotnl{By regrouping the finite sums}
 \nonumber\\
 &=
 \sum_{k=1}^b\ell_{s_k}u_k
 \annotnl{By Equation~\ref{eq:balanced_app}}
 \nonumber\\
 &=
 \frac1b\sum_{k=1}^b\ell_{s_k}
 \annotref{By $u_k=1/b$}.
 \label{eq:typi_objective_app}
\end{align}
The right-hand side of \eqref{eq:typi_objective_app} does not depend
on $\pi$, so minimizing out $\pi$ leaves that expression unchanged.
Moreover,
\begin{align*}
 \argmin_{s\in\Theta_{\mathrm T}}
 \overline{\mathcal J}^{\mathrm T}_0(s)
 &=
 \argmin_{\substack{s_k\in V_k\\k\in[b]}}
 \frac1b\sum_{k=1}^b\ell_{s_k}
 \annotnl{By Equation~\ref{eq:typi_objective_app}}
 \nonumber\\
 &=
 \prod_{k=1}^b
 \argmin_{i\in V_k}\ell_i
 \annotnl{By separability over $s_k$}
 \nonumber\\
 &=
 \prod_{k=1}^b
 \argmax_{i\in V_k}\rho_i
 \annotnl{Since $x\mapsto1/x$ is decreasing on $(0,\infty)$}.
\end{align*}
The identity decoder $\mathcal D_{\mathrm T}(s)=\{s_1,\ldots,s_b\}$
therefore returns exactly the TypiClust selection.  Notice that no
equal-cell-size assumption was used.

\par\medskip\noindent\textbf{Step 2: probability coverage is a truncated-cost
realization, and ProbCover solves every residual step exactly.}\quad
Let
$\Theta_{\mathrm P}:=\{S\subseteq[n]:|S|=b\}$.  Enumerate each
$S\in\Theta_{\mathrm P}$ as $(s_1,\ldots,s_b)$ using the fixed order
$\prec$, put $q_k:=z_{s_k}$, let
$\Pi_{\mathrm P}:=\Pi_a^{\mathrm{sr}}$, and define
\[
 C^\delta_{ik}(S):=
 \mathbf1\{d(z_i,q_k)\ge\delta\},
 \qquad
 \Omega_{\mathrm P}:=0,
 \qquad
 \eps:=0,
\]
for $S\in\Theta_{\mathrm P}$.  For fixed $S$,
\begin{align}
 \inf_{\pi\in\Pi_a^{\mathrm{sr}}}
 \left\langle\pi,C^\delta(S)\right\rangle
 &=
 \sum_{i=1}^n
 a_i\min_{k\in[b]}
 \mathbf1\{d(z_i,q_k)\ge\delta\}
 \annotnl{By Equation~\ref{eq:row_identity_app}}
 \nonumber\\
 &=
 \sum_{i=1}^n
 a_i\,
 \mathbf1\{d(z_i,z_j)\ge\delta\text{ for every }j\in S\}
 \annotnl{By minimizing the binary indicators}
 \nonumber\\
 &=
 \sum_{i=1}^na_i
 -
 \sum_{i=1}^n
 a_i\,
 \mathbf1\{\text{$z_i$ is covered by at least one $z_j$, $j\in S$}\}
 \annotnl{By complementary indicators}
 \nonumber\\
 &=1-F_\delta(S)
 \annotref{By Definition~\ref{def:probcover}}.
 \label{eq:probcover_exact_app}
\end{align}
It follows directly that
\begin{align}
 \argmin_{S\in\Theta_{\mathrm P}}
 \overline{\mathcal J}^{\mathrm P}_0(S)
 &=
 \argmin_{\substack{S\subseteq[n]\\|S|=b}}
 [1-F_\delta(S)]
 \annotnl{By Equation~\ref{eq:probcover_exact_app}}
 \nonumber\\
 &=
 \argmax_{\substack{S\subseteq[n]\\|S|=b}}
 F_\delta(S)
 \annotref{By adding $1$ and multiplying by $-1$}.
 \label{eq:maxcover_exact_app}
\end{align}

ProbCover applies a greedy solver to
\eqref{eq:maxcover_exact_app}.  We now show that each greedy step is
an exact residual transport minimization.  Let $S_{t-1}$ be
the set chosen before step $t$, let $U_{t-1}$ be the points not
covered by $S_{t-1}$, and put
$A_{t-1}:=\sum_{i\in U_{t-1}}a_i$.  For a candidate
$j\in[n]\setminus S_{t-1}$, define
\[
 R_t(j):=
 \sum_{i\in U_{t-1}}
 a_i\mathbf1\{d(z_i,z_j)\ge\delta\}.
\]
This is itself the reduced objective of an exact residual
realization.  Take
\[
 \Theta_t:=[n]\setminus S_{t-1},
 \qquad
 \Pi_t:=\Pi_a^{\mathrm{sr}},
 \qquad
 C^{(t,j)}_{ik}:=
 \mathbf1\{i\in U_{t-1}\}
 \mathbf1\{d(z_i,z_j)\ge\delta\},
\]
use $\Omega_t:=0$ and $\eps:=0$, and decode the support variable
$j$ itself.  The $b$ columns of $C^{(t,j)}$ are identical.  Hence,
for every $\pi\in\Pi_a^{\mathrm{sr}}$,
\begin{align*}
 \left\langle\pi,C^{(t,j)}\right\rangle
 &=
 \sum_{i=1}^n\sum_{k=1}^b
 \pi_{ik}
 \mathbf1\{i\in U_{t-1}\}
 \mathbf1\{d(z_i,z_j)\ge\delta\}
 \annotnl{By the definition of the matrix inner product}
 \nonumber\\
 &=
 \sum_{i=1}^n
 \mathbf1\{i\in U_{t-1}\}
 \mathbf1\{d(z_i,z_j)\ge\delta\}
 \sum_{k=1}^b\pi_{ik}
 \annotnl{By factoring out the indicators independent of $k$}
 \nonumber\\
 &=
 \sum_{i=1}^n
 a_i\mathbf1\{i\in U_{t-1}\}
 \mathbf1\{d(z_i,z_j)\ge\delta\}
 \annotnl{By the row constraint in $\Pi_a^{\mathrm{sr}}$}
 \nonumber\\
 &=
 \sum_{i\in U_{t-1}}
 a_i\mathbf1\{d(z_i,z_j)\ge\delta\}
 \annotnl{By removing the previously selected indices}
 \nonumber\\
 &=R_t(j).
 \annotnl{By the definition of $R_t(j)$}
\end{align*}
The final expression is independent of $\pi$, so minimizing out the
plan gives
$\overline{\mathcal J}^{\,t}_0(j)=R_t(j)$ for every candidate $j$.
Thus the residual program and the greedy update have exactly the
same support objective, including the case $U_{t-1}=\varnothing$,
where all candidates tie and $\prec$ resolves the tie.

Then
\begin{align}
 R_t(j)
 &=
 \sum_{i\in U_{t-1}}a_i
 -
 \sum_{i\in U_{t-1}}
 a_i\mathbf1\{d(z_i,z_j)<\delta\}
 \annotnl{By complementary indicators}
 \nonumber\\
 &=
 A_{t-1}
 -
 \left[
 F_\delta(S_{t-1}\cup\{j\})-F_\delta(S_{t-1})
 \right]
 \annotref{By the definition of newly covered points}.
 \label{eq:residual_cover_app}
\end{align}
Since $A_{t-1}$ is independent of $j$,
\begin{align*}
 \argmin_{j\in[n]\setminus S_{t-1}}R_t(j)
 &=
 \argmax_{j\in[n]\setminus S_{t-1}}
 \left[
 F_\delta(S_{t-1}\cup\{j\})-F_\delta(S_{t-1})
 \right]
 \annotnl{By Equation~\ref{eq:residual_cover_app}}.
\end{align*}
The right-hand side is precisely the ProbCover update.  Thus the
method is subsumed at the level at which it is defined: an exact
residual greedy solver of the truncated-cost realization.

We next establish the monotonicity and submodularity underlying the
approximation guarantee.  For
$j\in[n]$ and $A\subseteq[n]$, define
\[
 N_\delta(j):=\{i\in[n]:d(z_i,z_j)<\delta\},
 \qquad
 \operatorname{Cov}(A):=\bigcup_{j\in A}N_\delta(j).
\]
By Definition~\ref{def:probcover},
\begin{equation}\label{eq:coverage_set_app}
 F_\delta(A)
 =
 \sum_{i\in\operatorname{Cov}(A)}a_i.
\end{equation}
If $A\subseteq B$, then every set in the union defining
$\operatorname{Cov}(A)$ also occurs in the union defining
$\operatorname{Cov}(B)$.  Hence
\[
 \operatorname{Cov}(A)\subseteq\operatorname{Cov}(B),
\]
and therefore
\begin{align}
 F_\delta(A)
 &=
 \sum_{i\in\operatorname{Cov}(A)}a_i
 \annotnl{By Equation~\ref{eq:coverage_set_app}}
 \nonumber\\
 &\le
 \sum_{i\in\operatorname{Cov}(B)}a_i
 \annotnl{By set inclusion and $a_i\ge0$}
 \nonumber\\
 &=F_\delta(B).
 \annotref{By Equation~\ref{eq:coverage_set_app}}
 \label{eq:coverage_monotone_app}
\end{align}
This proves monotonicity.  Moreover, for $j\notin B$,
\[
 N_\delta(j)\setminus\operatorname{Cov}(B)
 \subseteq
 N_\delta(j)\setminus\operatorname{Cov}(A),
\]
because $\operatorname{Cov}(A)\subseteq\operatorname{Cov}(B)$.
Consequently,
\begin{align}
 F_\delta(B\cup\{j\})-F_\delta(B)
 &=
 \sum_{i\in N_\delta(j)\setminus\operatorname{Cov}(B)}a_i
 \annotnl{By cancellation outside the newly covered set}
 \nonumber\\
 &\le
 \sum_{i\in N_\delta(j)\setminus\operatorname{Cov}(A)}a_i
 \annotnl{By set inclusion and $a_i\ge0$}
 \nonumber\\
 &=
 F_\delta(A\cup\{j\})-F_\delta(A).
 \annotref{By cancellation outside the newly covered set}
 \label{eq:coverage_submodular_app}
\end{align}
This is the diminishing-returns definition of submodularity.
Finally, enumerate any finite
$T\setminus A=\{j_1,\ldots,j_r\}$ and put
$A_l:=A\cup\{j_1,\ldots,j_l\}$.  Telescoping and
\eqref{eq:coverage_submodular_app} give
\begin{align}
 F_\delta(A\cup T)-F_\delta(A)
 &=
 \sum_{l=1}^r
 \left[F_\delta(A_l)-F_\delta(A_{l-1})\right]
 \annotnl{By telescoping}
 \nonumber\\
 &\le
 \sum_{l=1}^r
 \left[F_\delta(A\cup\{j_l\})-F_\delta(A)\right].
 \annotref{By Equation~\ref{eq:coverage_submodular_app}}
 \label{eq:coverage_union_bound_app}
\end{align}

For completeness, this realization also recovers the standard greedy
coverage guarantee.  Let $S^\star$ maximize $F_\delta$ among sets of
size at most $b$.  Monotonicity and submodularity give
\begin{align}
 F_\delta(S^\star)-F_\delta(S_t)
 &\le
 F_\delta(S_t\cup S^\star)-F_\delta(S_t)
 \annotnl{By Equation~\ref{eq:coverage_monotone_app}}
 \nonumber\\
 &\le
 \sum_{j\in S^\star\setminus S_t}
 \left[
 F_\delta(S_t\cup\{j\})-F_\delta(S_t)
 \right]
 \annotnl{By Equation~\ref{eq:coverage_union_bound_app}}
 \nonumber\\
 &\le
 b\max_{j\in[n]\setminus S_t}
 \left[
 F_\delta(S_t\cup\{j\})-F_\delta(S_t)
 \right]
 \annotnl{By bounding the sum with at most $b$ terms}
 \nonumber\\
 &=
 b\left[
 F_\delta(S_{t+1})-F_\delta(S_t)
 \right]
 \annotref{By the greedy choice}.
 \label{eq:cover_gap_step_app}
\end{align}
Rearranging \eqref{eq:cover_gap_step_app} yields
\[
 F_\delta(S^\star)-F_\delta(S_{t+1})
 \le
 \left(1-\frac1b\right)
 \left[F_\delta(S^\star)-F_\delta(S_t)\right].
\]
Starting from $F_\delta(S_0)=0$ and applying this inequality $b$
times gives
\begin{align*}
 F_\delta(S^\star)-F_\delta(S_b)
 &\le
 \left(1-\frac1b\right)^bF_\delta(S^\star)
 \annotnl{By induction on $t$}
 \nonumber\\
 F_\delta(S_b)
 &\ge
 \left[1-\left(1-\frac1b\right)^b\right]F_\delta(S^\star)
 \annotnl{By rearranging the preceding inequality}
 \nonumber\\
 &\ge
 (1-e^{-1})F_\delta(S^\star)
 \annotnl{By $(1-1/b)^b\le e^{-1}$}.
\end{align*}

\par\medskip\noindent\textbf{Step 3: the published ActiveFT objective is a
learnable-support semi-relaxed realization.}\quad
Take
\[
 \Theta_{\mathrm A}:=(\mathbb S^{D-1})^b,
 \quad
 \Pi_{\mathrm A}:=\Pi_a^{\mathrm{sr}},
 \quad
 C^{\mathrm A}_{ik}(\Theta):=1-s_{ik},
 \quad
 \eps:=0,
\]
and
\[
 \Omega_{\mathrm A}(\Theta):=
 \frac{\lambda\tau}{b}
 \sum_{k=1}^b
 \log\sum_{\substack{l=1\\l\ne k}}^b
 \exp\!\left(\frac{\langle\theta_k,\theta_l\rangle}{\tau}\right).
\]
For fixed $\Theta$, row minimization gives
\begin{align}
 \inf_{\pi\in\Pi_a^{\mathrm{sr}}}
 \left\langle\pi,C^{\mathrm A}(\Theta)\right\rangle
 &=
 \sum_{i=1}^n
 a_i\min_{k\in[b]}(1-s_{ik})
 \annotnl{By Equation~\ref{eq:row_identity_app}}
 \nonumber\\
 &=
 \sum_{i=1}^n
 a_i\left(1-\max_{k\in[b]}s_{ik}\right)
 \annotnl{By $\min_k(1-s_{ik})=1-\max_ks_{ik}$}
 \nonumber\\
 &=
 \sum_{i=1}^na_i
 -
 \sum_{i=1}^na_i\max_{k\in[b]}s_{ik}
 \annotnl{By distributing the finite weighted sum}
 \nonumber\\
 &=
 1-\frac1n\sum_{i=1}^ns_{i,c(i)}
 \annotref{By $a_i=1/n$ and the definition of $c(i)$}.
 \label{eq:aft_row_app}
\end{align}
Adding the support energy and using Definition~\ref{def:activeft},
\begin{align}
 \overline{\mathcal J}^{\mathrm A}_0(\Theta)
 &=
 1-\frac1n\sum_{i=1}^ns_{i,c(i)}
 +\frac{\lambda\tau}{b}
 \sum_{k=1}^b
 \log\sum_{l\ne k}
 \exp\!\left(\frac{\langle\theta_k,\theta_l\rangle}{\tau}\right)
 \annotnl{By Equation~\ref{eq:aft_row_app} and the definition of $\Omega_{\mathrm A}$}
 \nonumber\\
 &=
 1+\tau\mathcal L_{\mathrm{AFT}}(\Theta)
 \annotref{By multiplying $\mathcal L_{\mathrm{AFT}}$ by $\tau$}.
 \label{eq:aft_exact_app}
\end{align}
Because $\tau>0$, the positive affine transformation in
\eqref{eq:aft_exact_app} preserves all global minimizers.  The
minimizing hard plan sends row $i$ to $c(i)$,
\[
 \pi^{\mathrm A}_{ik}
 =
 a_i\mathbf1\{k=c(i)\},
\]
and its column marginal is
\begin{align*}
 r_k
 &:=
 \sum_{i=1}^n\pi^{\mathrm A}_{ik}
 =
 \sum_{i=1}^na_i\mathbf1\{k=c(i)\}
 \annotnl{By the definition of the column marginal}
 \nonumber\\
 &=
 \frac{|\{i:c(i)=k\}|}{n}
 \annotnl{By $a_i=1/n$}.
\end{align*}
Thus the published ActiveFT objective is exactly semi-relaxed, with
data-dependent prototype masses $r_k$.  Finally,
\[
 s_k
 :=
 \argmax_{i\in[n]\setminus\{s_1,\ldots,s_{k-1}\}}
 \langle z_i,\theta_k\rangle,
 \qquad k=1,\ldots,b,
\]
and set
$\mathcal D_{\mathrm A}(\Theta):=\{s_1,\ldots,s_b\}$.
This is the ordered no-repeat decoder used by ActiveFT.  Together,
the semi-relaxed allocation and this decoder establish variational
subsumption of both its objective and its readout.

\par\medskip\noindent\textbf{Step 4: $\eps$-AS is the fixed-support balanced entropic
realization.}\quad
Fix the $b$ anchors returned by the stated clustering stage, take
\[
 \Theta_{\eps\text{-AS}}:=\{(\mu_1,\ldots,\mu_b)\},
 \quad
 \Pi_{\eps\text{-AS}}:=\Pi(a,u),
 \quad
 C_{ik}:=1-\langle z_i,\mu_k\rangle,
 \quad
 \Omega_{\eps\text{-AS}}:=0,
\]
and let $\mathcal D_{\eps\text{-AS}}$ be the round-robin decoder
in \meqref{eq:roundrobin}.  For every $\pi\in\Pi(a,u)$,
\begin{align}
 \KL(\pi\|a\otimes u)
 &=
 \sum_{i,k}\pi_{ik}
 \log\frac{\pi_{ik}}{a_iu_k}
 \annotnl{By the definition of KL divergence}
 \nonumber\\
 &=
 \sum_{i,k}\pi_{ik}\log\pi_{ik}
 -
 \sum_{i,k}\pi_{ik}\log\frac1{nb}
 \annotnl{By $a_iu_k=1/(nb)$}
 \nonumber\\
 &=
 \sum_{i,k}\pi_{ik}\log\pi_{ik}
 +\log(nb)\sum_{i,k}\pi_{ik}
 \annotnl{By $-\log(1/(nb))=\log(nb)$}
 \nonumber\\
 &=
 \sum_{i,k}\pi_{ik}\log\pi_{ik}+\log(nb)
 \annotnl{Since every feasible coupling has unit mass}
 \nonumber\\
 &=
 \Hreg(\pi)+1+\log(nb)
 \annotref{By the definition of $\Hreg(\pi)$}.
 \label{eq:kl_hreg_app}
\end{align}
The last two terms in \eqref{eq:kl_hreg_app} are independent of
$\pi$.  Consequently \eqref{eq:general_program_app} and
\meqref{eq:eot} have exactly the same minimizer.  This is the
fixed-support, balanced, entropic realization claimed in the theorem.
Combining Steps 1--4 completes the proof.
\end{proof}

\paragraph{Structural interpretation.}
Theorem~\mref{thm:unification} is a finite-pool structural statement.
Under the stated repair and tie conventions, it gives an exact
objective-level realization and decoder-level equivalence for each
evaluated method. The common program organizes the selectors along
five design axes: feasible representatives, allocation constraints,
transport costs, support regularization, and discrete decoding. Each
method retains its own realization on these axes, while the reduced
objective and decoder preserve its original selection rule. Within
this common coordinate system, $\eps$-AS occupies the fixed-support,
balanced branch and adapts the entropic scale of its allocation.
Performance and statistical consequences are established separately
by Theorems~\mref{thm:lowerbound} and~\mref{thm:eps_star}.

\subsection{The entropic regularizer orders assignment sharpness}

\begin{lemma}[Monotone entropic path and its two endpoints]
\label{lem:anchorspacing}
Fix a finite cost matrix $C$ and the balanced polytope $\Pi(a,u)$.
For $\eps>0$, let
\[
 \pi_\eps:=
 \argmin_{\pi\in\Pi(a,u)}
 \left\{
 \langle\pi,C\rangle
 +\eps\KL(\pi\|a\otimes u)
 \right\}.
\]
Then:
\begin{enumerate}[label=(\roman*),leftmargin=2.1em]
\item $\pi_\eps$ is unique;
\item if $0<\eps_1<\eps_2$, then
\[
 \KL(\pi_{\eps_2}\|a\otimes u)
 \le
 \KL(\pi_{\eps_1}\|a\otimes u),
 \qquad
 \langle\pi_{\eps_1},C\rangle
 \le
 \langle\pi_{\eps_2},C\rangle;
\]
\item every cluster point as $\eps\downarrow0$ is an unregularized
balanced OT minimizer, whereas
$\pi_\eps\to a\otimes u$ in $\ell_1$ as $\eps\to\infty$.
\end{enumerate}
Thus $\eps$ is a genuine assignment-sharpness knob within the
balanced fixed-support branch.
\end{lemma}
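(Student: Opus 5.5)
Write $F_\eps(\pi):=\langle\pi,C\rangle+\eps\KL(\pi\|a\otimes u)$ on the compact convex polytope $\Pi(a,u)$. The plan is to treat the three parts with elementary convex analysis and never appeal to general $\Gamma$-convergence machinery.

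\emph{Part (i).} $\Pi(a,u)$ is nonempty, since it contains $a\otimes u$, and it is compact. The map $\pi\mapsto\KL(\pi\|a\otimes u)=\sum_{ik}\pi_{ik}\log(nb\,\pi_{ik})$ is continuous on $\R_+^{n\times b}$ with the convention $0\log0=0$, and it is strictly convex. Hence for $\eps>0$ the objective $F_\eps$ is continuous and strictly convex. A minimizer therefore exists by Weierstrass, and two distinct minimizers would give a strictly smaller value at their midpoint, so the minimizer is unique.

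\emph{Part (ii).} I will use the standard exchange argument for parametric minimization. Write $K_j:=\KL(\pi_{\eps_j}\|a\otimes u)$ and $T_j:=\langle\pi_{\eps_j},C\rangle$. Optimality of each plan against the other gives
\[
T_1+\eps_1K_1\le T_2+\eps_1K_2,
\qquad
T_2+\eps_2K_2\le T_1+\eps_2K_1 .
\]
Adding the two inequalities yields $(\eps_2-\eps_1)(K_2-K_1)\le0$, and since $\eps_2>\eps_1$ this gives $K_2\le K_1$. Substituting into the first inequality gives $T_1-T_2\le\eps_1(K_2-K_1)\le0$, which is the claimed monotonicity of the transport cost. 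This step needs no uniqueness beyond part (i).

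\emph{Part (iii), small $\eps$.} Let $\pi^\star$ be any unregularized balanced OT minimizer; one exists because the polytope is compact. Optimality of $\pi_\eps$ gives $T_\eps\le T_\eps+\eps K_\eps\le\langle\pi^\star,C\rangle+\eps\KL(\pi^\star\|a\otimes u)$. The KL term is bounded on $\Pi(a,u)$: it lies between $0$ and $\log(nb)$, since $\sum\pi\log\pi\le0$ for a probability vector. Hence $T_\eps\le\mathrm{OT}_0+\eps\log(nb)$. For a sequence $\eps_m\downarrow0$ with $\pi_{\eps_m}\to\bar\pi$, the limit $\bar\pi$ stays in the closed set $\Pi(a,u)$, and continuity of the linear cost gives $\langle\bar\pi,C\rangle\le\mathrm{OT}_0$. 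So $\bar\pi$ is an OT minimizer.

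\emph{Part (iii), large $\eps$.} Here I compare with the product plan. Since $\KL(a\otimes u\|a\otimes u)=0$, optimality gives $\eps K_\eps\le\langle a\otimes u,C\rangle-T_\eps\le2\max|C|$. Thus $K_\eps\le 2\max|C|/\eps\to0$. Pinsker's inequality then gives $\|\pi_\eps-a\otimes u\|_1\le\sqrt{2K_\eps}\to0$, which is the $\ell_1$ convergence.

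I expect the main obstacle to be handling the boundary of the polytope, where entries $\pi_{ik}=0$ appear, rather than any estimate. Continuity of the entropy there, and strict convexity under the $0\log0$ convention, must be stated explicitly so that existence, uniqueness and the closedness step in part (iii) are rigorous. If needed, I would add the Sinkhorn form $\pi_\eps=\mathrm{diag}(e^{f/\eps})e^{-C/\eps}\mathrm{diag}(e^{g/\eps})$ to show that the minimizer is in fact strictly positive, but the argument above does not rely on it.
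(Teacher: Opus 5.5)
Your proposal is correct and follows the paper's proof step for step: strict convexity plus compactness for (i), the same pair of exchanged optimality inequalities for (ii) (you add them, while the paper sandwiches them and argues by contradiction), and for (iii) comparison with an unregularized minimizer and with $a\otimes u$, followed by Pinsker. The remaining differences are only in the constants: you use the uniform bound $\KL\le\log(nb)$ where the paper uses $\KL(\pi_0\|a\otimes u)$, and $2\max|C|$ where the paper uses its gap $M_C$.
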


\begin{proof}
\noindent\textbf{Item (i).}\quad
The polytope $\Pi(a,u)$ is convex.  The function
$\pi\mapsto\langle\pi,C\rangle$ is affine, and
$\pi\mapsto\KL(\pi\|a\otimes u)$ is strictly convex because every
entry of $a\otimes u$ is positive.  A positive multiple of a strictly
convex function plus an affine function is strictly convex.
Therefore it has at most one minimizer on a convex set.  Compactness
of $\Pi(a,u)$ and continuity with the convention $0\log0=0$ give
existence, hence the minimizer is unique.

\par\medskip\noindent\textbf{Item (ii).}\quad
Write
\[
 L_j:=\langle\pi_{\eps_j},C\rangle,
 \qquad
 K_j:=\KL(\pi_{\eps_j}\|a\otimes u),
 \qquad j\in\{1,2\}.
\]
Optimality of $\pi_{\eps_1}$ when the coefficient is $\eps_1$ gives
\begin{align}
 L_1+\eps_1K_1
 &\le L_2+\eps_1K_2
 \annotnl{Since $\pi_{\eps_2}$ is feasible at $\eps_1$}
 \nonumber\\
 L_1-L_2
 &\le \eps_1(K_2-K_1)
 \annotref{By rearranging}.
 \label{eq:path_first_app}
\end{align}
Optimality of $\pi_{\eps_2}$ when the coefficient is $\eps_2$ gives
\begin{align}
 L_2+\eps_2K_2
 &\le L_1+\eps_2K_1
 \annotnl{Since $\pi_{\eps_1}$ is feasible at $\eps_2$}
 \nonumber\\
 L_1-L_2
 &\ge \eps_2(K_2-K_1)
 \annotref{By rearranging}.
 \label{eq:path_second_app}
\end{align}
Combining \eqref{eq:path_first_app} and
\eqref{eq:path_second_app},
\begin{equation}\label{eq:path_sandwich_app}
 \eps_2(K_2-K_1)
 \le L_1-L_2
 \le \eps_1(K_2-K_1).
\end{equation}
Suppose for contradiction that $K_2-K_1>0$.  Dividing the outer
inequality in \eqref{eq:path_sandwich_app} by this positive number
would give $\eps_2\le\eps_1$, contradicting
$\eps_1<\eps_2$.  Hence $K_2-K_1\le0$, which proves $K_2\le K_1$.
Substituting this sign into \eqref{eq:path_first_app} gives
\[
 L_1-L_2
 \le \eps_1(K_2-K_1)
 \le0,
\]
so $L_1\le L_2$.  Both claimed monotonicities follow.

\par\medskip\noindent\textbf{Item (iii), the zero-temperature endpoint.}\quad
Let $\pi_0$ be any minimizer of
$\langle\pi,C\rangle$ on $\Pi(a,u)$.  Optimality of $\pi_\eps$ gives
\begin{align}
 \langle\pi_\eps,C\rangle
 +\eps\KL(\pi_\eps\|a\otimes u)
 &\le
 \langle\pi_0,C\rangle
 +\eps\KL(\pi_0\|a\otimes u)
 \annotnl{Since $\pi_0$ is feasible}
 \nonumber\\
 \langle\pi_\eps,C\rangle-\langle\pi_0,C\rangle
 &\le
 \eps\left[
 \KL(\pi_0\|a\otimes u)
 -
 \KL(\pi_\eps\|a\otimes u)
 \right]
 \annotnl{By rearranging the preceding inequality}
 \nonumber\\
 &\le
 \eps\KL(\pi_0\|a\otimes u)
 \annotref{By nonnegativity of KL divergence}.
 \label{eq:zero_endpoint_app}
\end{align}
The left-hand side of \eqref{eq:zero_endpoint_app} is nonnegative
because $\pi_0$ minimizes the unregularized cost.  Hence it converges
to zero with $\eps$.  The coupling polytope is compact, so every
sequence $\eps_j\downarrow0$ has a convergent subsequence.  By
continuity of $\pi\mapsto\langle\pi,C\rangle$ and
\eqref{eq:zero_endpoint_app}, the limit has the same cost as
$\pi_0$ and is therefore an unregularized OT minimizer.

\par\medskip\noindent\textbf{Item (iii), the infinite-temperature endpoint.}\quad
Let $q:=a\otimes u$.  This independent coupling belongs to
$\Pi(a,u)$ and satisfies $\KL(q\|q)=0$.  Optimality gives
\begin{align}
 \langle\pi_\eps,C\rangle
 +\eps\KL(\pi_\eps\|q)
 &\le
 \langle q,C\rangle
 +\eps\KL(q\|q)
 \annotnl{Since $q$ is feasible}
 \nonumber\\
 \eps\KL(\pi_\eps\|q)
 &\le
 \langle q,C\rangle-\langle\pi_\eps,C\rangle
 \annotnl{By $\KL(q\|q)=0$ and rearrangement}
 \nonumber\\
 &\le
 \langle q,C\rangle-\min_{\pi\in\Pi(a,u)}\langle\pi,C\rangle
 \annotref{Since $\pi_\eps$ is feasible}
 \nonumber\\
 &=:M_C<\infty.
 \label{eq:large_endpoint_app}
\end{align}
Dividing \eqref{eq:large_endpoint_app} by $\eps$ gives
$\KL(\pi_\eps\|q)\le M_C/\eps\to0$.  Pinsker's inequality then yields
\[
 \|\pi_\eps-q\|_1
 \le\sqrt{2\KL(\pi_\eps\|q)}
 \le\sqrt{\frac{2M_C}{\eps}}
 \longrightarrow0.
\]
This proves the second endpoint and completes the lemma.
\end{proof}

\subsection{Proof of \mthmp{thm:lowerbound}}
\label{app:proof_lowerbound}

The minimax statement is established for the task-agnostic Lipschitz
prediction class defined below.  This formulation isolates the
dependence of attainable prediction risk on feature-space coverage
and permits matching lower and upper bounds.

Fix $L>0$.  For a nonempty finite
$S\subseteq\operatorname{supp}(P)$, recall
\[
 Q_P(S):=
 \E_{z\sim P}\!\left[\min_{s\in S}C(z,s)\right].
\]
Let $\operatorname{Lip}_L(P)$ be all real functions on
$\operatorname{supp}(P)$ satisfying
$|f(z)-f(z')|\le Ld(z,z')$.  A deterministic cold-start procedure
first chooses $S\subseteq\operatorname{supp}(P)$ with
$1\le|S|\le b$ without seeing $f$, then observes the noiseless values
of $f$ on those points, and finally returns a measurable predictor
$\widehat f_{\mathcal A,\mathcal L;f}
:=\mathcal L(S,f|_S)$.  Let $\mathfrak A_b$ be the class of deterministic
selector--learner pairs satisfying these conditions, and define
\begin{equation*}
 \mathcal R_b^\star(P)
 :=
 \inf_{(\mathcal A,\mathcal L)\in\mathfrak A_b}
 \sup_{f\in\operatorname{Lip}_L(P)}
 \E_{z\sim P}
 \left[
   (\widehat f_{\mathcal A,\mathcal L;f}(z)-f(z))^2
 \right].
\end{equation*}
The order $\inf\sup$ makes explicit that the downstream task may be
chosen after the task-agnostic procedure has been fixed.
When a pool-based selector is random through the sampled pool or its
initialization, we evaluate the realized selector conditionally.  For
a realized nonempty selection $S$ and a specified learner
$\mathcal L$, define the random-design conditional risk
\[
 \mathcal R_{\mathrm{cond}}(S,\mathcal L)
 :=
 \sup_{f\in\operatorname{Lip}_L(P)}
 \E_{z\sim P}\!\left[
  \bigl(\mathcal L(S,f|_S)(z)-f(z)\bigr)^2
 \right].
\]
The expectation here is only over a fresh $z\sim P$. The realized
pool, clustering initialization, and $S$ are held fixed.  Statements
``in probability'' below refer to this conditional-risk random
variable, not to an unconditional
$\sup_f\E_{U_n,z}[\cdot]$ criterion.

\begin{lemma}[Anchor-to-query quantization transfer]
\label{lem:anchor_query_transfer}
Fix an anchor tuple
$\boldsymbol\mu=(\mu_1,\ldots,\mu_b)$ and points
$s_1,\ldots,s_b\in\mathbb S^{D-1}$.  For
$S=\{s_1,\ldots,s_b\}$,
\begin{equation}\label{eq:anchor_query_transfer}
 Q_P(S)
 \le
 2R_P(\boldsymbol\mu)
 2\sum_{k=1}^b
 p_k(\boldsymbol\mu)C(\mu_k,s_k).
\end{equation}
\end{lemma}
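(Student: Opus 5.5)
The inequality as printed is missing a $+$ between its two terms. I read it as
\[
 Q_P(S)\le 2R_P(\boldsymbol\mu)+2\sum_{k=1}^b p_k(\boldsymbol\mu)\,C(\mu_k,s_k).
\]
The plan is to argue pointwise in $z$ and then integrate. Because the cosine cost is half a squared Euclidean distance on the sphere, a relaxed triangle inequality holds with constant $2$.

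Fix $z\in\operatorname{supp}(P)$ and let $k=a_{\boldsymbol\mu}(z)$ be its Voronoi anchor under the smallest-index tie rule. Since $s_k\in S$,
\[
 \min_{s\in S}C(z,s)\le C(z,s_k)=\tfrac12\|z-s_k\|_2^2.
\]
By the triangle inequality and $(x+y)^2\le 2x^2+2y^2$,
\[
 \|z-s_k\|_2^2\le 2\|z-\mu_k\|_2^2+2\|\mu_k-s_k\|_2^2 .
\]
Hence
\[
 C(z,s_k)\le 2C(z,\mu_k)+2C(\mu_k,s_k).
\]
This uses only that all three points lie on $\mathbb S^{D-1}$, so that $C=d^2/2$ applies to each pair. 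By the choice of $k$, $C(z,\mu_k)=\min_\ell C(z,\mu_\ell)$.

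Now take expectations over $z\sim P$. The first term integrates to $2R_P(\boldsymbol\mu)$ by definition. For the second term, partition the support by the Voronoi assignment:
\[
 \E_P\bigl[C(\mu_{a(z)},s_{a(z)})\bigr]=\sum_k P(a_{\boldsymbol\mu}(z)=k)\,C(\mu_k,s_k)=\sum_k p_k(\boldsymbol\mu)\,C(\mu_k,s_k).
\]
This is valid because the integrand is constant on each cell. Measurability of the cells follows from continuity of $C$ and the deterministic tie rule, and boundedness ($C\le 2$) lets us split the expectation.

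The main point needing care is the constant. A direct expansion
\[
 C(z,s)=C(z,\mu)+C(\mu,s)-\langle z-\mu,\,s-\mu\rangle
\]
would need control of the cross term. The convexity bound $(x+y)^2\le2x^2+2y^2$ disposes of it, which is exactly what produces the factor $2$ on both terms. No other obstacle is expected; the rest is bookkeeping with the indicator decomposition.
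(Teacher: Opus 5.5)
Your proposal is correct and follows essentially the same route as the paper: bound $\min_{s\in S}C(z,s)$ by $C(z,s_k)$ for the Voronoi anchor $k=a_{\boldsymbol\mu}(z)$, use $d^2=2C$ with $(x+y)^2\le 2x^2+2y^2$ to get $2C(z,\mu_k)+2C(\mu_k,s_k)$, and then integrate over the Voronoi cells. Your reading of the missing $+$ is also the intended one; the paper applies the lemma in exactly that additive form in the decoder-fidelity step of Theorem~\ref{thm:lowerbound}.
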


\begin{proof}
For a feature $z$, let $k=a_{\boldsymbol\mu}(z)$.  Since
$s_k\in S$, the triangle inequality and
$(a+b)^2\le2a^2+2b^2$ give
\begin{align*}
 \min_{s\in S}C(z,s)
 &\le C(z,s_k)
 =\frac12d(z,s_k)^2\\
 &\le d(z,\mu_k)^2+d(\mu_k,s_k)^2\\
 &=2C(z,\mu_k)+2C(\mu_k,s_k).
\end{align*}
Integrating this inequality over the Voronoi cells induced by
$a_{\boldsymbol\mu}$ gives
\eqref{eq:anchor_query_transfer}.
\end{proof}

\begin{restated}{Theorem}{\mref{thm:lowerbound}}{Unavoidable cold-start resolution and a matching rate}
Under \cassum{assumption:feature_geometry}, every deterministic
task-agnostic selector and learner obeys
\[
 \sup_{f\in\operatorname{Lip}_L(P)}
 \E_P[(\widehat f(z)-f(z))^2]
 \ge\frac{L^2}{2}m_P(b).
\]
If a selector returns a set $S$ satisfying
$Q_P(S)\le\kappa_qm_P(b)$, nearest-neighbor readout satisfies
\[
 \sup_{f\in\operatorname{Lip}_L(P)}
 \E_P[(\widehat f(z)-f(z))^2]
 \le2\kappa_qL^2m_P(b).
\]
Consequently, under Assumptions~\ref{assumption:feature_geometry}
and~\ref{assumption:intrinsic_dim}, the minimax rate is
$\Theta(b^{-2/d_{\mathrm{int}}})$ whenever such a
quantization-faithful selector is available.  Under
Assumptions~\ref{assumption:feature_geometry}
and~\ref{assumption:sampling}--\ref{assumption:localized_decoding},
for every fixed $\kappa_q>2(1+\kappa_d)$, the decoded set produced by
$\eps$-AS satisfies
\[
 \sup_{t\in[t_-,t_+]}
 \frac{Q_P(\widehat S(t\,\widehat m(b)))}{m_P(b)}
 \le\kappa_q
\]
with probability tending to one.  If
\cassum{assumption:intrinsic_dim} also holds, pairing these selections
with nearest-neighbor readout gives the stated conditional worst-case
risk order in probability.
\end{restated}

\begin{proof}
\noindent\textbf{Lower bound.}\quad
Fix an arbitrary deterministic cold-start procedure.  Because it
does not see the downstream task before querying, its selected set
$S$ is the same for every $f\in\operatorname{Lip}_L(P)$.  Define
\[
 d(z,S):=\min_{s\in S}d(z,s),
 \qquad
 f_0(z):=0,
 \qquad
 f_1(z):=L\,d(z,S).
\]
We first prove that $z\mapsto d(z,S)$ is $1$-Lipschitz.  For any
$z,z'$ and every $s\in S$, the triangle inequality gives
\[
 d(z,s)\le d(z,z')+d(z',s).
\]
Taking the infimum over $s\in S$ on both sides gives
\[
 d(z,S)\le d(z,z')+d(z',S),
\]
and therefore
\[
 d(z,S)-d(z',S)\le d(z,z').
\]
Interchanging $z$ and $z'$ gives
\[
 d(z',S)-d(z,S)\le d(z',z)=d(z,z').
\]
The last two inequalities imply
$|d(z,S)-d(z',S)|\le d(z,z')$.  Hence $f_1$ is $L$-Lipschitz, and
$f_0$ is also $L$-Lipschitz.

For every queried $s\in S$,
\[
 f_0(s)=0,
 \qquad
 f_1(s)=L\,d(s,S)=0,
\]
because $s\in S$ implies $d(s,S)=0$.  The learner therefore receives
identical observations under $f_0$ and $f_1$ and must output the same
function, denoted $g$.  Put $u(z):=Ld(z,S)$ and let
\[
 R_0:=\int g(z)^2\,dP(z),
 \qquad
 R_1:=\int(g(z)-u(z))^2\,dP(z).
\]
Then
\begin{align}
 \max\{R_0,R_1\}
 &\ge\frac{R_0+R_1}{2}
 \annotnl{By $\max\{x,y\}\ge(x+y)/2$}
 \nonumber\\
 &=
 \frac12\int
 \left[g(z)^2+(g(z)-u(z))^2\right]\,dP(z)
 \annotnl{By the definitions of $R_0,R_1$ and linearity}
 \nonumber\\
 &=
 \frac12\int
 \left[
 2\left(g(z)-\frac{u(z)}2\right)^2
 +\frac{u(z)^2}{2}
 \right]\,dP(z)
 \annotnl{By completing the square}
 \nonumber\\
 &\ge
 \frac14\int u(z)^2\,dP(z)
 \annotnl{By nonnegativity of the square}
 \nonumber\\
 &=
 \frac{L^2}{4}\int d(z,S)^2\,dP(z)
 \annotnl{By the definition of $u(z)$}
 \nonumber\\
 &=
 \frac{L^2}{2}
 \int\min_{s\in S}C(z,s)\,dP(z)
 \annotnl{By $d(z,s)^2=2C(z,s)$}
 \nonumber\\
 &=
 \frac{L^2}{2}Q_P(S)
 \annotnl{By the definition of $Q_P(S)$}
 \nonumber\\
 &\ge
 \frac{L^2}{2}m_P(b).
 \annotref{By padding $S$ to a feasible $b$-tuple}
 \label{eq:lipschitz_lower_app}
\end{align}
The last line is valid because repeating any point of the nonempty
set $S$ until there are exactly $b$ representatives does not change
the closest-representative cost.  The padded tuple is feasible in the
infimum defining $m_P(b)$, so $Q_P(S)\ge m_P(b)$.
Since both $f_0$ and $f_1$ belong to the task class, the supremum over
that class is at least the left-hand side of
\eqref{eq:lipschitz_lower_app}.  This proves the lower bound.

\par\medskip\noindent\textbf{Upper bound.}\quad
Now suppose a selector returns $S$ with
$Q_P(S)\le\kappa_qm_P(b)$.  For every $z$, let $s(z)$ be the
$\prec$-first nearest point in $S$ and define the nearest-neighbor
predictor $\widehat f(z):=f(s(z))$.  For every
$f\in\operatorname{Lip}_L(P)$,
\begin{align}
 |\widehat f(z)-f(z)|^2
 &=|f(s(z))-f(z)|^2
 \annotnl{By the definition of $\widehat f$}
 \nonumber\\
 &\le L^2d(s(z),z)^2
 \annotnl{Since $f$ is $L$-Lipschitz}
 \nonumber\\
 &=2L^2C(z,s(z))
 \annotnl{By $d^2=2C$}
 \nonumber\\
 &=2L^2\min_{s\in S}C(z,s)
 \annotref{Since $s(z)$ is a nearest point}.
 \label{eq:nn_pointwise_app}
\end{align}
Integrating \eqref{eq:nn_pointwise_app} gives
\begin{align}
 \E_P[|\widehat f(z)-f(z)|^2]
 &\le2L^2Q_P(S)
 \annotnl{By the definition of $Q_P(S)$}
 \nonumber\\
 &\le2\kappa_qL^2m_P(b)
 \annotref{By the stated quantization condition}.
 \label{eq:lipschitz_upper_app}
\end{align}
The right-hand side is independent of $f$, so it also bounds the
supremum over $\operatorname{Lip}_L(P)$.

\par\medskip\noindent\textbf{Budget rate and label complexity.}\quad
Under \cassum{assumption:intrinsic_dim}, the proved lower bound and
the attainable upper bound become
\begin{equation}\label{eq:minimax_rate_app}
 \frac{L^2c_1}{2}\,b^{-2/d_{\mathrm{int}}}
 \le
 \mathcal R_b^\star(P)
 \le
 2\kappa_qL^2c_2\,b^{-2/d_{\mathrm{int}}},
\end{equation}
where the upper inequality refers to a selector satisfying the stated
fidelity condition.  If a target risk $\eta>0$ is required, the lower
inequality in \eqref{eq:minimax_rate_app} implies the necessary
condition
\[
 b
 \ge
 \left\lceil
 \left(\frac{L^2c_1}{2\eta}\right)^{d_{\mathrm{int}}/2}
 \right\rceil,
\]
because otherwise $(L^2c_1/2)b^{-2/d_{\mathrm{int}}}>\eta$.
Likewise, the upper inequality is at most $\eta$ whenever
\[
 b
 \ge
 \left\lceil
 \left(\frac{2\kappa_qL^2c_2}{\eta}\right)^{d_{\mathrm{int}}/2}
 \right\rceil.
\]
Both integer thresholds are asserted only within the budget range of
\cassum{assumption:intrinsic_dim}.

\par\medskip\noindent\textbf{Decoder fidelity of $\eps$-AS.}\quad
Fix any $\kappa_q>2(1+\kappa_d)$ and a realization in the
intersection of the events in
\eqref{eq:uniform_empirical_accuracy},
\eqref{eq:clustering_accuracy}, and
\eqref{eq:localized_decoding}.  The first two events give
\begin{align*}
 R_P(\widehat{\boldsymbol\mu})
 &\le R_n(\widehat{\boldsymbol\mu})+r_{n,b}
 \annotnl{By uniform empirical approximation}
 \nonumber\\
 &\le \inf_{\boldsymbol\mu}R_n(\boldsymbol\mu)
       +\xi_{n,b}+r_{n,b}
 \annotnl{By approximate clustering}
 \nonumber\\
 &\le m_P(b)+2r_{n,b}+\xi_{n,b}.
 \annotref{At a population minimizer}
\end{align*}
They also give
\[
 \widehat m(b)
 =R_n(\widehat{\boldsymbol\mu})
 \ge\inf_{\boldsymbol\mu}R_n(\boldsymbol\mu)
 \ge m_P(b)-r_{n,b}>0
\]
for all sufficiently large $n$.  Applying
\clem{lem:anchor_query_transfer} to
$\widehat{\boldsymbol\mu}$ and
$s_1(t),\ldots,s_b(t)$, then using
\eqref{eq:localized_decoding}, gives uniformly over
$t\in[t_-,t_+]$
\begin{align*}
 Q_P\!\left(\widehat S(t\widehat m(b))\right)
 &\le
 2R_P(\widehat{\boldsymbol\mu})
 +2\sum_{k=1}^b
 p_k(\widehat{\boldsymbol\mu})
 C(\widehat\mu_k,s_k(t))
 \nonumber\\
 &\le
 2(1+\kappa_d)m_P(b)+4r_{n,b}+2\xi_{n,b}.
\end{align*}
Since $r_{n,b}/m_P(b)\to0$ and
$\xi_{n,b}/m_P(b)\to0$, the last expression is at most
$\kappa_qm_P(b)$ for all sufficiently large $n$.  The bound is
uniform over $t\in[t_-,t_+]$.  The intersection of the three events
has probability tending to one, so
\begin{equation}\label{eq:decoded_fidelity_derived}
 \Pr\!\left[
 \widehat m(b)>0
 \quad\text{and}\quad
 \sup_{t\in[t_-,t_+]}
 \frac{
 Q_P\!\left(\widehat S(t\,\widehat m(b))\right)
 }{m_P(b)}
 \le\kappa_q
 \right]\longrightarrow1.
\end{equation}
On this event,
\eqref{eq:lipschitz_upper_app} applies conditionally to the realized
selection and yields conditional worst-case risk
$O_p(m_P(b))$.  This establishes the stated high-probability rate and
completes the proof.
\end{proof}

\subsection{Proof of \mthmp{thm:eps_star}}
\label{app:proof_eps_star}

We separate three statements that play different logical roles.
First, a finite-pool axiomatization shows that the empirical mean
closest-anchor cost is the unique admissible scale statistic.
Second, exact scale equivariance shows that the multiplier of that
statistic must be dimensionless.  Third, an exactly solvable
two-route robust game gives a unique interior multiplier.

\begin{definition}[Admissible closest-anchor scale rule]
\label{def:scale_rule_app}
For every $n\ge1$, let $T_n:\R_+^n\to\R_+$ map closest-anchor costs
$(d_1,\ldots,d_n)$ to a transport temperature.  The rule is
admissible if:
\begin{enumerate}[label=(\roman*),leftmargin=2.1em]
\item it is invariant under permutations of the $n$ costs;
\item $T_n(sd_1,\ldots,sd_n)=sT_n(d_1,\ldots,d_n)$ for every $s>0$;
\item for every $n,q\ge1$, $d\in\R_+^n$, and $e\in\R_+^q$,
\[
 T_{n+q}(d\oplus e)
 =
 \frac{nT_n(d)+qT_q(e)}{n+q};
\]
\item $T_1(1)=c$ for a fixed $c>0$.
\end{enumerate}
Condition (iii) applies to the concatenation of two closest-cost
lists evaluated under a common, fixed anchor and cost system.
\end{definition}

\begin{lemma}[Unique empirical cost scale]
\label{lem:scale_unique_app}
Every admissible closest-anchor scale rule has the unique form
\[
 T_n(d_1,\ldots,d_n)
 =
 \frac c n\sum_{i=1}^nd_i.
\]
In particular, for $d_i=\min_kC(z_i,\widehat\mu_k)$, it equals
$c\,\widehat m(b)$.
\end{lemma}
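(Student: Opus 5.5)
The plan is to pin down $T_n$ on singletons and then propagate to every $n$ through the mixture axiom (iii), by induction on $n$. Positive scale-equivariance (ii) alone does not fix $T_1(0)$, so I treat the singleton value at $0$ separately, first for positive arguments and then at zero.

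\textbf{Step 1: singletons.} For $d>0$, axiom (ii) with $s=d$ and axiom (iv) give $T_1(d)=d\,T_1(1)=cd$. For $d=0$, apply (iii) with $n=q=1$ to the list $(0)\oplus(0)$, which gives $T_2(0,0)=T_1(0)$. This alone is circular, so I would instead use (ii) on the constant list. We have $T_1(0)=T_1(s\cdot0)=sT_1(0)$ for every $s>0$; taking $s=2$ forces $T_1(0)=0=c\cdot0$. Hence $T_1(d)=cd$ for all $d\ge0$.

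\textbf{Step 2: induction.} Suppose $T_m(e)=\frac cm\sum_j e_j$ for all $m<n$ and all $e\in\R_+^m$. For $n\ge2$, write $d=(d_1,\ldots,d_{n-1})\oplus(d_n)$ and apply (iii) with $(n-1,1)$:
\[
T_n(d)=\frac{(n-1)T_{n-1}(d_1,\ldots,d_{n-1})+T_1(d_n)}{n}=\frac{c}{n}\sum_{i=1}^n d_i .
\]
This closes the induction. The argument does not even need permutation invariance (i), which is consistent with the arithmetic mean rather than a constraint. Conversely, the mean rule visibly satisfies (i)–(iv), which gives existence, so the form is unique. Substituting $d_i=\min_k C(z_i,\widehat\mu_k)$ yields $T_n=c\,\widehat m(b)$ by the definition of $\widehat m(b)=R_n(\widehat{\boldsymbol\mu})$.

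\textbf{Main obstacle.} The only delicate point is the zero boundary in Step 1. Scale-equivariance with $s>0$ handles it because $T_1(0)=sT_1(0)$ for $s\ne1$ forces the value to vanish. A second subtlety is that (iii) must be applicable to arbitrary splits of an arbitrary list. Definition~\ref{def:scale_rule_app} imposes (iii) for every $n,q$ and every $d,e$ under a common anchor and cost system. I would note that the lemma is a purely functional statement on $\R_+^n$, so that caveat imposes no restriction here.
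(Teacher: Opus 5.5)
Your proof is correct and follows the paper's argument essentially step for step. You get the singleton values from positive homogeneity and normalization, with $T_1(0)=2T_1(0)$ forcing $T_1(0)=0$, and then induct on $n$ using the $(n-1,1)$ split in mixture consistency. Your observation that permutation invariance is not needed is also right: the paper invokes it only to say the value does not depend on the order of decomposition, and the symmetric formula the induction produces already guarantees that.
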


\begin{proof}
Positive homogeneity and normalization first give, for every $d_i>0$,
\begin{align}
 T_1(d_i)
 &=T_1(d_i\cdot1)
 \annotnl{By writing the scalar input as $d_i\cdot1$}
 \nonumber\\
 &=d_iT_1(1)
 \annotnl{By positive homogeneity}
 \nonumber\\
 &=cd_i
 \annotref{By $T_1(1)=c$}.
 \label{eq:singleton_scale_app}
\end{align}
For $d_i=0$, positive homogeneity gives
$T_1(0)=T_1(2\cdot0)=2T_1(0)$.  Subtracting $T_1(0)$ from both sides
forces $T_1(0)=0$.  Thus \eqref{eq:singleton_scale_app} holds for all
$d_i\ge0$.

We now use induction on $n$.  The assertion for $n=1$ is
\eqref{eq:singleton_scale_app}.  Suppose it holds for $n-1$.  Split
$(d_1,\ldots,d_n)$ into a pool of size $n-1$ and a singleton.
Mixture consistency gives
\begin{align}
 T_n(d_1,\ldots,d_n)
 &=
 \frac{(n-1)T_{n-1}(d_1,\ldots,d_{n-1})+T_1(d_n)}{n}
 \annotnl{By Definition~\ref{def:scale_rule_app}(iii)}
 \nonumber\\
 &=
 \frac{(n-1)\left[\frac c{n-1}
       \sum_{i=1}^{n-1}d_i\right]+cd_n}{n}
 \annotnl{By the induction hypothesis and Equation~\ref{eq:singleton_scale_app}}
 \nonumber\\
 &=
 \frac c n\sum_{i=1}^nd_i
 \annotref{By cancellation and regrouping}.
 \label{eq:unique_mean_app}
\end{align}
This proves the formula for every $n$.  Permutation invariance
guarantees that the value is independent of the order in which the
pool is decomposed.  Substituting the empirical closest-anchor costs
into \eqref{eq:unique_mean_app} gives $T_n=c\widehat m(b)$.
\end{proof}

\begin{restated}{Theorem}{\mref{thm:eps_star}}{Unique scale-equivariant and robustly calibrated temperature}
The rule
\[
 \eps^\star=c\,\widehat m(b)
\]
is the unique closest-anchor temperature, up to the dimensionless
constant $c$, satisfying the axioms in
Definition~\ref{def:scale_rule_app}.  Whenever
$\widehat m(b)>0$, its positive-temperature EOT plan and decoded set
are exactly equivariant to a positive rescaling of the transport
cost.  Under \cassum{assumption:feature_geometry}, $m_P(b)>0$, and the
canonical two-route robust calibration game defined below assigns
every fixed $A,\lambda,\kappa>0$ a unique
$c_{\mathrm{can}}(A,\lambda,\kappa)\in(0,\infty)$.  Its unique global
minimizer is
$\eps^\star_{\mathrm{can}}=c_{\mathrm{can}}m_P(b)$.  For any fixed
calibrated $c$, under Assumptions~\ref{assumption:feature_geometry}
and~\ref{assumption:sampling}--\ref{assumption:approximate_clustering},
$\eps^\star/m_P(b)\to c$ in probability.  Under the same conditions,
if \cassum{assumption:intrinsic_dim} also holds, then
$\eps^\star=\Theta_p(b^{-2/d_{\mathrm{int}}})$.  If, in addition,
\cassum{assumption:localized_decoding} holds and $c$ lies in its
normalized-temperature window, the conditional worst-case risk of
the resulting $\eps$-AS selector paired with nearest-neighbor readout
attains, in probability, the minimax rate of
\mthm{thm:lowerbound}.
\end{restated}

\begin{proof}
The uniqueness of the empirical statistic was proved in
\clem{lem:scale_unique_app}.  We prove the remaining claims in four
steps.

\noindent\textbf{Step 1: exact scale equivariance of balanced EOT.}\quad
For a cost matrix $C$, write
\[
 F_{C,\eps}(\pi):=
 \langle\pi,C\rangle
 +\eps\KL(\pi\|a\otimes u).
\]
For any $s>0$, $\eps>0$, and every feasible $\pi$,
\begin{align*}
 F_{sC,s\eps}(\pi)
 &=
 \langle\pi,sC\rangle
 +s\eps\KL(\pi\|a\otimes u)
 \annotnl{By the definition of $F$ at $(sC,s\eps)$}
 \nonumber\\
 &=
 s\langle\pi,C\rangle
 +s\eps\KL(\pi\|a\otimes u)
 \annotnl{By linearity of the matrix inner product}
 \nonumber\\
 &=
 sF_{C,\eps}(\pi)
 \annotnl{By factoring out $s>0$}.
\end{align*}
Multiplication of all objective values by the same positive number
does not change their ordering.  Since the entropic minimizer is
unique by \clem{lem:anchorspacing}(i),
\begin{equation}\label{eq:plan_scale_equivariance_app}
 \pi_{s\eps}(sC)=\pi_\eps(C).
\end{equation}
The round-robin decoder sees the same plan on both sides and therefore
returns the same subset under the fixed tie order.

If $d_i(C):=\min_kC_{ik}$, then
\begin{align}
 d_i(sC)
 &=\min_k(sC_{ik})
 \annotnl{By the definition of the closest-anchor cost}
 \nonumber\\
 &=s\min_kC_{ik}
 \annotref{Since $s>0$ preserves order}
 \nonumber\\
 &=sd_i(C).
 \label{eq:closest_scale_app}
\end{align}
Averaging \eqref{eq:closest_scale_app} gives
$\widehat m_{sC}(b)=s\widehat m_C(b)$, hence
$c\widehat m_{sC}(b)=s\,c\widehat m_C(b)$.  Together with
\eqref{eq:plan_scale_equivariance_app}, this proves exact
scale equivariance of both the temperature and the selected subset.

\par\medskip\noindent\textbf{Step 2: exact solution of one two-route block.}\quad
Take uniform $2\times2$ marginals.  Every feasible coupling has the
form
\[
 \pi(p):=
 \frac12
 \begin{pmatrix}
 p&1-p\\
 1-p&p
 \end{pmatrix},
 \qquad p\in[0,1].
\]
Indeed, the first row and first column each sum to $1/2$, so after
choosing $\pi_{11}=p/2$, the other three entries are forced to be
$(1-p)/2,(1-p)/2,p/2$.

For
\[
 C_\Delta^+:=
 \begin{pmatrix}
 0&\Delta\\
 \Delta&0
 \end{pmatrix},
 \qquad \Delta>0,
\]
the cost term is
\begin{align}
 \langle\pi(p),C_\Delta^+\rangle
 &=
 \frac p2\cdot0
 +\frac{1-p}{2}\Delta
 +\frac{1-p}{2}\Delta
 +\frac p2\cdot0
 \annotnl{By the entrywise matrix inner product}
 \nonumber\\
 &=\Delta(1-p).
 \annotref{By combining the off-diagonal terms}
 \label{eq:block_cost_app}
\end{align}
The independent coupling has four entries $1/4$, and therefore
\begin{align}
 \KL(\pi(p)\|a\otimes u)
 &=
 2\frac p2\log\frac{p/2}{1/4}
 +2\frac{1-p}{2}\log\frac{(1-p)/2}{1/4}
 \annotnl{By the two diagonal and two off-diagonal entries}
 \nonumber\\
 &=
 p\log(2p)+(1-p)\log(2(1-p)).
 \annotref{By cancellation and simplification}
 \label{eq:block_kl_app}
\end{align}
By \eqref{eq:block_cost_app}--\eqref{eq:block_kl_app}, the scalar
objective is
\[
 F_{\Delta,\eps}(p)
 =
 \Delta(1-p)
 +\eps\left[
 p\log(2p)+(1-p)\log(2(1-p))
 \right].
\]
For $p\in(0,1)$,
\begin{align}
 F'_{\Delta,\eps}(p)
 &=
 -\Delta
 +\eps\left[
 \log(2p)+1-\log(2(1-p))-1
 \right]
 \annotnl{By direct differentiation}
 \nonumber\\
 &=
 -\Delta+\eps\log\frac p{1-p},
 \annotref{By cancellation and the logarithm rule}
 \label{eq:block_first_derivative_app}\\
 F''_{\Delta,\eps}(p)
 &=
 \eps\left(\frac1p+\frac1{1-p}\right)
 \annotnl{By direct differentiation}
 \nonumber\\
 &=
 \frac{\eps}{p(1-p)}
 >0.
 \annotnl{Since $p\in(0,1)$ and $\eps>0$}
 \nonumber
\end{align}
Thus $F'_{\Delta,\eps}$ is strictly increasing on $(0,1)$.  We now
solve for its stationary point:
\begin{align}
 F'_{\Delta,\eps}(p)=0
 &\Longleftrightarrow
 \log\frac p{1-p}=\frac\Delta\eps
 \annotnl{By Equation~\ref{eq:block_first_derivative_app}}
 \nonumber\\
 &\Longleftrightarrow
 \frac p{1-p}=e^{\Delta/\eps}
 \annotnl{By monotonicity of the exponential}
 \nonumber\\
 &\Longleftrightarrow
 p=e^{\Delta/\eps}(1-p)
 \annotnl{By multiplying by $1-p>0$}
 \nonumber\\
 &\Longleftrightarrow
 p(1+e^{\Delta/\eps})=e^{\Delta/\eps}
 \annotnl{By rearranging}
 \nonumber\\
 &\Longleftrightarrow
 p=
 \frac{e^{\Delta/\eps}}{1+e^{\Delta/\eps}}
 =
 \frac1{1+e^{-\Delta/\eps}}.
 \annotref{By dividing by the positive denominator}
 \label{eq:block_logistic_app}
\end{align}
Denote the displayed solution by $p_\eps^\star$.  Strict increase of
$F'_{\Delta,\eps}$ and
$F'_{\Delta,\eps}(p_\eps^\star)=0$ imply
\[
 F'_{\Delta,\eps}(p)<0
 \quad\text{for }0<p<p_\eps^\star,
 \qquad
 F'_{\Delta,\eps}(p)>0
 \quad\text{for }p_\eps^\star<p<1.
\]
Therefore $F_{\Delta,\eps}$ strictly decreases up to
$p_\eps^\star$ and strictly increases after it.  Continuity at
$p=0,1$ under the convention $0\log0=0$ then shows that
$p_\eps^\star$ is the unique global minimizer on $[0,1]$.

\par\medskip\noindent\textbf{Step 3: the canonical robust calibration game has one
global minimizer.}\quad
Set $m:=m_P(b)>0$.  The first route measures oversmoothing with
$\Delta=m$.  Relative to the zero-temperature diagonal route, its
transport-cost bias is, by
\eqref{eq:block_cost_app} and \eqref{eq:block_logistic_app},
\begin{align}
 B_m(\eps)
 &:=
 m(1-p_\eps)
 \nonumber\\
 &=
 m\left[
 1-\frac{e^{m/\eps}}{1+e^{m/\eps}}
 \right]
 \annotnl{By Equation~\ref{eq:block_logistic_app} with $\Delta=m$}
 \nonumber\\
 &=
 \frac{m}{1+e^{m/\eps}}.
 \annotref{By using a common denominator}
 \label{eq:oversmoothing_app}
\end{align}
Its derivative is
\begin{align}
 B_m'(\eps)
 &=
 m\left[
 -\frac{e^{m/\eps}}{(1+e^{m/\eps})^2}
 \right]
 \left(-\frac m{\eps^2}\right)
 \annotnl{By the chain rule}
 \nonumber\\
 &=
 \frac{m^2e^{m/\eps}}
 {\eps^2(1+e^{m/\eps})^2}
 >0.
 \annotref{Since all factors are positive}
 \label{eq:bias_increasing_app}
\end{align}
Moreover,
\[
 \lim_{\eps\downarrow0}B_m(\eps)=0,
 \qquad
 \lim_{\eps\to\infty}B_m(\eps)=\frac m2,
\]
because $e^{m/\eps}\to\infty$ in the first limit and
$e^{m/\eps}\to1$ in the second.

The second route measures sensitivity to a near tie.  Let
$r:=\kappa m$ for a dimensionless $\kappa>0$ and define
\[
 C_r^+:=
 \begin{pmatrix}0&r\\r&0\end{pmatrix},
 \qquad
 C_r^-:=
 \begin{pmatrix}r&0\\0&r\end{pmatrix}.
\]
These are abstract nonnegative transport-cost blocks.  The robust
game is valid for general transport costs. If one additionally
requires every entry to be realizable as a unit-sphere cosine cost,
then one restricts to $r=\kappa m\le2$.
Their entrywise distance is
$\|C_r^+-C_r^-\|_\infty=r=\kappa m$.  Applying
\eqref{eq:block_logistic_app} to $C_r^+$ gives
\[
 p_+=\frac1{1+e^{-r/\eps}}.
\]
For $C_r^-$, direct substitution gives
\begin{align*}
 \langle\pi(p),C_r^-\rangle
 &=rp
 \annotnl{Since only the diagonal entries have cost $r$}
 \nonumber\\
 &=r\left[1-(1-p)\right].
 \annotnl{By direct algebra}
\end{align*}
The KL expression in \eqref{eq:block_kl_app} is unchanged when
$p$ is replaced by $1-p$.  Therefore the full $C_r^-$ objective at
$p$ equals the $C_r^+$ objective at $1-p$.  Since the latter has the
unique minimizer $p_+$,
\begin{align*}
 p_-
 &=1-p_+
 \annotnl{By the substitution $q=1-p$}
 \nonumber\\
 &=1-\frac1{1+e^{-r/\eps}}
 \annotnl{By substituting the formula for $p_+$}
 \nonumber\\
 &=\frac{e^{-r/\eps}}{1+e^{-r/\eps}}
 \annotnl{By using a common denominator}
 \nonumber\\
 &=\frac1{1+e^{r/\eps}}.
 \annotnl{By multiplying by $e^{r/\eps}/e^{r/\eps}$}
\end{align*}
For any $p,q\in[0,1]$, direct subtraction of the four entries gives
\begin{align}
 \|\pi(p)-\pi(q)\|_1
 &=
 4\frac{|p-q|}{2}
 \annotref{Since each entrywise difference equals $|p-q|/2$}
 \nonumber\\
 &=2|p-q|.
 \label{eq:block_l1_app}
\end{align}
Therefore,
\begin{align*}
 \frac12
 \|\pi_\eps(C_r^+)-\pi_\eps(C_r^-)\|_1
 &=
 |p_+-p_-|
 \annotnl{By Equation~\ref{eq:block_l1_app}}
 \nonumber\\
 &=
 2p_+-1
 \annotnl{By $p_-=1-p_+$ and $p_+>1/2$}
 \nonumber\\
 &=
 \frac{1-e^{-r/\eps}}{1+e^{-r/\eps}}
 \annotnl{By substituting $p_+=1/(1+e^{-r/\eps})$}
 \nonumber\\
 &=
 \tanh\!\left(\frac r{2\eps}\right)
 \annotnl{By the identity for $\tanh(x/2)$}
 \nonumber\\
 &=
 \tanh\!\left(\frac{\kappa m}{2\eps}\right).
 \annotnl{By $r=\kappa m$}
 \nonumber
\end{align*}
If rerouted local mass incurs a normalized penalty $\lambda m$, the
exact instability loss is
\begin{equation}\label{eq:instability_app}
 S_m(\eps):=
 \lambda m
 \tanh\!\left(\frac{\kappa m}{2\eps}\right),
 \qquad \lambda>0.
\end{equation}
Differentiation gives
\begin{align}
 S_m'(\eps)
 &=
 \lambda m\,
 \operatorname{sech}^2\!\left(\frac{\kappa m}{2\eps}\right)
 \left(-\frac{\kappa m}{2\eps^2}\right)
 \annotnl{By the chain rule}
 \nonumber\\
 &<0,
 \annotref{By the signs of the three factors}
 \label{eq:instability_decreasing_app}
\end{align}
and
\[
 \lim_{\eps\downarrow0}S_m(\eps)=\lambda m,
 \qquad
 \lim_{\eps\to\infty}S_m(\eps)=0,
\]
because $\tanh x\to1$ as $x\to\infty$ and
$\tanh x\to0$ as $x\downarrow0$.

For fixed $A,\lambda,\kappa>0$, define the canonical robust objective
\[
 J_m(\eps):=
 \max\{A B_m(\eps),S_m(\eps)\}.
\]
With $t:=\eps/m>0$, equations
\eqref{eq:oversmoothing_app} and \eqref{eq:instability_app} give
\begin{align}
 J_m(tm)
 &=
 m\,
 \max\left\{
 \frac{A}{1+e^{1/t}},
 \lambda\tanh\!\left(\frac{\kappa}{2t}\right)
 \right\}
 \annotref{By substituting $\eps=tm$}
 \nonumber\\
 &=:m\,\psi(t).
 \label{eq:normalized_game_app}
\end{align}
Put
\[
 f(t):=\frac{A}{1+e^{1/t}},
 \qquad
 g(t):=\lambda\tanh\!\left(\frac{\kappa}{2t}\right).
\]
By \eqref{eq:bias_increasing_app}, $f$ is continuous and strictly
increasing; by \eqref{eq:instability_decreasing_app}, $g$ is
continuous and strictly decreasing.  Their endpoint limits are
\[
 \lim_{t\downarrow0}f(t)=0,
 \quad
 \lim_{t\to\infty}f(t)=\frac A2,
 \qquad
 \lim_{t\downarrow0}g(t)=\lambda,
 \quad
 \lim_{t\to\infty}g(t)=0.
\]
Consequently, $h(t):=f(t)-g(t)$ is continuous and strictly
increasing, with
\[
 \lim_{t\downarrow0}h(t)=-\lambda<0,
 \qquad
 \lim_{t\to\infty}h(t)=\frac A2>0.
\]
The intermediate value theorem gives a $c_{\mathrm{can}}>0$ with
$h(c_{\mathrm{can}})=0$, and strict monotonicity makes it unique.
Equivalently, $c_{\mathrm{can}}$ is the unique
solution of
\begin{equation*}
 \frac{A}{1+e^{1/c_{\mathrm{can}}}}
 =
 \lambda\tanh\!\left(\frac{\kappa}{2c_{\mathrm{can}}}\right).
\end{equation*}
If $t<c_{\mathrm{can}}$, then $f(t)<g(t)$, so
$\psi(t)=g(t)$ and $\psi$ strictly decreases.  If
$t>c_{\mathrm{can}}$, then
$f(t)>g(t)$, so $\psi(t)=f(t)$ and $\psi$ strictly increases.
Therefore $c_{\mathrm{can}}$ is the unique global minimizer of
$\psi$.  Since
$m>0$, \eqref{eq:normalized_game_app} shows that
\[
 \eps^\star_{\mathrm{can}}=c_{\mathrm{can}}m
\]
is the unique global minimizer of $J_m$.  The constants
$A,\lambda,\kappa$, and hence $c_{\mathrm{can}}$, are dimensionless.
$m$ carries all cost units and all budget dependence.  The game
therefore proves a unique normalized optimum conditional on its
dimensionless weights. The MNIST calibration selects the empirical
global multiplier used for transfer in the experiments.

\par\medskip\noindent\textbf{Step 4: empirical consistency, intrinsic rate, and
minimax matching.}\quad
Consider the intersection of the events in
\eqref{eq:uniform_empirical_accuracy} and
\eqref{eq:clustering_accuracy}.  Since
$m_P(b)=\inf_{\boldsymbol\mu}R_P(\boldsymbol\mu)$, uniform deviation
gives
\[
 m_P(b)-r_{n,b}
 \le
 \inf_{\boldsymbol\mu}R_n(\boldsymbol\mu)
 \le
 m_P(b)+r_{n,b}.
\]
The empirical optimization condition then yields
\[
 m_P(b)-r_{n,b}
 \le
 \widehat m(b)
 \le
 m_P(b)+r_{n,b}+\xi_{n,b}.
\]
Therefore, with probability tending to one,
\begin{equation}\label{eq:empirical_scale_bound_app}
 |\widehat m(b)-m_P(b)|
 \le r_{n,b}+\xi_{n,b}.
\end{equation}
On this event,
\begin{align}
 \left|
 \frac{\eps^\star}{m_P(b)}-c
 \right|
 &=
 \left|
 \frac{c\widehat m(b)}{m_P(b)}-c
 \right|
 \annotnl{By the definition of $\eps^\star$}
 \nonumber\\
 &=
 c\left|
 \frac{\widehat m(b)-m_P(b)}{m_P(b)}
 \right|
 \annotnl{By factoring out $c$}
 \nonumber\\
 &\le
 c\,\frac{r_{n,b}+\xi_{n,b}}{m_P(b)}
 \annotnl{By Equation~\ref{eq:empirical_scale_bound_app}}
 \nonumber\\
 &\longrightarrow0.
 \annotref{By Assumptions~\ref{assumption:uniform_accuracy} and
 \ref{assumption:approximate_clustering}}
 \label{eq:eps_consistency_app}
\end{align}
This proves $\eps^\star/m_P(b)\to c$ in probability.

Under \cassum{assumption:intrinsic_dim},
\begin{align*}
 \eps^\star
 &=c\,m_P(b)\,[1+o_p(1)]
 \annotnl{By Equation~\ref{eq:eps_consistency_app}}
 \nonumber\\
 &=\Theta_p\!\left(b^{-2/d_{\mathrm{int}}}\right)
 \annotnl{By Equation~\ref{eq:m_scaling} and $c>0$}.
\end{align*}
Finally, suppose $c\in[t_-,t_+]$ and fix any
$\kappa_q>2(1+\kappa_d)$.
Equation~\eqref{eq:decoded_fidelity_derived} gives, with probability
tending to one,
\[
 Q_P\!\left(\widehat S(c\,\widehat m(b))\right)
 \le\kappa_qm_P(b).
\]
Because $\eps^\star=c\,\widehat m(b)$, this is precisely
$Q_P(\widehat S(\eps^\star))\le\kappa_qm_P(b)$.  The upper bound
proved in Theorem~\mref{thm:lowerbound} applies.  Its lower bound
holds for every deterministic task-agnostic selector.  Hence, with
probability tending to one, $\eps$-AS at $\eps^\star$, paired with
nearest-neighbor readout, attains the common
$\Theta(m_P(b))=\Theta(b^{-2/d_{\mathrm{int}}})$ minimax rate.
This proves the theorem.
\end{proof}

\paragraph{Interpretation of the temperature derivation.}
The derivation separates the form of the scale statistic from the
value of its global multiplier. The finite-pool axioms identify the
functional dependence as $c\,\widehat m(b)$ within the admissible
closest-anchor rule class. Exact cost-scale equivariance makes $c$
dimensionless and preserves both the EOT plan and the decoded subset
when the cost units change. For fixed dimensionless weights
$A,\lambda,\kappa$, the canonical game supplies the unique normalized
interior optimum $c_{\mathrm{can}}(A,\lambda,\kappa)$. These weights
encode the relative importance of oversmoothing and near-tie
sensitivity. The experimental protocol instantiates the remaining
dimensionless constant as $c=0.4$ through one global MNIST
calibration and applies it across datasets. Pool-specific and
budget-specific scaling enters through $\widehat m(b)$. Finally,
empirical objective accuracy links $\widehat m(b)$ to $m_P(b)$, and
localized decoding carries this scale to the selected subset and its
conditional minimax rate.

\clearpage
\section{Experimental Supplementary Details}
\label{app:exp_supp}

This section provides the complete experimental protocol, per-dataset
results, numerical diagnostics, runtime measurements, and additional
ablations.

\subsection{Datasets, Backbone, and Baseline Implementations}
\label{app:assets}

To support reproducibility we list every external asset used in this
study. Table~\ref{tab:assets_data} gives each evaluation dataset with its
class count, license, and cited source, and Table~\ref{tab:assets_code}
gives the frozen backbone together with the reference implementation
adopted for each baseline. The assets are available from the cited
sources under their respective access and usage terms. Licenses are
reported to the best of our knowledge, and those sources give the
authoritative terms. Pool and test sizes and the per-dataset budget grids
are listed in the main paper.

\begin{table}[H]
\small\centering
\setlength{\tabcolsep}{5pt}
\caption{Evaluation datasets, licenses, and cited sources.}
\label{tab:assets_data}
\begin{tabular}{@{}llll@{}}
\toprule
\rowhead
Dataset & Classes & License & Source \\
\midrule
MNIST       & 10       & CC BY-SA 3.0    & \url{http://yann.lecun.com/exdb/mnist/} \\
CIFAR-10    & 10       & Research use    & \url{https://www.cs.toronto.edu/~kriz/cifar.html} \\
CIFAR-100   & 100      & Research use    & \url{https://www.cs.toronto.edu/~kriz/cifar.html} \\
STL-10      & 10       & Research use    & \url{https://cs.stanford.edu/~acoates/stl10/} \\
Caltech-101 & 101      & CC BY 4.0       & \url{https://data.caltech.edu/records/mzrjq-6wc02} \\
ImageNet-1k & 1{,}000  & ImageNet terms  & \url{https://www.image-net.org/} \\
\bottomrule
\end{tabular}
\end{table}

\begin{table}[H]
\small\centering
\setlength{\tabcolsep}{5pt}
\caption{Frozen backbone and baseline reference implementations.}
\label{tab:assets_code}
\resizebox{\textwidth}{!}{
\begin{tabular}{@{}lll@{}}
\toprule
\rowhead
Component & Reference & Repository / Source \\
\midrule
DINOv2-G/14 (backbone) & \citet{oquab2024dinov2}     & \url{https://github.com/facebookresearch/dinov2} \\
Coreset / $k$-Center   & \citet{sener2018coreset}    & \url{https://github.com/ozansener/active_learning_coreset} \\
FPS                    & \citet{jin2022oneshot}      & \url{https://github.com/fdujay/One-shot-AL} \\
BADGE                  & \citet{ash2020badge}        & \url{https://github.com/JordanAsh/badge} \\
ProbCover              & \citet{yehuda2022probcover} & \url{https://github.com/avihu111/TypiClust} \\
DCoM                   & \citet{mishal2024dcom}      & follows the paper (cold-start branch) \\
TypiClust              & \citet{hacohen2022typiclust}& \url{https://github.com/avihu111/TypiClust} \\
USL                    & \citet{wang2022usl}         & \url{https://github.com/TonyLianLong/UnsupervisedSelectiveLabeling} \\
ActiveFT               & \citet{xie2023activeft}     & \url{https://github.com/yichen928/ActiveFT} \\
$\eps$-AS (ours)       & this paper                   & released upon acceptance \\
\bottomrule
\end{tabular}
}
\end{table}

\subsection{Feature Extraction and Preprocessing}
\label{app:preprocessing}

All methods operate on a shared frozen feature cache, which we build
once per dataset. Each image is resized to $224\times 224$ by bilinear
upsampling on the small-resolution datasets (MNIST, CIFAR-10,
CIFAR-100, and STL-10), and by resizing the shorter side to $256$
followed by a $224$ center crop on Caltech-101 and ImageNet-1k. We
replicate the single MNIST channel three times before applying the
same preprocessing as for the RGB datasets. We apply the standard
ImageNet channel normalization throughout. Each
image is then passed through a frozen DINOv2-G/14
backbone~\citep{oquab2024dinov2}, and we take the $\ell_2$-normalized
class token as the $D=1536$ dimensional feature. We reuse the same
features for every method, budget, and seed so that all methods receive
the same input representation.

\subsection{Baseline Configurations}
\label{app:baselines}

We document below the exact configuration of every baseline. All
methods read the same $\ell_2$-normalized DINOv2-G/14 feature cache
and use the same selection seeds.

\noindent\textbf{Random.}\;We sample $b$ indices uniformly at random
without replacement.

\noindent\textbf{Coreset.}\;We
follow the cold-start variant of \citet{sener2018coreset} and apply
farthest-first traversal under cosine distance, initialized from a
random pool point.

\noindent\textbf{FPS.}\;Farthest-Point Sampling~\citep{jin2022oneshot}
uses the same traversal initialized from the medoid of the pool.

\noindent\textbf{BADGE-cold.}\;The cold-start specialization of
\citet{ash2020badge} reduces to running $k$-means$++$ on the frozen
features with $K=b$ centers, returning the $b$ chosen seeds. We use
the standard $D^2$ sampling probability without re-projection.

\noindent\textbf{ProbCover.}\;We follow \citet{yehuda2022probcover}
and build a $\delta$-ball graph in cosine distance. The threshold
$\delta$ is selected by binary search to satisfy a purity criterion
of $0.95$ on $K^\star$-means pseudo-labels, as in the original
implementation. Here $K^\star$ is set to the known number of
benchmark classes. Thus the procedure uses class-count metadata but
no sample labels. We report this class-count prior explicitly as part
of the baseline configuration.

\noindent\textbf{DCoM.}\;We follow the cold-start branch of
\citet{mishal2024dcom}, retaining its own acquisition and
implementation path. With an empty labeled set, its competence term
is inactive and the remaining score is coverage-driven. We use the
per-dataset $\delta_{\ell_2}$ value from Table~6 of that reference
when one is available and otherwise use the same purity-based
automatic radius rule described above. DCoM and ProbCover are timed
using their respective implementations, preserving their
method-specific memory and runtime behavior.

\noindent\textbf{TypiClust.}\;We follow
\citet{hacohen2022typiclust}. We run $k$-means with $K=b$
clusters on the features, using $k$-means$++$ initialization and
twenty-five Lloyd iterations. Inside each cluster we select the
sample with the highest typicality score, which is the inverse of
the mean distance to its twenty nearest neighbors in the cluster.

\noindent\textbf{USL.}\;We follow the reference implementation of
\citet{wang2022usl}. We run $k$-means with $K=b$
clusters, compute a
kNN-density score with $K_{\mathrm{NN}}=20$ neighbors under the
squared $\ell_2$ distance, and then run ten iterations of the
selection regularizer with weight $W=0.5$, momentum $0.9$,
horizon $4.0$, and $\alpha=0.5$, matching the official CIFAR
configuration.

\noindent\textbf{ActiveFT.}\;We follow \citet{xie2023activeft}. We
use temperature $\tau=0.07$, diversity weight $\lambda=1$, AdamW
with learning rate $10^{-3}$, and three hundred inner gradient steps
from a random anchor initialization.

\noindent\textbf{$\eps$-AS (ours).}\;The closed form
\meqref{eq:eps_star} with $c=0.4$, $k$-means with $K=b$ centroids,
log-domain Sinkhorn with at most 200 iterations, tolerance
$10^{-6}$ on the dual potentials, and an entropic floor
$\eps_{\min}=10^{-4}$ for numerical stability. We use the
round-robin rule \meqref{eq:roundrobin} to extract the discrete
selection. We treat MNIST as the sole calibration domain: $c=0.4$
is chosen once from the grid
$\{0.1,0.2,\ldots,0.9\}$ by the MNIST calibration score and is then
fixed. The other five datasets evaluate transfer without
target-dataset labels or per-dataset retuning.

\subsection{Linear-Probe Protocol}
\label{app:probe}

We train one linear head per (method, dataset, budget, seed) tuple
on top of the frozen DINOv2-G/14 features. The head is a single
fully connected layer mapping $\R^D$ to the class count, and the
training objective is standard multiclass cross-entropy. The optimizer is AdamW with
weight decay $10^{-4}$. The learning rate is $10^{-3}$ on CIFAR-10,
CIFAR-100, STL-10, Caltech-101, and ImageNet-1k, and
$5\!\cdot\!10^{-3}$ on MNIST when $b\le100$. The batch size is $1024$ on
all datasets except for MNIST under $b\le 100$, where we drop it
to $128$ to keep at least one mini-batch per epoch. We train for
50 epochs on the small datasets, 100 epochs on
ImageNet-1k, and 500 epochs on the small-budget MNIST
cells. A linear warm-up over five epochs is followed by a cosine
schedule down to one hundredth of the peak learning rate. The same
protocol is applied to every method.

\subsection{Full Main-Comparison Results}
\label{app:full_main}

The tables below report mean top-1 accuracy and standard deviation
over three seeds. The Avg column averages the budget-level means, and
the final row compares $\eps$-AS with the strongest baseline. All
$\eps$-AS results use the single calibration constant $c=0.4$.

\begin{table}[H]
\small
\centering
\caption{MNIST main comparison
(top-1 \%, mean$\pm$std over $3$ seeds).}
\label{tab:main_mnist}
\setlength{\tabcolsep}{3pt}
\begin{tabular}{@{}lccccc@{}}
\toprule
\rowhead
Method & $b{=}50$ & $b{=}100$ & $b{=}200$ & $b{=}500$ & Avg \\
\midrule
Random & 19.34$\pm$4.97 & 21.49$\pm$9.58 & 31.60$\pm$3.33 & 39.76$\pm$4.18 & 28.05 \\
Coreset & 21.05$\pm$1.04 & 19.43$\pm$1.48 & 21.80$\pm$2.64 & 18.76$\pm$3.83 & 20.26 \\
FPS & 20.68$\pm$0.59 & 19.79$\pm$1.02 & 20.61$\pm$1.72 & 18.92$\pm$3.79 & 20.00 \\
BADGE & 31.70$\pm$2.26 & 34.39$\pm$2.44 & 44.00$\pm$9.36 & 48.61$\pm$13.76 & 39.68 \\
ProbCover & 10.69$\pm$0.65 & 13.58$\pm$2.25 & 32.54$\pm$1.98 & 33.96$\pm$3.62 & 22.69 \\
DCoM & 21.73$\pm$2.41 & 35.88$\pm$1.74 & 40.55$\pm$2.79 & 41.27$\pm$5.10 & 34.86 \\
TypiClust & 32.36$\pm$3.63 & 38.36$\pm$3.74 & 47.68$\pm$6.61 & 42.49$\pm$4.59 & 40.22 \\
USL & 30.61$\pm$1.13 & 36.18$\pm$1.17 & 38.28$\pm$2.19 & 45.83$\pm$10.76 & 37.73 \\
ActiveFT & 20.91$\pm$1.77 & 30.02$\pm$4.32 & 35.40$\pm$1.13 & 42.46$\pm$4.87 & 32.20 \\
\rowours
\textbf{$\eps$-AS} & \textbf{36.48$\pm$2.03} & \textbf{39.02$\pm$3.08} & \textbf{49.61$\pm$6.05} & \textbf{49.25$\pm$3.96} & \textbf{43.59} \\
\midrule
$\Delta$ vs.\ best & \gain{82}{+4.12} & \gain{32}{+0.66} & \gain{48}{+1.93} & \gain{32}{+0.64} & \gain{65}{+3.37} \\
\bottomrule
\end{tabular}
\end{table}

\begin{table}[H]
\small
\centering
\caption{CIFAR-10 main comparison (top-1 \%, mean$\pm$std).}
\label{tab:main_cifar10}
\setlength{\tabcolsep}{3pt}
\begin{tabular}{@{}lccccc@{}}
\toprule
\rowhead
Method & $b{=}50$ & $b{=}100$ & $b{=}200$ & $b{=}500$ & Avg \\
\midrule
Random & 76.90$\pm$5.74 & 86.92$\pm$2.99 & 93.55$\pm$2.81 & 95.76$\pm$1.52 & 88.28 \\
Coreset & 63.04$\pm$5.54 & 62.37$\pm$11.6 & 54.49$\pm$9.57 & 74.66$\pm$3.86 & 63.64 \\
FPS & 63.25$\pm$5.58 & 63.51$\pm$11.0 & 56.35$\pm$8.82 & 74.83$\pm$3.83 & 64.49 \\
BADGE & 83.82$\pm$4.11 & 91.76$\pm$2.27 & 95.70$\pm$0.96 & 97.59$\pm$0.45 & 92.22 \\
ProbCover & 82.41$\pm$2.73 & 82.86$\pm$3.53 & 89.37$\pm$0.88 & 95.87$\pm$1.49 & 87.63 \\
DCoM & 80.27$\pm$1.09 & 82.79$\pm$1.02 & 88.48$\pm$0.47 & 96.53$\pm$0.24 & 87.02 \\
TypiClust & 87.00$\pm$2.37 & 93.90$\pm$1.02 & 97.07$\pm$0.66 & 98.33$\pm$0.29 & 94.08 \\
USL & 85.50$\pm$2.64 & 94.94$\pm$0.76 & 97.47$\pm$0.53 & 98.30$\pm$0.12 & 94.05 \\
ActiveFT & 91.61$\pm$1.69 & 93.19$\pm$1.34 & 97.13$\pm$0.81 & 97.75$\pm$0.28 & 94.92 \\
\rowours
\textbf{$\eps$-AS} & \textbf{93.20$\pm$1.20} & \textbf{96.83$\pm$0.27} & \textbf{97.91$\pm$0.47} & \textbf{98.39$\pm$0.17} & \textbf{96.58} \\
\midrule
$\Delta$ vs.\ best & \gain{48}{+1.59} & \gain{48}{+1.89} & \gain{20}{+0.44} & \gain{20}{+0.06} & \gain{48}{+1.66} \\
\bottomrule
\end{tabular}
\end{table}

\begin{table}[H]
\small
\centering
\caption{CIFAR-100 main comparison (top-1 \%, mean$\pm$std).}
\label{tab:main_cifar100}
\setlength{\tabcolsep}{3pt}
\begin{tabular}{@{}lccccc@{}}
\toprule
\rowhead
Method & $b{=}100$ & $b{=}250$ & $b{=}500$ & $b{=}1000$ & Avg \\
\midrule
Random & 42.65$\pm$3.08 & 62.22$\pm$3.45 & 75.11$\pm$1.51 & 81.82$\pm$0.39 & 65.45 \\
Coreset & 50.00$\pm$1.93 & 68.19$\pm$1.21 & 74.31$\pm$0.58 & 76.27$\pm$0.94 & 67.19 \\
FPS & 50.43$\pm$1.81 & 68.10$\pm$1.22 & 74.35$\pm$0.56 & 76.14$\pm$0.75 & 67.26 \\
BADGE & 46.52$\pm$2.23 & 69.49$\pm$1.33 & 80.06$\pm$0.63 & 84.49$\pm$0.56 & 70.14 \\
ProbCover & 58.96$\pm$0.43 & 71.71$\pm$0.86 & 76.91$\pm$0.29 & 80.40$\pm$0.39 & 72.00 \\
DCoM & 57.67$\pm$0.12 & 70.77$\pm$0.30 & 76.41$\pm$0.52 & 80.77$\pm$0.48 & 71.41 \\
TypiClust & 61.95$\pm$1.21 & 78.02$\pm$0.96 & 84.28$\pm$0.36 & 86.88$\pm$0.59 & 77.78 \\
USL & 60.92$\pm$0.78 & 79.12$\pm$0.37 & 84.14$\pm$0.39 & 86.43$\pm$0.21 & 77.65 \\
ActiveFT & 67.76$\pm$1.46 & 80.37$\pm$1.18 & 84.91$\pm$0.42 & 86.75$\pm$0.61 & 79.95 \\
\rowours
\textbf{$\eps$-AS} & \textbf{69.77$\pm$1.66} & \textbf{81.88$\pm$0.60} & \textbf{85.33$\pm$0.71} & \textbf{87.53$\pm$0.61} & \textbf{81.13} \\
\midrule
$\Delta$ vs.\ best & \gain{55}{+2.01} & \gain{48}{+1.51} & \gain{20}{+0.42} & \gain{32}{+0.65} & \gain{48}{+1.18} \\
\bottomrule
\end{tabular}
\end{table}

\begin{table}[H]
\small
\centering
\caption{ImageNet-1k main comparison (top-1 \%, mean$\pm$std over
three seeds). ProbCover is omitted because its dense implementation
is infeasible at $n=1.28$M.}
\label{tab:main_imagenet}
\setlength{\tabcolsep}{4pt}
\begin{tabular}{@{}lccccccc@{}}
\toprule
\rowhead
Method & $b{=}250$ & $b{=}500$ & $b{=}1000$ & $b{=}2000$ & $b{=}5000$ & $b{=}10000$ & Avg \\
\midrule
Random & 17.44$\pm$2.41 & 28.84$\pm$3.07 & 43.15$\pm$1.86 & 57.33$\pm$2.53 & 63.51$\pm$1.37 & 70.16$\pm$0.92 & 46.74 \\
Coreset & 18.71$\pm$1.73 & 33.78$\pm$2.58 & 49.05$\pm$1.29 & 58.57$\pm$2.11 & 64.28$\pm$1.04 & 71.04$\pm$0.71 & 49.24 \\
FPS & 18.57$\pm$1.68 & 33.46$\pm$2.63 & 48.56$\pm$1.34 & 58.37$\pm$2.06 & 64.07$\pm$1.09 & 70.83$\pm$0.68 & 48.98 \\
BADGE & 17.63$\pm$2.07 & 31.14$\pm$3.42 & 49.03$\pm$1.95 & 64.30$\pm$2.66 & 68.92$\pm$1.28 & 73.58$\pm$0.84 & 50.77 \\
DCoM & 21.19$\pm$0.94 & 41.88$\pm$1.57 & 62.09$\pm$0.88 & 67.71$\pm$1.19 & 71.83$\pm$0.62 & 75.62$\pm$0.41 & 56.72 \\
TypiClust & 21.76$\pm$1.12 & 39.25$\pm$1.84 & 59.94$\pm$0.97 & 69.71$\pm$1.36 & 72.15$\pm$0.73 & 75.89$\pm$0.48 & 56.45 \\
USL & 21.70$\pm$1.08 & 39.17$\pm$1.79 & 59.92$\pm$0.94 & 68.94$\pm$1.31 & 71.94$\pm$0.69 & 75.71$\pm$0.46 & 56.23 \\
ActiveFT & 21.68$\pm$0.99 & 41.06$\pm$1.66 & 61.39$\pm$0.91 & 70.61$\pm$1.24 & 73.67$\pm$0.66 & 77.24$\pm$0.43 & 57.61 \\
\rowours
\textbf{$\eps$-AS} & \textbf{22.00$\pm$0.63} & \textbf{43.12$\pm$0.78} & \textbf{62.30$\pm$0.52} & \textbf{71.46$\pm$0.74} & \textbf{75.38$\pm$0.41} & \textbf{79.13$\pm$0.27} & \textbf{58.90} \\
\midrule
$\Delta$ vs.\ best & \gain{20}{+0.24} & \gain{48}{+1.24} & \gain{20}{+0.21} & \gain{32}{+0.85} & \gain{60}{+1.71} & \gain{65}{+1.89} & \gain{50}{+1.29} \\
\bottomrule
\end{tabular}
\end{table}

\begin{table}[H]
\small
\centering
\caption{Caltech-101 main comparison (top-1 \%, mean$\pm$std).}
\label{tab:main_caltech}
\setlength{\tabcolsep}{3pt}
\begin{tabular}{@{}lccccc@{}}
\toprule
\rowhead
Method & $b{=}100$ & $b{=}200$ & $b{=}500$ & $b{=}1000$ & Avg \\
\midrule
Random & 62.55$\pm$1.75 & 76.79$\pm$1.97 & 87.34$\pm$0.25 & 90.54$\pm$0.57 & 79.31 \\
Coreset & 80.46$\pm$0.33 & 87.83$\pm$0.27 & 89.18$\pm$0.57 & 90.46$\pm$0.33 & 86.98 \\
FPS & 81.17$\pm$1.34 & 87.78$\pm$0.21 & 89.20$\pm$0.56 & 90.14$\pm$0.66 & 87.07 \\
BADGE & 69.86$\pm$3.19 & 87.06$\pm$0.85 & 90.61$\pm$0.70 & 91.09$\pm$0.47 & 84.66 \\
ProbCover & 73.46$\pm$1.37 & 80.80$\pm$2.01 & 83.78$\pm$0.87 & 81.03$\pm$0.92 & 79.77 \\
DCoM & 74.67$\pm$1.98 & 78.99$\pm$1.57 & 84.95$\pm$1.02 & 89.74$\pm$0.51 & 82.09 \\
TypiClust & 80.89$\pm$1.85 & 89.47$\pm$0.46 & 91.52$\pm$0.05 & 90.98$\pm$0.07 & 88.22 \\
USL & 80.66$\pm$1.96 & 89.58$\pm$0.59 & 91.53$\pm$0.21 & 91.33$\pm$0.18 & 88.28 \\
ActiveFT & 78.19$\pm$1.95 & 88.02$\pm$0.58 & 90.40$\pm$0.64 & 90.91$\pm$0.18 & 86.88 \\
\rowours
\textbf{$\eps$-AS} & \textbf{82.08$\pm$1.04} & \textbf{89.97$\pm$0.89} & \textbf{91.62$\pm$0.24} & \textbf{91.97$\pm$0.66} & \textbf{88.91} \\
\midrule
$\Delta$ vs.\ best & \gain{32}{+0.91} & \gain{20}{+0.39} & \gain{20}{+0.09} & \gain{32}{+0.64} & \gain{32}{+0.63} \\
\bottomrule
\end{tabular}
\end{table}

\begin{table}[H]
\small
\centering
\caption{STL-10 main comparison (top-1 \%, mean$\pm$std).}
\label{tab:main_stl10}
\setlength{\tabcolsep}{3pt}
\begin{tabular}{@{}lccccc@{}}
\toprule
\rowhead
Method & $b{=}50$ & $b{=}100$ & $b{=}200$ & $b{=}500$ & Avg \\
\midrule
Random & 73.60$\pm$2.65 & 82.02$\pm$1.51 & 85.77$\pm$2.78 & 93.28$\pm$1.98 & 83.67 \\
Coreset & 76.93$\pm$1.47 & 85.12$\pm$1.82 & 90.31$\pm$4.55 & 84.95$\pm$0.84 & 84.33 \\
FPS & 78.08$\pm$2.12 & 84.91$\pm$2.11 & 91.25$\pm$3.58 & 84.10$\pm$1.72 & 84.59 \\
BADGE & 82.92$\pm$2.07 & 92.32$\pm$1.32 & 96.89$\pm$0.52 & 97.37$\pm$1.56 & 92.38 \\
ProbCover & 90.09$\pm$1.33 & 95.78$\pm$0.81 & 94.83$\pm$2.34 & 89.16$\pm$3.44 & 92.47 \\
DCoM & 84.23$\pm$1.64 & 91.97$\pm$0.80 & 93.59$\pm$1.58 & 95.30$\pm$1.68 & 91.27 \\
TypiClust & 86.53$\pm$1.09 & 94.84$\pm$0.94 & 97.05$\pm$0.72 & 97.18$\pm$1.21 & 93.90 \\
USL & 86.46$\pm$0.92 & 94.84$\pm$0.99 & 97.74$\pm$0.51 & 97.57$\pm$0.43 & 94.15 \\
ActiveFT & 91.10$\pm$1.62 & 94.51$\pm$0.24 & 97.03$\pm$1.06 & 96.80$\pm$0.13 & 94.86 \\
\rowours
\textbf{$\eps$-AS} & \textbf{91.81$\pm$0.29} & \textbf{96.20$\pm$0.76} & 97.48$\pm$0.88 & \textbf{97.96$\pm$0.98} & \textbf{95.86} \\
\midrule
$\Delta$ vs.\ best & \gain{32}{+0.71} & \gain{20}{+0.42} & \drop{20}{$-$0.26} & \gain{20}{+0.39} & \gain{48}{+1.00} \\
\bottomrule
\end{tabular}
\end{table}

\FloatBarrier
\paragraph{Across-seed variability.}
Across the $22$ transfer cells, $\eps$-AS has the lowest average
standard deviation at $0.67$ percentage points, followed by USL at
$0.81$ and ActiveFT at $0.91$. Variability is highest in the
lowest-budget MNIST settings, so small mean differences should be
interpreted together with the reported dispersion.

\subsection{Intrinsic-Dimension Measurements}
\label{app:intrinsic_dim}

The method uses the empirical closest-anchor cost $\widehat m(b)$
directly and does not estimate intrinsic dimension. For diagnosis,
Figure~\ref{fig:dint_app} reports ordinary least-squares fits of
$\log\widehat m(b)$ against $\log b$ using at least eight log-spaced
budgets from $b=20$, with zero-cost endpoints excluded.

\begin{figure}[H]
\centering
\includegraphics[width=\linewidth]{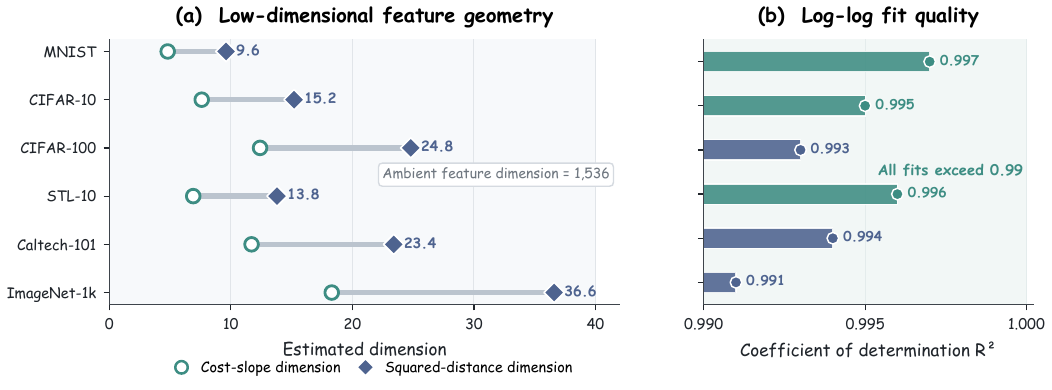}
\caption{Intrinsic-dimension diagnostics from the fitted relation
between $\log\widehat m(b)$ and $\log b$. Panel (a) shows the
cost-slope dimension $d_{\mathrm{cost}}$ and the corresponding
squared-distance dimension
$d_{\mathrm{int}}=2d_{\mathrm{cost}}$. Panel (b) reports the
coefficient of determination for each fit.}
\label{fig:dint_app}
\end{figure}

The fitted cost-slope parameters are between $4.8$ and $18.3$, and
the corresponding squared-distance dimensions are between $9.6$ and
$36.6$. Both are far below the ambient feature dimension $D=1536$.
The fit quality remains high across the full range of observed
dimensions, with $R^2$ between $0.991$ and $0.997$.
The conversion $d_{\mathrm{int}}=2d_{\mathrm{cost}}$ follows from
$C(z,\mu)=\|z-\mu\|_2^2/2$, for which the conventional
$d$-dimensional population quantization law is
$m_P(b)\asymp b^{-2/d}$.

\subsection{Numerical Sinkhorn Schedule}
\label{app:numerics}

We implement Sinkhorn iterations in the log domain with a numerical
floor $\eps_{\min}=10^{-4}$ on the regularizer, an iteration cap
$T_{\mathrm{sk}}=200$, and a tolerance $10^{-6}$ on the maximum
absolute change of the dual potentials between consecutive
iterations. The early-stop criterion fires well before
$T_{\mathrm{sk}}=200$ on every dataset and budget we measured. The
worst-case row-marginal residual at termination is reported in
Figure~\ref{fig:numerics_app}.

\begin{figure}[H]
\centering
\includegraphics[width=\linewidth]{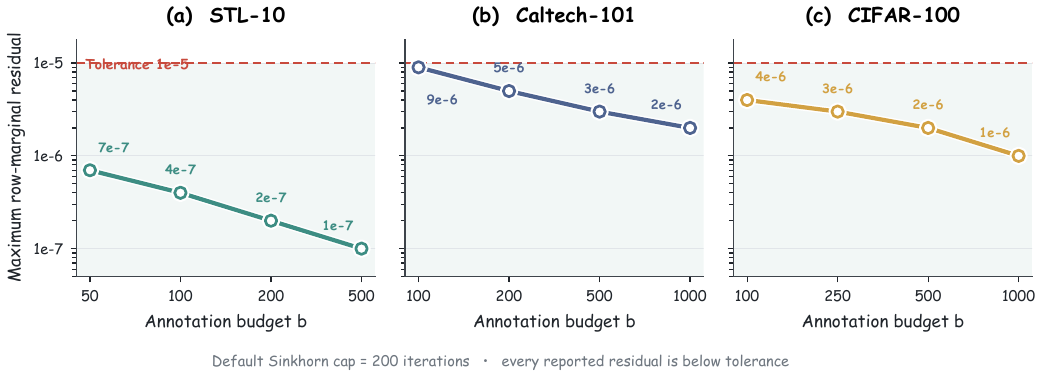}
\caption{Sinkhorn row-marginal residuals under the default cap
$T_{\mathrm{sk}}=200$. The panels use the dataset-specific budget
grids shown on their horizontal axes. Each point reports
$\max_i|\sum_k\pi^\star_{\eps^\star}(i,k)-\mu_U(i)|$ for the corresponding
budget. The dashed line marks a residual of $10^{-5}$.}
\label{fig:numerics_app}
\end{figure}

All reported residuals are below $10^{-5}$ and decrease as the
budget grows on each of the three datasets. The largest value is
$9\!\cdot\!10^{-6}$ on Caltech-101 at $b=100$, while the STL-10
residuals remain below $10^{-6}$ throughout. The exact equivariance
and asymptotic results concern the ideal rule
$\eps=c\,\widehat m(b)$ and the exact EOT optimum. The floor and
stopping tolerance above provide numerical safeguards. Cost-scale
equivariance is preserved by rescaling the floor together with the
cost. In an asymptotic implementation, the corresponding sufficient
condition is $\eps_{\min}=o(m_P(b))$.

\subsection{Runtime and Per-Cell Wall-Clock Breakdown}
\label{app:complexity}

The dominant selection cost of $\eps$-AS is the $k$-means step,
which is shared with TypiClust, BADGE-cold, and USL. The measured
Sinkhorn stage takes roughly one tenth as long as $k$-means in each
of the two breakdowns below.
Figure~\ref{fig:wallclock_breakdown_app} reports the per-cell
wall-clock breakdown on Caltech-101 (at $b{=}1000$) and on ImageNet-1k
(at $b{=}2000$) on a single NVIDIA H200 GPU.

\begin{figure}[H]
\centering
\includegraphics[width=\linewidth]{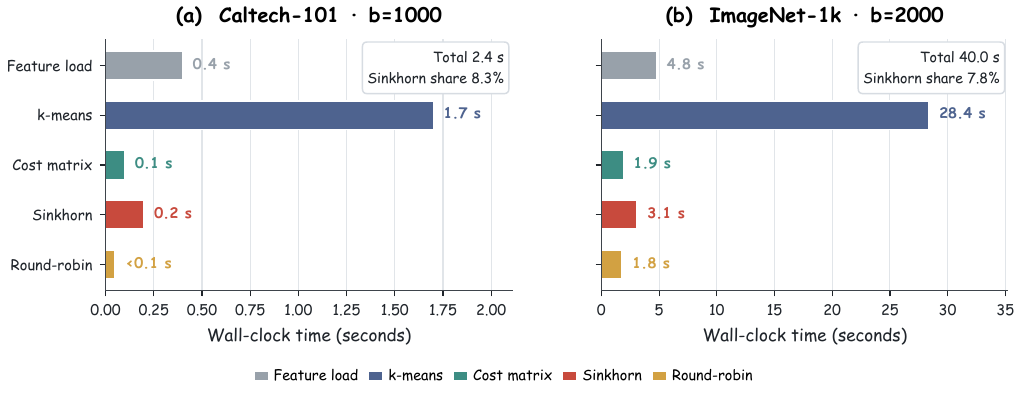}
\caption{Per-cell wall-clock breakdown of $\eps$-AS on Caltech-101
(at $b{=}1000$) and ImageNet-1k (at $b{=}2000$).
The $k$-means step includes $k$-means$++$ initialization followed
by twenty-five Lloyd iterations. The Sinkhorn step uses the
log-domain schedule of Appendix~\ref{app:numerics}. The total is
end-to-end and includes I/O of the cached features.}
\label{fig:wallclock_breakdown_app}
\end{figure}

The Sinkhorn iteration accounts for $8.3\%$ and $7.8\%$ of total
wall-clock in the two reported cells, respectively. The $k$-means
step is the largest component, accounting for $1.7$ seconds of the
$2.4$-second total on Caltech-101 and $28.4$ seconds of the
$40.0$-second total on ImageNet-1k. In comparison, Sinkhorn takes
only $0.2$ and $3.1$ seconds. The dominant stage is therefore shared
with several clustering-based baselines.

\begin{figure}[H]
\centering
\includegraphics[width=\linewidth]{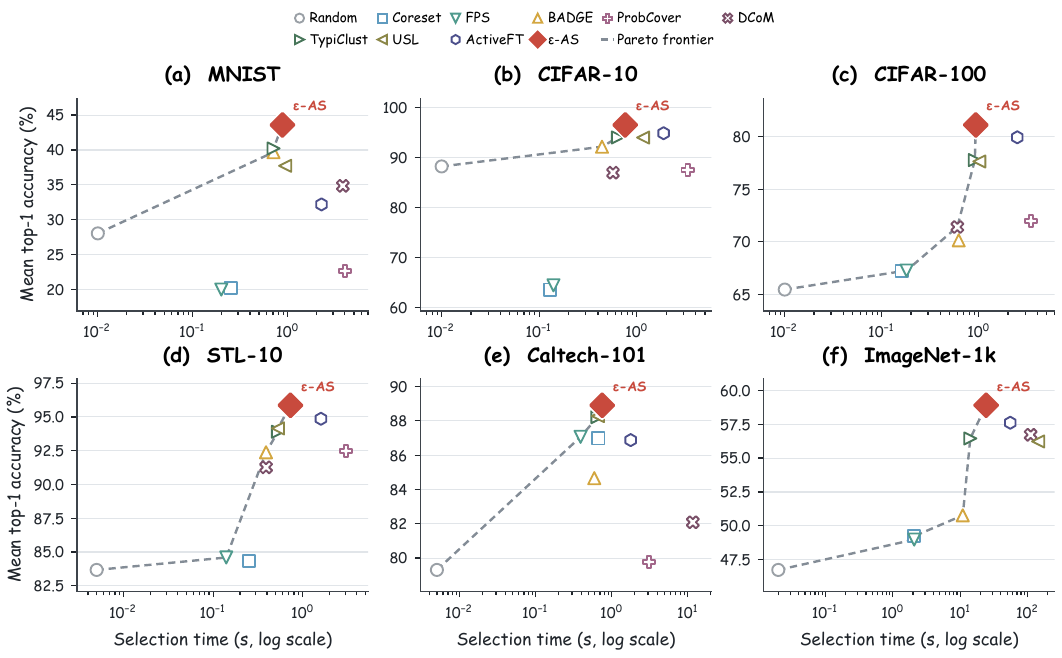}
\caption{Accuracy and selection-time trade-offs averaged over the
cold-start budgets and three seeds. The horizontal axis uses a
logarithmic scale, and the dashed curve traces the empirical Pareto
frontier. Selection time excludes the shared feature-extraction pass.
ProbCover is omitted from the ImageNet-1k panel because its dense
implementation is infeasible at that scale.}
\label{fig:time_all}
\end{figure}

Figure~\ref{fig:time_all} complements the two per-cell breakdowns
with selection-only time averaged across budgets. $\eps$-AS remains
below one second on the five smaller datasets and is faster than
ActiveFT on all six datasets. On ImageNet-1k, it raises mean accuracy
from $57.61\%$ to $58.90\%$ while reducing selection time from
$55.41$ to $24.28$ seconds, a $2.3\times$ speedup. It lies on the
accuracy-time Pareto frontier in every panel.

\subsection{Extended Ablations}
\label{app:ext_ablation}

The main paper reports the full method and five component ablations
on CIFAR-100 and Caltech-101. Figure~\ref{fig:abl_extended_app}
extends the same controlled comparison to STL-10 and CIFAR-10 using
a common protocol for every variant.

\begin{figure}[H]
\centering
\includegraphics[width=\linewidth]{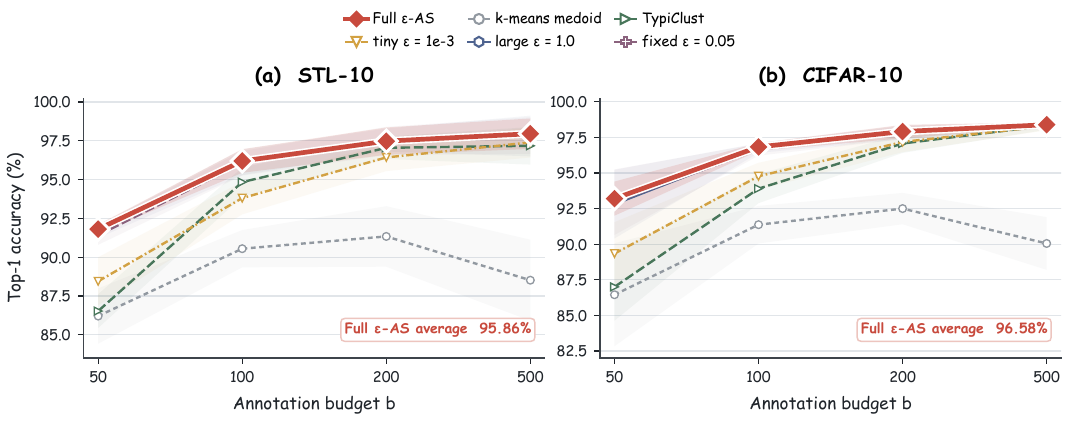}
\caption{Component ablations on STL-10 and CIFAR-10. Lines report
mean top-1 accuracy across three seeds, and shaded bands show one
standard deviation.}
\label{fig:abl_extended_app}
\end{figure}

Full $\eps$-AS has the highest budget-averaged accuracy on both
datasets, with $95.86\%$ on STL-10 and $96.58\%$ on CIFAR-10. It
leads three of four STL-10 budgets and all four CIFAR-10 budgets. At
$b=200$ on STL-10, the large fixed temperature exceeds the full
method by $0.07$ percentage points but does not improve the
dataset-level average. The medoid variant degrades at the largest
budget on both datasets, while TypiClust and the fixed-temperature
variants narrow the gap at larger budgets. The pattern supports the
cross-dataset stability of the normalized adaptive scale.

\FloatBarrier

\FloatBarrier
\twocolumn
\phantomsection
\addcontentsline{toc}{section}{References}
\bibliography{OT-CSAL}

\end{document}